\newtheorem{lemma}{Lemma}
\definecolor{darkred}{rgb}{1, 0.1, 0.3}
\definecolor{darkblue}{rgb}{0.1, 0.1, 1}
\definecolor{darkgreen}{rgb}{0,0.6,0.5}
\newcommand {\mm}[1] {\ifmmode{#1}\else{\mbox{\(#1\)}}\fi}
\def\eqref#1{equation~\ref{#1}}
\def\1{\bm{1}}
\DeclareMathAlphabet{\mathsfit}{\encodingdefault}{\sfdefault}{m}{sl}
\SetMathAlphabet{\mathsfit}{bold}{\encodingdefault}{\sfdefault}{bx}{n}
\newtheorem{definition}{Definition}
\newtheorem{proposition}{Proposition}
\newtheorem{assumption}{Assumption}
\begin{document}

\title{\textbf{Empirical or Invariant Risk Minimization?\\ A Sample Complexity Perspective}}
 
\author{
Kartik Ahuja\footnote{Currently at Mila - Quebec AI Institute. Work done while at IBM Research.} \and Jun Wang\footnote{Department of Computer Science,
Rensselaer Polytechnic Institute, New York.} \and Amit Dhurandhar\footnote{IBM Research, Thomas J. Watson Research Center, Yorktown Heights, New York.} \and  Karthikeyan Shanmugam \footnote{Google Research, India.} \and Kush R. Varshney $^{\ddagger}$
}

\date{}

%\begin{document}
\maketitle

\begin{abstract}
Recently, invariant risk minimization (IRM) was proposed as a promising solution to address out-of-distribution (OOD) generalization. However, it is unclear when IRM should be preferred over the widely-employed empirical risk minimization (ERM) framework. In this work, we analyze both these frameworks from the perspective of sample complexity, thus taking a firm step towards answering this important question. We find that depending on the type of data generation mechanism, the two approaches might have very different finite sample and asymptotic behavior. For example, in the covariate shift setting we see that the two approaches not only arrive at the same asymptotic solution, but also have similar finite sample behavior with no clear winner. For other distribution shifts such as those involving confounders or anti-causal variables, however, the two approaches arrive at different asymptotic solutions where IRM is guaranteed to be \emph{close} to the desired OOD solutions in the finite sample regime for polynomial generative models, while ERM is biased even asymptotically.  We further investigate how different factors --- the number of environments, complexity of the model, and IRM penalty weight ---  impact the sample complexity of IRM in relation to its distance from the OOD solutions. 
\end{abstract}

% \newpage

\section{Introduction}

A recent study shows that models trained to detect COVID-19 from chest radiographs rely on spurious factors such as the source of the data rather than the lung pathology \cite{degrave2020ai}. This is just one of many alarming examples of spurious correlations failing to hold outside a  specific training distribution.  In one commonly cited example, \cite{beery2018recognition} trained a convolutional neural network (CNN) to classify camels from cows. In the training data,  most pictures of the cows had green pastures, while most pictures of camels were in the desert. 
The CNN picked up the spurious correlation and associated green pastures with cows thus failing to classify cows on beaches.

Recently, \cite{arjovsky2019invariant} proposed a framework called invariant risk minimization (IRM) to address the problem of models inheriting spurious correlations. They showed that when data is gathered from multiple environments, one can learn to exploit invariant causal relationships, rather than relying on varying spurious relationships, thus learning robust predictors. More recent work suggests that empirical risk minimization (ERM) is still state-of-the-art on many problems requiring OOD generalization \cite{gulrajani2020search}. This gives rise to a fundamental question: when is IRM better than ERM (and vice versa)? In this work, we seek to answer this question through a systematic comparison of the sample complexity of the two approaches under different types of train and test distributional mismatches. 
% We compare the two approaches in terms of their sample complexity required to achieve a desired level of OOD generalization behavior.

The distribution shifts $\left(\mathbb{P}^{\mathsf{train}}(X,Y) \not= \mathbb{P}^{\mathsf{test}}(X,Y) \right)$ that we consider informally stated satisfy an \textit{invariance condition} -- there exists a representation $\Phi^{*}$ of the covariates such that  $\mathbb{P}^{\mathsf{train}}(Y|\Phi^{*}(X)) = \mathbb{P}^{\mathsf{test}}(Y|\Phi^{*}(X))=\mathbb{P}(Y|\Phi^{*}(X))$. A special case of this occurs when $\Phi^{*}$ is identity -- $\mathbb{P}^{\mathsf{train}}(X) \not= \mathbb{P}^{\mathsf{test}}(X)$ but  $\mathbb{P}^{\mathsf{train}}(Y|X) = \mathbb{P}^{\mathsf{test}}(Y|X)$ --  such a shift is known as a \textit{covariate-shift} \cite{gretton2009covariate}. In many other settings $\Phi^{*}$ may not be identity (denoted as $\mathsf{I}$),  examples include settings with confounders or anti-causal variables \cite{pearl2009causality} where covariates appear spuriously correlated with the label and $\mathbb{P}^{\mathsf{train}}(Y|X) \not= \mathbb{P}^{\mathsf{test}}(Y|X)$.   We use causal Bayesian networks to illustrate these shifts  in Figure \ref{fig1:dist_shift}. Suppose $X^e=[X_1^e,X_2^e]$ represents the image, where $X_1^e$ is the shape of the animal and $X_2^e$ is the background color, $Y^e$ is the label of the animal, and $e$ is the index of the environment/domain. In Figure \ref{fig1:dist_shift}a) $X_2^e$ is independent of $(Y^e, X_1^e)$, it represents the covariate shift case ($\Phi^{*} = \mathsf{I}$). In Figure \ref{fig1:dist_shift}b) $X_2^e$ is spuriously correlated with $Y^e$ through the confounder $\varepsilon^e$. In Figure \ref{fig1:dist_shift}c) $X_2^e$ is spuriously correlated with $Y^e$ as it is anti-causally related to $Y^e$. In both Figure \ref{fig1:dist_shift}b) and c) $\Phi^{*} \not= \mathsf{I}$; $\Phi^{*}$ is a block diagonal matrix that selects $X_1^{e}$.
% In the language of causality \cite{pearl2009causality}, if $X$ form the causal parents of $Y$, then this property $\mathbb{P}^{\mathsf{train}}(Y|X) = \mathbb{P}^{\mathsf{test}}(Y|X)$ always holds provided we do not intervene on $Y$. 

Our setup assumes we are given data from multiple training environments satisfying the invariance condition, i.e., $\mathbb{P}(Y|\Phi^{*}(X))$ is the same across all of them. Ideally, we want to learn and predict using  $\mathbb{E}[Y|\Phi^{*}(X)]$; this predictor has a desirable OOD behavior as we show later where we prove min-max optimality with respect to (w.r.t.) unseen test distributions satisfying the invariance condition. Our goal is to analyze and compare  ERM and IRM's ability to learn $\mathbb{E}[Y|\Phi^{*}(X)]$ from finite training data acquired from a fixed number of training environments. Our  analysis has two parts.

\begin{figure}
\begin{center}
% \centering
%\begin{minipage}{.5\textwidth} % \centering
%   \includegraphics[trim=1in 0.5in 1in 0.8in, width=2.2in]{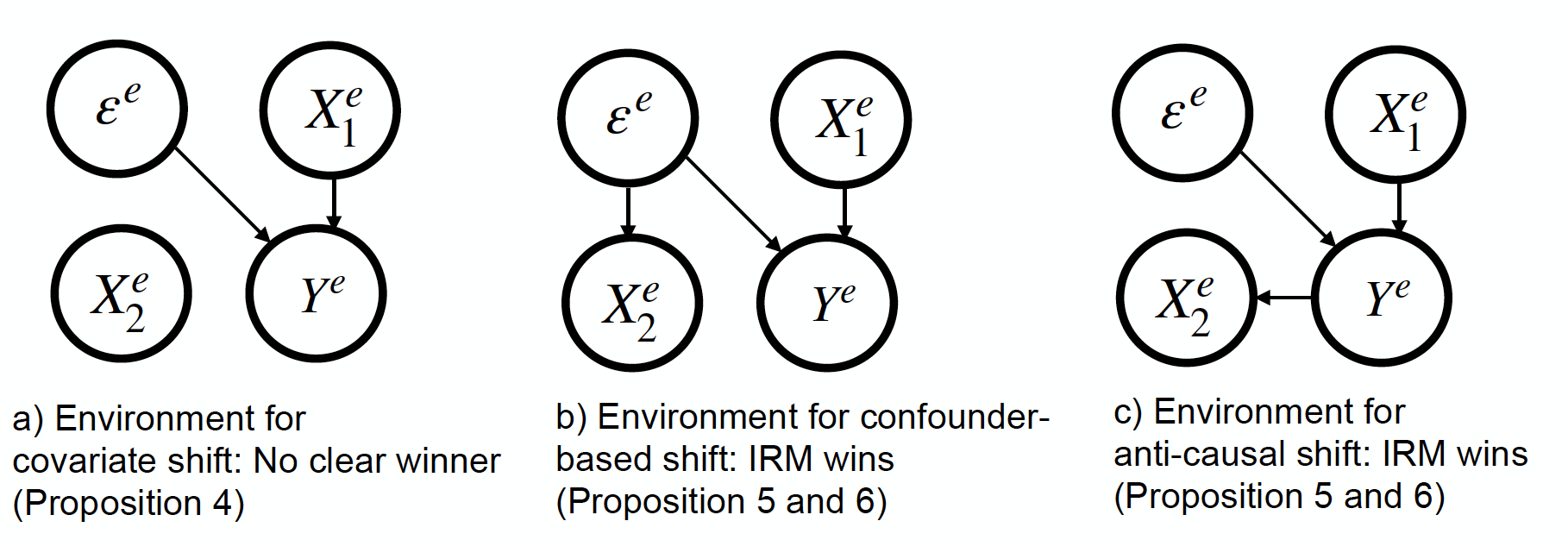}
\includegraphics[width=4in]{Gmodels-shifts.png}
%   \vspace{-0.7em}
  \caption{\small{Causal Bayesian networks for different distribution shifts.}}
   \label{fig1:dist_shift}
\end{center}
\end{figure}
%\begin{itemize}
    \textbf{1) Covariate shift case ($\Phi^{*} = \mathsf{I}$):}  ERM and IRM achieve the same asymptotic solution  $\mathbb{E}[Y|X]$. We prove (Proposition \ref{prop4: emr_irm_real}) that the sample complexity for both the methods is similar thus there is no clear winner between the two in the finite sample regime.  For the setup in Figure \ref{fig1:dist_shift}a), both ERM and IRM learn a model that only uses $X_1^e$.
    
    \textbf{2) Confounder/Anti-causal variable case ($\Phi^{*} \not= \mathsf{I}$):} We consider a family of structural equation models (linear and polynomial) that may contain confounders and/or anti-causal variables. For the class of models we consider, the asymptotic solution of ERM is biased and not equal to the desired $\mathbb{E}[Y|\Phi^{*}(X)]$. We prove that IRM can learn a solution that is within $\mathcal{O}(\sqrt{\epsilon})$ distance from $\mathbb{E}[Y|\Phi^{*}(X)]$ with a sample complexity that increases as $\mathcal{O}(\frac{1}{\epsilon^2})$ and increases polynomially in the complexity of the model class (Proposition \ref{prop5: irm_cfd_chd}, \ref{prop6:pol_inf});  $\epsilon$ (defined later) is the slack in IRM constraints.    For the setup in Figure \ref{fig1:dist_shift}b) and c), IRM gets close to only using $X_1^e$, while ERM even with infinite data (Proposition 17 in the supplement) continues to use $X_2^e$.  We summarize the results in Table \ref{table1}.
%\end{itemize}

\cite{arjovsky2019invariant} proposed the colored MNIST (CMNIST) dataset; comparisons on it showed how ERM-based models exploit spurious factors (background color). The CMNIST dataset relied on anti-causal variables. Many supervised learning datasets may not contain anti-causal variables (e.g. human labeled images). Therefore, we propose and analyze three new variants of CMNIST in addition to the original one that map to different real-world settings: i) covariate shift based CMNIST (CS-CMNIST): relies on selection bias to induce spurious correlations, ii) confounded CMNIST (CF-CMNIST): relies on confounders to induce spurious correlations, iii) anti-causal CMNIST (AC-CMNIST): this is the original CMNIST proposed by \cite{arjovsky2019invariant}, and iv) anti-causal and confounded (hybrid) CMNIST (HB-CMNIST): relies on confounders and anti-causal variables to induce spurious correlations. On the latter three datasets, which belong to the $\Phi^{*} \not= \mathsf{I}$ class described above, IRM has a much better OOD behavior than ERM, which performs poorly regardless of the data size. However,  IRM and ERM have a similar performance on CS-CMNIST  with no clear winner. These results are consistent with our theory and are also validated in regression experiments. 

\begin{table}[t]
\caption{Summary of (empirical) IRM vs.\ ERM for finite hypothesis class $\mathcal{H}_{\Phi}$.  $\epsilon$: slack in IRM constraints, $\nu$: approximation w.r.t optimal risk, $\delta$: failure probability, $\mathcal{E}_{tr}$: set of training environments, $n$: data dimension, $p$: degree of the generative polynomial, $L,L'$: bound on loss \& its gradients.}
\label{table1}
\begin{center}
\begin{tabular}{||c|c|c|c||}
\hline 
  \multicolumn{1}{||c}{\bf Assumptions}&\multicolumn{1}{|c}{\bf Method} &\multicolumn{1}{|c|}{\bf Sample complexity} &\multicolumn{1}{c|}{\bf OOD} 
\\ \hline \hline

         \multirow{4}{16em}{\emph{Covariate shift case:} $\mathbb{E}^{e}[Y^{e}|X^{e}]$ is invariant
         (Proposition \ref{prop4: emr_irm_real})} & ERM & $\frac{8L^2}{\nu^2} \log\big(\frac{2|\mathcal{H}_{\Phi}|}{\delta}\big)$  & Yes    \\
 && &   \\
          &  IRM   & $\max\big\{ \frac{8L^2}{\nu^2} 
 \log\big(\frac{4|\mathcal{H}_{\Phi}|}{\delta}\big), \frac{16L'^4}{\epsilon^2}\log\big(\frac{2}{\delta}\big)\big\} $&  Yes \\
  && &   \\
 \hline
  \multirow{4}{16em}{\emph{Confounder/Anti-causal variable case:} $\mathbb{E}^{e}[Y^{e}|\Phi^{*}(X^{e})]$ is invariant, \;\; 
 Linear, Polynomial models,
 $|\mathcal{E}_{tr}|= \mathcal{O}(n^{p})$
 (Proposition \ref{prop5: irm_cfd_chd}, \ref{prop6:pol_inf},17)} & ERM  &  $\frac{8L^2}{\nu^2} \log\big(\frac{2|\mathcal{H}_{\Phi}|}{\delta}\big)$ & No  \\
 &  & &  \\ 
        &   IRM &  $\frac{16L'^{4}}{\epsilon^2}\log\big(\frac{2|\mathcal{H}_{\Phi}|}{\delta}\big)$ & Yes  \\
         && &   \\
 \hline
\end{tabular}
\end{center}
\end{table}

\section{Related Works}

\textbf{IRM based works.} Following the original work  IRM from \cite{arjovsky2019invariant}, there have been several interesting works. \cite{teney2020unshuffling, krueger2020out, ahuja2020invariant, chang2020invariant} --- is an incomplete representative list that build new methods inpired from IRM to address the OOD generalization problem.  \cite{arjovsky2019invariant}  prove OOD guarantees for linear models with access to infinite data from finite environments.  We generalize these results in several ways. We provide a first finite sample analysis of IRM. We characterize the impact of hypothesis class complexity, number of environments, weight of IRM penalty on the sample complexity and its distance from the OOD solution for linear and polynomial models.

\textbf{Theory of domain generalization and domain adaption.} Following the seminal works \cite{ben2007analysis,ben2010theory}, there have been many interesting works over the years ---  \cite{muandet2013domain, ajakan2014domain,zhao2019learning, albuquerque2019adversarial} \linebreak \cite{li2017deeper, piratla2020efficient, matsuura2020domain, deng2020representation, david2010impossibility, pagnoni2018pac} is an incomplete representative list (see \cite{redko2019advances} for further references) --- that build the theory of domain adaptation and  generalization and construct new methods based on it. While many of these works develop bounds on loss over the target domain using train data and unlabeled target data, some \cite{ben2012hardness,david2010impossibility, pagnoni2018pac} analyze the finite sample (PAC) guarantees for domain adaptation under covariate shifts. These works  \cite{ben2012hardness,david2010impossibility, pagnoni2018pac} access unlabeled data from a target domain, which we do not. Instead, we have  data from multiple training domains (as in domain generalization). In these works, the guarantees are w.r.t.\ a specific target domain, while we provide (for linear and polynomial models)  worst-case guarantees w.r.t.\ all the unseen domains  satisfying the invariance condition. Also, we consider a larger family of distribution shifts including covariate shifts. The above two categories are not exhaustive -- e.g., there are some recent works that characterize how some inductive biases favor extrapolation \cite{xu2021how} and can be better for OOD generalization.

\section{Sample Complexity of Invariant Risk Minimization}

\subsection{Invariant Risk Minimization} 

\label{secn: IRM} We start with some background on IRM \cite{arjovsky2019invariant}.
 Consider a dataset $D = \{D^{e}\}_{e\in \mathcal{E}_{tr}}$, which is a collection of datasets $D^{e}=\{(x_i^e, y_i^e,e)\}_{i=1}^{n_e}\}$ obtained from a set of training environments $\mathcal{E}_{tr}$, where $e$ is the index of the environment, $i$ is the index of the data point in the environment, $n_e$ is the number of points from environment, $x_i^{e} \in \mathcal{X}\subseteq \mathbb{R}^{n}$ is the feature value  and $y_i^{e}\in \mathcal{Y}\subseteq \mathbb{R}$ is the corresponding label. Define a  probability distribution $\{\pi^{e}\}_{e\in \mathcal{E}_{tr}}$, $\pi^{e}$ is the probability that a training data point is from environment $e$. Define a probability distribution of points conditional on environment $e$ as $\mathbb{P}^{e}$, $(X^{e},Y^{e}) \sim \mathbb{P}^{e}$. Define the joint distribution $\bar{\mathbb{P}}$, $(X^{e},Y^{e},e) \sim \bar{\mathbb{P}}, d\bar{\mathbb{P}}(X^{e},Y^{e},e) = \pi^{e} d\mathbb{P}^e(X^{e}, Y^{e})$. $D$ is a collection of i.i.d. samples from $\bar{\mathbb{P}}$. Define a predictor $f:\mathcal{X} \rightarrow \mathbb{R}$ and  the space $\mathcal{F}$  of all the possible maps from $\mathcal{X} \rightarrow \mathbb{R}$.  Define the risk achieved by $f$ in environment $e$ as $R^e(f) = \mathbb{E}^{e}\big[\ell\big(f(X^e), Y^e\big)\big]$, where $\ell$ is the loss, $f(X^e)$ is the predicted value, $Y^e$ is the corresponding label and $\mathbb{E}^{e}$ is the expectation conditional on environment $e$. The overall expected risk  across  the training environments is  $R(f) = \sum_{e\in \mathcal{E}_{tr}}\pi^{e}R^{e}(f)$. We are interested in two settings: regression (square loss) and binary-classification (cross-entropy loss). In the main body, our focus is regression (square loss) and we mention wherever the results extend to binary-classification (cross-entropy). We discuss these extensions in the supplement.

\textbf{OOD generalization problem.} We want to construct a predictor $f$ that performs well across many unseen environments $\mathcal{E}_{all}$, where $ \mathcal{E}_{all} \supseteq \mathcal{E}_{tr}$.  For $o\in \mathcal{E}_{all} \backslash \mathcal{E}_{tr}$, the distribution $\mathbb{P}^{o}$ can be very different from the train environments.  Next we  state the OOD problem. 
\begin{equation}
    \min_{f\in \mathcal{F}} \max_{e \in \mathcal{E}_{all}} R^{e}(f)
    \label{eqn1: ood}
\end{equation}

 The above problem is very challenging to solve since we only have access to data from training environments $\mathcal{E}_{tr}$ but are required to find the robust solution over all environments $\mathcal{E}_{all}$. Next, we make assumptions on $\mathcal{E}_{all}$ and characterize the optimal solution to \eqref{eqn1: ood}.

% The objective is to show how the above problem cannot be solved by existing approaches and why IRM is needed.

\begin{assumption} \textbf{Invariance condition.}
\label{assm1:ood_cond_envmt}
There exists a representation $\Phi^{*}$ that transforms  $X^{e}$ to $Z^{e}=\Phi^{*}(X^{e})$ and $\forall e,o \in \mathcal{E}_{all}, \forall z \in \Phi^{*}(\mathcal{X})$ satisfies $ \mathbb{E}^{e}[Y^{e}|Z^{e}=z]=\mathbb{E}^{o}[Y^{o}|Z^{o}=z]$.
Also, $\forall e\in \mathcal{E}_{all}, \forall z\in \Phi^{*}(\mathcal{X})$, $\mathsf{Var}^{e}[Y^{e}|Z^{e}=z]=  \xi^2$, where $\mathsf{Var}^{e}$ is the conditional variance.
\end{assumption}

 The above assumption is inspired from causality \cite{pearl2009causality}.  $\Phi^{*}$ acts as the causal feature extractor and from the definition of causal features, it follows that  $\mathbb{E}^{e}[Y^{e}|Z^{e}=z]$ does not vary across environments. When a human labels a cow she uses $\Phi^{*}$ to extract causal features from the pixels to identify cow while ignoring the background.
The first part of the above assumption encompasses a large class of distribution shifts including standard covariate shifts \cite{gretton2009covariate}. Covariate shift assumes $\forall e,o\in\mathcal{E}_{all},\forall x \in \mathcal{X}$, $\mathbb{P}(Y^{e}|X^{e}=x)$ and $ \mathbb{P}(Y^{o}|X^{o}=x)$ are equal thus implying $\mathbb{E}^{e}[Y^{e}|X^{e}=x] = \mathbb{E}^{o}[Y^{o}|X^{o}=x]$. Therefore, for covariate shifts, $\Phi^{*}$ is identity in Assumption \ref{assm1:ood_cond_envmt}. A simple instance illustrating  Assumption \ref{assm1:ood_cond_envmt} with $\Phi^{*}=\mathsf{I}$ is when $Y^{e}\leftarrow g(X^{e})+ \varepsilon^{e}$, where $\mathbb{E}^{e}[\varepsilon^{e}]=0, \mathbb{E}^{e}[(\varepsilon^{e})^2]=\sigma^2$, $\varepsilon^{e} \perp X^{e}$.   Using Assumption \ref{assm1:ood_cond_envmt}, we define the invariant map $m:\Phi^{*}(\mathcal{X})\rightarrow \mathbb{R}$ as follows 
\begin{equation} 
\forall z \in \Phi^{*}(\mathcal{X}), \; m(z)=  \mathbb{E}^{e}[Y^{e}|Z^{e}=z],\; \text{where}\; Z^{e}=\Phi^{*}(X^{e})
\label{m_inv_model}
\end{equation}

\begin{assumption}
\label{assm2:ood_cond_envmt}
\textbf{Existence of an environment where the invariant representation is sufficient.}  $\exists$ an environment $e \in \mathcal{E}_{all}$ such that $Y^{e} \perp X^{e} | Z^{e}$ 
\end{assumption}

Assumption \ref{assm2:ood_cond_envmt} states there exists an environment where the information that $X^{e}$ has about $Y^{e}$  is also contained in $Z^{e}$. Define a composition $m\circ \Phi^{*}$, $\forall x\in \mathcal{X}$,  $m\circ \Phi^{*}(x) =\mathbb{E}^{e}[Y^{e}|Z^{e}=\Phi^{*}(x)]$.

\begin{proposition}
\label{prop1: ood}
 If $\ell$ is the square loss, and Assumptions \ref{assm1:ood_cond_envmt} and \ref{assm2:ood_cond_envmt} hold, then  $m\circ \Phi^{*}$  solves the OOD problem (\eqref{eqn1: ood}).
\end{proposition}
The proofs of all the propositions are in the supplement. A similar result holds for the cross-entropy loss (discussion in supplement). For the rest of the paper, we focus on learning $m\circ \Phi^{*}$ as it solves the OOD problem. For covariate shifts $\Phi^{*}=\mathsf{I}$, $m(x)=\mathbb{E}^{e}[Y^{e}|X^{e}=x]$ is the OOD solution. In \cite{arjovsky2019invariant}, a proof connecting $m\circ \Phi^{*}$ and OOD was not stated. Recently, in \cite{koyama2020out}, a result similar to Proposition \ref{prop1: ood} was shown but with a few differences.  The authors assume  conditional probabilities are invariant  unlike our assumption that only requires conditional expectations and variances to be invariant. However, their result applies to more losses. 
 $m\circ \Phi^{*}$ is the target we want to learn. \cite{arjovsky2019invariant} proposed IRM since standard min-max optimization over the training environments $\mathcal{E}_{tr}$  and  ERM fail to learn $m\circ \Phi^{*}$ in many cases. The authors in \cite{arjovsky2019invariant} identify a crucial property of $m\circ \Phi^{*}$ and use it to define an object called invariant predictor that we define next.
% If $\exists $ an environment $e$ in $\mathcal{E}_{tr}$ satisfying the Assumption 2, then solving a min-max optimization over the training environments $    \min_{f \in \mathcal{H}_{w\circ \Phi}} \max_{e \in \mathcal{E}_{tr}} R^{e}(f)$ is sufficient to solve \eqref{eqn1: ood}. However, in general we cannot assume to have access to data from such an environment. 

\textbf{Invariant predictor and IRM optimization.}  Define a representation map $\Phi:\mathcal{X}\rightarrow \mathcal{Z}$ from feature space to representation space $\mathcal{Z} \subseteq \mathbb{R}^{q}$.  Define a classifier map, $w:\mathcal{Z}\rightarrow \mathbb{R}$ from representation space to real values. Define $\mathcal{H}_{\Phi}$ and $\mathcal{H}_{w}$ as the  spaces of representations and classifiers respectively. A data representation $\Phi$ elicits an invariant predictor $w\circ \Phi$ across environments $e \in \mathcal{E}_{tr}$ if there is a classifier $w$  that achieves the minimum risk simultaneously for all the environments, i.e.,
$\forall e \in \mathcal{E}_{tr},w \in \arg\min_{\bar{w} \in \mathcal{H}_{w}} R^{e}(\bar{w}\circ \Phi)$.   Observe that if we we transform the data with representation $\Phi^{*}$ then $m$ will achieve the minimum risk simultaneously in all the environments. If $\Phi^{*}\in \mathcal{H}_{\Phi}$ and $m\in \mathcal{H}_{w}$, then $m\circ \Phi^{*}$ is an invariant predictor.  IRM selects the invariant predictor with least sum risk across environments (results presented later can be adapted if invariant predictor was selected based on the worst-case risk over the environments as well)  as follows:
\begin{equation}
    \begin{split}
        & \min_{\Phi \in \mathcal{H}_{\Phi},w  \in \mathcal{H}_{w}}R(w\circ \Phi) = \sum_{e \in \mathcal{E}_{tr}}\pi^{e} R^{e}(w\circ \Phi) \\ 
        & \text{s.t.}\;w \in \arg\min_{\bar{w} \in \mathcal{H}_{w}} R^{e}(\bar{w}\circ \Phi),\;\forall e \in \mathcal{E}_{tr}
    \end{split}
    \label{eqn: IRM}
\end{equation}

 From the above discussion we know $m\circ \Phi^{*}$ is a feasible solution to \eqref{eqn: IRM}. It is also the ideal solution we want IRM to find since it solves \eqref{eqn1: ood}. Later in Propositions \ref{prop4: emr_irm_real}, \ref{prop5: irm_cfd_chd}, and \ref{prop6:pol_inf}, we show that IRM actually solves equation \eqref{eqn1: ood}. For the setups in Proposition   \ref{prop5: irm_cfd_chd}, and \ref{prop6:pol_inf}, conventional ERM based approaches fail thus justifying the need for above formulation.

\subsection{Sample Complexity of Gradient Constraint Formulation of IRM} 
In \cite{arjovsky2019invariant}, a gradient constrained alternate (derived below in \eqref{eqn: IRM_grad_cons}) to \eqref{eqn: IRM} was proposed, which focuses on linear and scalar classifiers ($\mathcal{Z}=\mathbb{R}$, $\Phi:\mathcal{X}\rightarrow \mathbb{R}$, $\mathcal{H}_w =\mathbb{R}$). In this case, the composite predictor $w \circ \Phi$ is a multiplication of $w$ and $\Phi$ written as $w\cdot \Phi$. (For binary-classification predictor's output $w\cdot\Phi(x)$ represents logits.) From the definition of invariant predictors and  $\mathcal{H}_w =\mathbb{R}$ it follows that if $\forall \bar{w}\in \mathbb{R}, \; R^{e}(1\cdot \Phi)\leq R^{e}(\bar{w}\cdot \Phi)$, then $\Phi$ is an invariant predictor. For square and cross-entropy losses, $R^{e}(w\cdot \Phi)$ is convex in $w$.   Therefore, a gradient constraint $\|\nabla_{w| w=1.0} R^{e}(w \cdot \Phi) \| =0$ is equivalent to the condition that $\forall \bar{w}\in \mathbb{R}, \; R^{e}(1\cdot \Phi)\leq R^{e}(\bar{w}\cdot \Phi)$, which implies $\Phi$ is an invariant predictor. Recall that IRM aims is to search among invariant predictors and find one that minimizes the risk. We state this as a gradient constrained optimization as follows

\begin{equation}
    \begin{split}
        & \min_{\Phi \in \mathcal{H}_{\Phi}}R(\Phi)\\
        & \text{s.t.}\;\big\|\nabla_{w| w=1.0} R^{e}(w \cdot \Phi) \big\| = 0,\;\forall e \in \mathcal{E}_{tr}
    \end{split}
    \label{eqn: IRM_grad_cons}
\end{equation}

 We propose an  $\epsilon$ approximation of the above with $\epsilon$  slack in the constraint. Define $R^{'}(\Phi) = \sum_{e\in \mathcal{E}_{tr}} \pi^{e} \|\nabla_{w|w=1.0} R^{e}(w\cdot \Phi)\|^2$ and a set  $\mathcal{S}^{\mathsf{IV}}(\epsilon) =\{\Phi\;|\;R^{'}(\Phi) \leq \epsilon, \Phi\in \mathcal{H}_{\Phi}\}$. Note that $R^{'}$ is very similar to the penalty defined in \cite{arjovsky2019invariant}. The $\epsilon$ approximation of \eqref{eqn: IRM_grad_cons} is

\begin{equation}
 \min_{\Phi \in \mathcal{S}^{\mathsf{IV}}(\epsilon)} R(\Phi) 
    \label{eqn: IRM_cons_swap}
\end{equation}

% In the above formulation $\pi^{e}>0$ and $\sum_{e}\pi^{e} = 1$. 
If $\epsilon=0$, then  \eqref{eqn: IRM_grad_cons} and \eqref{eqn: IRM_cons_swap} are equivalent.  In all the optimizations so far, the expectations are computed w.r.t.\ the distributions $\mathbb{P}^{e}$, which are unknown. Therefore, we develop an empirical version of  \eqref{eqn: IRM_cons_swap} below (in \eqref{eqn: EIRM_cons_swap}) and call it empirical IRM (EIRM). We replace $R$ and $R^{'}$ with empirical estimators $\hat{R}$ and $\hat{R}^{'}$ respectively.  For $R$ we use a simple plugin estimator (sample mean of loss across all the samples in $D$). For $R^{'}$ we construct a new estimator that enables the use of standard concentration inequalities. Define a set  $\hat{\mathcal{S}}^{\mathsf{IV}}(\epsilon) =\{\Phi\;|\;\hat{R}^{'}(\Phi) \leq \epsilon, \Phi\in \mathcal{H}_{\Phi}\}$. 

\begin{equation}
  \min_{\Phi \in \hat{\mathcal{S}}^{\mathsf{IV}}(\epsilon)} \hat{R}(\Phi) 
    \label{eqn: EIRM_cons_swap}
\end{equation} 

If we replaced  $\hat{\mathcal{S}}^{\mathsf{IV}}(\epsilon)$ with $\mathcal{H}_{\Phi}$ in \eqref{eqn: EIRM_cons_swap}, then we get the standard ERM. ERM aims to solve  $\min_{\Phi \in \mathcal{H}_{\Phi}} \hat{R}(\Phi)$. The sample complexity analysis of ERM aims to understand the distance between the empirical solutions and the expected solutions as a function of the number of samples. Similarly, we seek to understand the relationship between solutions of \eqref{eqn: EIRM_cons_swap} and \eqref{eqn: IRM_cons_swap}.

\begin{assumption}
\label{assm3: bounded loss and gradient} \textbf{Bounded loss and bounded gradient of the loss.} $\exists\;L<\infty$, $L'<\infty$ such that 
   $\forall \Phi \in \mathcal{H}_{\Phi}, \forall x \in \mathcal{X}, \forall y \in \mathcal{Y}, |\ell(\Phi(x), y)| \leq L, |\frac{\partial \ell(w \cdot \Phi(x),y )}{\partial w}|_{w=1.0}| \leq L^{'}$. 
\end{assumption}

If every $\Phi$ in the hypothesis class $\mathcal{H}_{\Phi}$ is bounded by $M$ and the label space $\mathcal{Y}$ is bounded, then for both square  and cross-entropy loss,  $\ell(\Phi(\cdot), \cdot)$ and $\frac{\partial \ell(w \cdot\Phi(\cdot), \cdot)}{\partial w}|_{w=1.0}$ are bounded.  Define $\kappa = \min_{\Phi \in \mathcal{H}_{\Phi}} |R^{'}(\Phi)-\epsilon|$; $\kappa$ measures how close any penalty can get to the boundary $\epsilon$. $\kappa$ quantifies how good the finite sample approximation $\hat{R}^{'}$ need to be in order to get $\hat{\mathcal{S}}^{\mathsf{IV}}(\epsilon) = \mathcal{S}^{\mathsf{IV}}(\epsilon)$.
Define $\nu$ to quantify the approximation w.r.t.\ optimal risk.

\begin{proposition} \label{prop2: scomp_irm}
For every $\nu>0$ and $\delta\in (0,1)$, if  $\mathcal{H}_{\Phi}$ is a finite hypothesis class, Assumption \ref{assm3: bounded loss and gradient} holds, $\kappa >0$, and if the  number of samples $|D|$ is greater than  $\max\big\{\frac{16L'^{4}}{\kappa^2}, \frac{8L^2}{\nu^2}\big\}\log\big(\frac{4|\mathcal{H}_{\Phi}|}{\delta}\big)$, then with a probability at least $1-\delta$,  every solution $\hat{\Phi}$ to  EIRM (\eqref{eqn: EIRM_cons_swap})  is a $\nu$ approximation of IRM, i.e.  $\hat{\Phi}\in \mathcal{S}^{\mathsf{IV}}(\epsilon)$, $R(\Phi^{*}) \leq R(\hat{\Phi}) \leq R(\Phi^{*}) + \nu$, where $\Phi^{*}$ is a solution to IRM (\eqref{eqn: IRM_cons_swap}).
\end{proposition}

 \textbf{Proof Sketch.} The standard analysis in learning theory on ERM or regularized/constrained ERM typically relies on linearly separable loss functions. In such cases, we can use standard plug-in estimators and analyze their behavior using  concentration inequalities. In our setting, $R^{'}$ is a weighted sum of squares of expectation and thus it is not linearly separable. We develop a new way of expressing $R^{'}$ that allows us to make it linearly separable.  Next, in order to ensure $R(\Phi^{*}) \leq R(\hat{\Phi}) \leq R(\Phi^{*}) + \nu$ we first need to guarantee that the set of invariant predictors is exactly recovered, i.e., $\hat{\mathcal{S}}^{\mathsf{IV}}(\epsilon) = \mathcal{S}^{\mathsf{IV}}(\epsilon)$ (exact recovery is typically not required in existing constrained analysis such as \cite{woodworth2017learning} \cite{agarwal2018reductions}).  We show that if the number of samples grow as $\frac{1}{\kappa^2}$ even the closest points on either side of the boundary of the set $\mathcal{S}^{\mathsf{IV}}(\epsilon)$ are correctly discriminated, which guarantees exact recovery of $\mathcal{S}^{\mathsf{IV}}(\epsilon)$. Once the exact set is recovered, beyond this  we use standard learning theory tools to ensure  $R(\Phi^{*}) \leq R(\hat{\Phi}) \leq R(\Phi^{*}) + \nu$.

The above result holds for both square and cross-entropy loss. For ease of exposition, we use the standard setting of finite hypothesis class and extend all the results to infinite hypothesis classes in the supplement (summary of insights from the extension are in Section \ref{secn: non-realizable}). Next, we state a standard result on ERM's sample complexity. Define a $\Phi^{+}$ such that $ \Phi^{+}\in \arg\min_{\Phi \in \mathcal{H}_{\Phi}} R(\Phi)$

\begin{proposition}\cite{shalev2014understanding} \label{prop3: scomp_erm}
For every $\nu>0$ and $\delta\in (0,1)$, if $\mathcal{H}_{\Phi}$ is a finite hypothesis class, Assumption \ref{assm3: bounded loss and gradient} holds, and if the  number of samples  $|D|$ is greater than $ \frac{8L^2}{\nu^2}\log\big(\frac{2|\mathcal{H}_{ \Phi}|}{\delta}\big)$,  then   with  a probability at least $1-\delta$, every solution $\Phi^{\dagger}$ to ERM  is an $\nu$ approximation of expected risk minimization, i.e., 
 $R(\Phi^{+}) \leq R(\Phi^{\dagger}) \leq  R(\Phi^{+}) + \nu$. 
\end{proposition}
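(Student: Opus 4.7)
\textbf{Proof plan for Proposition \ref{prop3: scomp_erm}.}
This is the classical finite-hypothesis-class ERM bound, so the plan is the standard Hoeffding-plus-union-bound-plus-ERM-comparison argument. The plan has three steps: (i) concentration of $\hat{R}(\Phi)$ around $R(\Phi)$ for a fixed $\Phi$, (ii) uniform convergence over $\mathcal{H}_{\Phi}$ via a union bound, and (iii) the standard ``add and subtract'' argument that uniform convergence with slack $\nu/2$ gives an ERM excess risk of at most $\nu$.

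For step (i), fix $\Phi \in \mathcal{H}_{\Phi}$. The sample $D$ consists of $|D|$ i.i.d. draws $(X^e, Y^e, e) \sim \bar{\mathbb{P}}$, so $\hat{R}(\Phi)$ is the empirical mean of the i.i.d. random variables $\ell(\Phi(X^e_i), Y^e_i)$. By Assumption \ref{assm3: bounded loss and gradient} each summand lies in $[-L, L]$, hence has range at most $2L$. Applying Hoeffding's inequality with deviation $\nu/2$ gives
\[
\mathbb{P}\!\left(\,|\hat{R}(\Phi) - R(\Phi)| > \tfrac{\nu}{2}\,\right) \;\le\; 2\exp\!\left(-\frac{|D|\,\nu^2}{8L^2}\right).
\]
For step (ii), union bound over the $|\mathcal{H}_{\Phi}|$ predictors gives
\[
\mathbb{P}\!\left(\,\sup_{\Phi \in \mathcal{H}_{\Phi}} |\hat{R}(\Phi) - R(\Phi)| > \tfrac{\nu}{2}\,\right) \;\le\; 2\,|\mathcal{H}_{\Phi}|\exp\!\left(-\frac{|D|\,\nu^2}{8L^2}\right),
\]
and requiring the right-hand side to be at most $\delta$ yields exactly the stated sample size threshold $|D| \geq \frac{8L^2}{\nu^2}\log\bigl(\frac{2|\mathcal{H}_{\Phi}|}{\delta}\bigr)$. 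Call the complementary event (on which the sup is at most $\nu/2$) the \emph{good event}; it has probability at least $1-\delta$.

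For step (iii), work on the good event. Let $\Phi^{\dagger}$ be any ERM solution and $\Phi^{+}$ a minimizer of $R$ over $\mathcal{H}_{\Phi}$. The lower bound $R(\Phi^{+}) \leq R(\Phi^{\dagger})$ is immediate from the definition of $\Phi^{+}$. For the upper bound, apply uniform convergence to $\Phi^{\dagger}$, then ERM optimality, then uniform convergence to $\Phi^{+}$:
\[
R(\Phi^{\dagger}) \;\leq\; \hat{R}(\Phi^{\dagger}) + \tfrac{\nu}{2} \;\leq\; \hat{R}(\Phi^{+}) + \tfrac{\nu}{2} \;\leq\; R(\Phi^{+}) + \nu.
\]
There is no real obstacle here --- the proof is entirely standard and the only thing to be careful about is matching the constants: the $8L^2$ in the denominator of the sample complexity comes from Hoeffding applied to a random variable of range $2L$ at deviation $\nu/2$, and the factor of $2$ inside the logarithm comes from the two-sided Hoeffding bound (there is no additional factor of $2$ because uniform convergence at slack $\nu/2$ is what is needed, not at slack $\nu$).
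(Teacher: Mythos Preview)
Your proposal is correct and follows exactly the standard Hoeffding-plus-union-bound argument that the paper cites from \cite{shalev2014understanding} (and in fact carries out explicitly in the proof of Proposition~\ref{prop2: scomp_irm_append}, equations \eqref{prop2_proof:eqn1}--\eqref{prop2_proof:eqn6}, for the $R$-part of that argument). The constants match: range $2L$ at deviation $\nu/2$ in Hoeffding gives the $8L^2/\nu^2$ factor, and the two-sided bound plus the union over $|\mathcal{H}_\Phi|$ hypotheses gives the $2|\mathcal{H}_\Phi|/\delta$ inside the logarithm.
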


\textbf{Proposition \ref{prop2: scomp_irm} vs.\ \ref{prop3: scomp_erm}}   Since $\kappa\leq \epsilon$, the sample complexity of EIRM grows at least as $\mathcal{O}(\max\{\frac{1}{\epsilon^2}, \frac{1}{\nu^2}\})$. Let us look at the two terms inside $\max$- i) $\frac{1}{\nu^2}$ growth term is similar to ERM, it  ensures $\nu$ approximate optimality in the overall risk $R$, ii) $\frac{1}{\epsilon^2}$  growth ensures the IRM penalty $R^{'}$ is less than $\epsilon$. A direct comparison of sample complexities in Propositions \ref{prop2: scomp_irm} and \ref{prop3: scomp_erm}  suggests that the sample complexity of EIRM is higher than ERM, which is not the complete picture. The two approaches may not converge to the same solutions and IRM may converge to a solution with better OOD behavior than one achieved by ERM.  Therefore, a fair comparison is only possible when we also study the OOD properties of the solutions achieved by the two approaches, which is the subject of the next section.

\subsection{OOD Performance: ERM vs.\ IRM}
We divide the comparisons based on distributional shift assumptions that decide whether ERM and IRM arrive at the same asymptotic solutions or not. 

\subsubsection{ Covariate shift} \label{secn: realizable}

\begin{assumption} \textbf{Invariance w.r.t.\ all the features.}
\label{assm4: invariant_cexp}
 $\forall e,o\in \mathcal{E}_{all}$ and  $\forall x\in \mathcal{X}$,
$\mathbb{E}[Y^{e}|X^{e}=x]  =\mathbb{E}[Y^{o}|X^{o}=x]$.  $\forall e\in \mathcal{E}_{all}$, $X^{e}\sim \mathbb{P}^{e}_{X^{e}}$ and  the support of $\mathbb{P}^{e}_{X^{e}}$ is equal to  $\mathcal{X}$.
\end{assumption}

 As stated earlier, the first part of the above assumption follows from standard covariate shift assumptions \cite{gretton2009covariate} and is a special case of the first part of the Assumption \ref{assm1:ood_cond_envmt} with $\Phi^{*}$ set to $\mathsf{I}$. If $\Phi^{*}=\mathsf{I}$, then $m$ (\eqref{m_inv_model}), which simplifies to $m(x) = \mathbb{E}[Y^{e}|X^{e}=x] $, solves the OOD problem \eqref{eqn1: ood}. A generative model that satisfies the above Assumption \ref{assm4: invariant_cexp} is given as
\begin{equation}Y^{e} \leftarrow g(X^{e}) + \varepsilon^{e},~ \mathbb{E}[\varepsilon^e]=0,~ \varepsilon^e \perp X^{e},~ \mathbb{E}[(\varepsilon^e)^2] = \sigma^2
\label{cov_shift_model}
\end{equation}
In the above model $X^{e}$ is the cause, $Y^{e}$ is the effect, and $g$ a general non-linear function (it satisfies Assumption \ref{assm4: invariant_cexp} with $m=g$). Next, we compare ERM and IRM's ability to learn $m$ under covariate shifts. In Figure \ref{fig1:dist_shift} a), we show 
Define $\tilde{\kappa} = \min_{\Phi_1,\Phi_2  \in \mathcal{H}_{\Phi}, \Phi_1\not=\Phi_2}|R(\Phi_1)-R(\Phi_2)|$, which measures the minimum separation between the risks of any two distinct hypothesis in $\mathcal{H}_{\Phi}$.

\begin{proposition} \label{prop4: emr_irm_real}Let $\ell$ be the square loss.  For every $\nu>0$ and $\delta\in (0,1)$, if $\mathcal{H}_{\Phi}$ is a finite  hypothesis class, $m\in \mathcal{H}_{\Phi}$, Assumptions \ref{assm3: bounded loss and gradient}, \ref{assm4: invariant_cexp} hold, and

$\bullet$  if the  number of samples $|D|$ is greater than
$\max\big\{\frac{8L^2}{\nu^2} \log(\frac{4|\mathcal{H}_{\Phi}|}{\delta}),\frac{16L'^{4}}{\epsilon^2}\log(\frac{2}{\delta}) \big\}$, then  with a probability at least $1-\delta$, every solution $\hat{\Phi}$ to EIRM (\eqref{eqn: EIRM_cons_swap}) satisfies  $R(m) \leq R(\hat{\Phi}) \leq R(m) + \nu$. If also $\nu<\tilde{\kappa}$, then $\hat{\Phi} = m$.

$\bullet$ if the number of samples $|D|$ is greater than
 $\frac{8L^2}{\nu^2}\log(\frac{2|\mathcal{H}_{ \Phi}|}{\delta})$, then with a probability at least $1-\delta$, every solution $\Phi^{\dagger}$ to ERM satisfies $R(m) \leq R(\Phi^{\dagger}) \leq R(m) + \nu$.  If also $\nu<\tilde{\kappa}$, then $\Phi^{\dagger} = m$. 
\end{proposition}
\textbf{Implications of Proposition \ref{prop4: emr_irm_real}.} ERM and EIRM both asymptotically achieve the ideal OOD solution; the above proposition helps compare them in a finite sample regime. The second term inside the $\max$ for EIRM, $\frac{16L'^{4}}{\epsilon^2}\log(\frac{2}{\delta})$, does not depend on the size of the hypothesis class. Hence, for large hypothesis classes, the sample complexity of EIRM equals $\frac{8L^2}{\nu^2} \log(\frac{4|\mathcal{H}_{\Phi}|}{\delta})$. Consequently, the sample complexity of EIRM and ERM differs  by a constant $\frac{8L^2}{\nu^2}\log(2)$. Thus we  conclude, for large hypothesis classes, both ERM and EIRM have  similar sample complexity. Next, we contrast the sample complexity of EIRM in Proposition \ref{prop4: emr_irm_real}, $\mathcal{O}(\frac{1}{\nu^2})$, to Proposition \ref{prop2: scomp_irm}, $\mathcal{O}(\max\{\frac{1}{\epsilon^2}, \frac{1}{\nu^2}\})$;  the additional covariate shift assumption in Proposition \ref{prop4: emr_irm_real} helps get to a lower sample complexity of $\mathcal{O}(\frac{1}{\nu^2})$. In Proposition \ref{prop4: emr_irm_real}, we assumed square loss, but a similar result extends to cross-entropy loss as well.

\subsubsection{Distributional Shift with Confounders and (or) Anti-Causal Variables}\label{secn: non-realizable}

In this section, we consider more general models than \eqref{cov_shift_model}, which only contained cause $X^{e}$ and effect $Y^{e}$.  We also allow  confounders and anti-causal variables. However, we restrict $g$ to polynomials. We  start with linear models from \cite{arjovsky2019invariant}.

\begin{assumption}\label{assm6: linear_model}
 \begin{equation}
    \begin{split}
    &    e \sim \mathsf{Categorical}(\{\pi^{o}\}_{o\in \mathcal{E}_{tr}}), \; \forall o \in \mathcal{E}_{tr},\pi^{o}>0  \\
    &    Y^{e} \leftarrow \gamma^{\mathsf{T}}(Z_1^{e}) + \varepsilon^{e},~\varepsilon^{e} \perp Z_1^{e}, ~ \mathbb{E}[\varepsilon^{e}] = 0,~  \mathbb{E}[(\varepsilon^{e})^2] = \sigma^2, ~|\varepsilon^{e}| \leq \varepsilon^{\mathsf{sup}}   \\ 
      &  X^{e} \leftarrow S(Z_1^{e}, Z_2^{e})
    \end{split}
\end{equation}
We assume that $Z_1$ component of $S$ is invertible, i.e. $\exists\; \tilde{S}$ such that $\tilde{S}(S(Z_1,Z_2))=Z_1$, and $\gamma \not =0 $.  $\forall e \in \mathcal{E}_{tr}, \pi^{e} \geq \frac{\pi^{\mathsf{min}}}{|\mathcal{E}_{tr}|}>0$. Define $\Sigma^e = \mathbb{E}[X^{e}X^{e, \mathsf{T}}]$.  $\forall e \in \mathcal{E}_{tr}$, $\Sigma^e$ is positive definite. The support of distribution of $Z^{e}=(Z_1^e,Z_2^e)$, $\mathbb{P}^{e}_{Z^{e}}$, is bounded and the norm of $S$, $\|S\|=\sigma_{\mathsf{max}}(S)$ (maximum singular value of $S$), is also bounded.  
\end{assumption}
 In the above model, $Z_1^e$ is the cause of $X^e$ and $Y^{e}$ but may not be directly observed. $Z_2^e$ may be arbitrarily correlated with $Z_1^e$ and $\epsilon^e$. We observe a scrambled transformation of $(Z_1^e,Z_2^e)$ in $X^e$. If $Z_2^{e}$ is an effect of $Y^e$ ($Z_2^{e} \leftarrow Y^{e} + N^{e}$), then $Z_2^{e}$ is an anti-causal variable. If $H^{e}$ causes both $\varepsilon^{e}$ ($\varepsilon^{e} \leftarrow  H^{e}+\bar{N}^e)$ and  $Z_2^e$ ($Z_2^{e} \leftarrow H^{e} + \tilde{N}^{e}$), then $H^{e}$ is a confounder. In both these cases, $Z_2^e$ is spuriously correlated with the label $Y^{e}$. Consequently, the standard ERM based models  estimate $Z_2^e$ from $X^e$ and end up being biased w.r.t the desired OOD model, which does not use $Z_2^e$.  If  Assumptions \ref{assm6: linear_model}, \ref{assm2:ood_cond_envmt} hold, then a linear model  $\tilde{S}^{\mathsf{T}}\gamma$ ($X^{e} \xrightarrow{\mathsf{predict}} \gamma^{\mathsf{T}}\tilde{S}X^{e}=\gamma^\mathsf{T}Z_1^e$) solves the OOD problem in \eqref{eqn1: ood} ($\Phi^{*}=\tilde{S}$, $m=\gamma^{\mathsf{T}}$ in Proposition \ref{prop1: ood}) and it relies only on $Z_1^e$. In the supplement (Proposition 17),  we prove that ERM based models do not recover  $\tilde{S}^{\mathsf{T}}\gamma$.

For each environment $e\in \mathcal{E}_{tr}$, define  $c^e = \mathbb{E}^{e}[X^{e}\varepsilon^{e}]$. Also, define a matrix $Q(x)$, where each column of the matrix $Q(x)$ is $\Sigma^e x - c^e$. In the next theorem, we will analyze the number of samples needed for EIRM to reach within $\sqrt{\epsilon}$ radius of the OOD optimal solution. 

% \begin{assumption} \textbf{Linear general position.}
% \label{assm7: linear_gen_posn}
%  A set of training environments $\mathcal{E}_{tr}$ is said to lie in a linear general position of degree $r$ for some $r\in \mathbb{N}$ if $|\mathcal{E}_{tr}| > n-r + n/r$ and for all non-zero $x\in \mathbb{R}^{n}$ 
% \begin{equation}
%     \mathsf{dim}\Big(\mathsf{span}\Big\{\Sigma^e x-c^e\Big\}_{e\in \mathcal{E}_{tr}}\Big) > n-r
% \end{equation}
% where $\mathsf{span}$ is the linear span, $\mathsf{dim}$ is the dimension, and recall $n$ is dimension of $X^{e}$. This assumption checks for diversity in the environments and holds almost everywhere \cite{arjovsky2019invariant}. 
% \end{assumption}

\begin{assumption} \label{assm8:regularity conditions on envmts and hphi} \textbf{Inductive bias.} $\mathcal{H}_{\Phi}$ is a  finite set of linear models  parametrized by $\Phi \in \mathbb{R}^{n}$ (output $\Phi^{\mathsf{T}}X^{e}$).  $\tilde{S}^{\mathsf{T}}\gamma \in \mathcal{H}_{\Phi}$.   $\exists \; \omega>0$ s.t. $\forall \Phi \in \mathcal{H}_{\Phi},\Omega>\|\Phi\|^2 > \omega $.  
\end{assumption}
Informally stated, the above assumption requires  the OOD optimal predictor $\tilde{\mathcal{S}}^{\mathsf{T}}\gamma$ to lie in the interior of the search space and not on the boundary. If Assumptions \ref{assm6: linear_model}, \ref{assm8:regularity conditions on envmts and hphi} hold, then  Assumption \ref{assm3: bounded loss and gradient} holds. Hence, we can use the bounds $L$ and $L'$ on $\ell$  and $\frac{\partial \ell(w\cdot \Phi(\cdot), \cdot)}{\partial w}|_{w=1.0}$ respectively in our next result. 
% \begin{assumption}
% \label{assumption_singular_value}
% For all $\|x\|^2\geq \epsilon >0$, where $\epsilon$ is the approximation threshold from the constraint in \eqref{eqn: IRM_cons_swap}, the minimum eigenvalue of $Q(x)Q(x)^{\mathsf{T}}$, $\lambda_{\mathsf{min}}(Q(x)Q(x)^{\mathsf{T}}) \geq \eta \epsilon >0$, where $\eta \geq \frac{1}{4\pi^{\mathsf{min}}\omega} $.
% \end{assumption}

\begin{assumption}
\label{assumption_singular_value}
For all $\|x\|^2\geq \epsilon >0$, where $\epsilon$ is the approximation threshold from the constraint in \eqref{eqn: IRM_cons_swap}, the minimum eigenvalue of $Q(x)Q(x)^{\mathsf{T}}$, $\lambda_{\mathsf{min}}(Q(x)Q(x)^{\mathsf{T}}) \geq \frac{1}{4\pi^{\mathsf{min}}\omega} $.
\end{assumption}

The above assumption implies that the linear general position assumption from \cite{arjovsky2019invariant} holds for $r=1$ (see the linear general position assumption restated in the supplement, Assumption \ref{assm7: linear_gen_posn}).

\begin{proposition} \label{prop5: irm_cfd_chd} Let $\ell$ be the square loss. Given $\epsilon\in (0,1)$  and a $\delta \in (0,1)$, if Assumptions  \ref{assm6: linear_model},  \ref{assm8:regularity conditions on envmts and hphi}, \ref{assumption_singular_value} hold and if the number of data points $|D|$ is greater than $ \frac{16L'^{4}}{\epsilon^2}\log\big(\frac{2|\mathcal{H}_{\Phi}|}{\delta}\big)$, then  with a probability at least $1-\delta$, every solution $\hat{\Phi}$ to EIRM (\eqref{eqn: EIRM_cons_swap} with $\frac{\epsilon}{2}$) satisfies  $\|\hat{\Phi}- (\tilde{S}^{\mathsf{T}}\gamma\big)\|^2 \leq  \epsilon$.
\end{proposition}
\textbf{Proof Sketch.} In learning theory it is common to analyze the concentration of empirical risks around the expected risks. In our case, we have a target ideal solution to  \eqref{eqn1: ood} ($\tilde{S}^{\mathsf{T}}\gamma$) and we want our empirical solutions to concentrate around that. A direct finite sample approximation of  \eqref{eqn: IRM_grad_cons} is hard to analyze. Therefore, we introduce an intermediate problem in  \eqref{eqn: IRM_cons_swap} and then develop a finite sample approximation of it in  \eqref{eqn: EIRM_cons_swap}.  We first show that solving \eqref{eqn: IRM_cons_swap} leads to solutions in the neighborhood of the target. To show this we use Assumption \ref{assumption_singular_value}. Next, we connect \eqref{eqn: EIRM_cons_swap} and \eqref{eqn: IRM_cons_swap} using our new estimator for  $R^{'}$ and Hoeffding's inequality.

\textbf{Implications of Proposition \ref{prop5: irm_cfd_chd}.}
 \textbf{1. Convergence rate of ERM vs.\ EIRM:} Recall that $\epsilon$ is the slack  on IRM penalty $R^{'}$. If $\epsilon$ is sufficiently small and the data grows as $\mathcal{O}(\frac{1}{\epsilon^2})$, every solution $\hat{\Phi}$ to EIRM (\eqref{eqn: EIRM_cons_swap}) is in $\sqrt{\epsilon}$ radius of the  OOD solution, i.e., $\|\hat{\Phi} - \tilde{S}^{\mathsf{T}}\gamma\| = \mathcal{O}(\sqrt{\epsilon})$. We contrast these rates to ones in the covariate shift setting (Section \ref{secn: realizable}). Let $\mathbb{E}[Y^{e}|X^{e}=x] = \Psi^{\mathsf{T}}x$. If the data grows as $\mathcal{O}(\frac{1}{\nu^2})$, then both ERM and EIRM solution converge to $\Psi$ as $\|\hat{\Phi} - \Psi\| = \mathcal{O}(\sqrt{\nu})$ (from Proposition \ref{prop4: emr_irm_real}). This shows that EIRM works in more settings (Proposition \ref{prop4: emr_irm_real}, \ref{prop5: irm_cfd_chd}) than ERM while matching the convergence rate of ERM.

 \textbf{2. Comparison with Proposition \ref{prop2: scomp_irm}:} Lastly, we contrast  sample complexity of EIRM in Proposition \ref{prop5: irm_cfd_chd}, $\mathcal{O}(\frac{1}{\epsilon^2})$, to Proposition \ref{prop2: scomp_irm}, $\mathcal{O}(\max\{\frac{1}{\epsilon^2}, \frac{1}{\nu^2}\})$; the additional distributional assumptions in Proposition \ref{prop5: irm_cfd_chd}  help arrive at a lower sample complexity of $\mathcal{O}(\frac{1}{\epsilon^2})$. The bound in Proposition  \ref{prop2: scomp_irm}, $\mathcal{O}(\max\{\frac{1}{\epsilon^2}, \frac{1}{\nu^2}\})$, is larger than the one in Proposition \ref{prop4: emr_irm_real}, $\mathcal{O}(\frac{1}{\nu^2})$, and Proposition \ref{prop5: irm_cfd_chd}, $\mathcal{O}(\frac{1}{\epsilon^2})$, but is more general as it is agnostic to the distributional assumptions.

\textbf{A simple illustration summarizing Propositions \ref{prop4: emr_irm_real}, \ref{prop5: irm_cfd_chd}:} Set $S$ to identity in Assumption \ref{assm6: linear_model}. Recall $Z_1^e$ and $Z_2^e$ from Assumption \ref{assm6: linear_model}. Since $S$ is identity $X^e$ can be written as $[X_1^{e}, X_2^{e}]$, where $X_1^e=Z_1^e$ and $X_2^e=Z_2^e$. If $X_2^e \perp \varepsilon^e$, then  $\mathbb{E}[Y^e|X^e]$ is invariant and Assumption \ref{assm4: invariant_cexp} holds. This corresponds to the setup in Figure \ref{fig1:dist_shift}a).  We can now use Proposition \ref{prop4: emr_irm_real} and deduce that ERM and IRM have same sample complexity and end up learning the ideal model that only uses  the causal features $X_1^e$. If $X_2^e \leftarrow \varepsilon^e+ N^e$, then this corresponds to the setup Figure \ref{fig1:dist_shift}b), $X_1^e$ is the cause and $X_2^e$ is spuriously correlated  with label $Y^e$ through the confounder $\varepsilon^e$. If $X_2^e \leftarrow Y^e+ N^e$, then this corresponds to the setup in Figure \ref{fig1:dist_shift}c), $X_1^e$ is the cause and $X_2^e$ is anti-causally related to the label $Y^e$. In both these cases, the ideal OOD solution that solves \eqref{eqn1: ood} will only exploit $X_1^e$ to make predictions. From Propositions \ref{prop5: irm_cfd_chd}, it follows that IRM when fed with $\mathcal{O}(\frac{1}{\epsilon^2})$ samples, it is in $\sqrt{\epsilon}$ radius of the target OOD solution, while ERM is asymptotically biased and exploits $X_2^e$ (Proposition 17).

We define a polynomial version of the model in Assumption \ref{assm6: linear_model}. We only need to change $Y^{e} \leftarrow \gamma^{\mathsf{T}}(Z_1^{e}) + \varepsilon^{e}$ to  $Y^{e} \leftarrow \gamma^{t}\zeta_{p}(Z_1^{e}) + \varepsilon^{e}$. $\zeta_p$ is a polynomial feature map of degree $p$ defined as $\zeta_p:\mathbb{R}^{c} \rightarrow \mathbb{R}^{c^{'}}$, where $c$ is the dimension of the input $Z_1^e$, 
$\zeta_p(W) = \big(W, W\otimes W,\dots,  (W \otimes W ... \text{p times} \otimes W)\big) = \big((W^{\otimes i})_{i=1}^{p}\big)$ and $\otimes$ is the Kronecker product. Also, $c^{'} = \sum_{i=1}^{p}c^{i}$.  Can we directly use the analysis from the linear case by transforming $X^{e}$ appropriately? No, we first need to find an appropriate transformation for the scrambling matrix $S$ that satisfies the conditions (invertibiltiy) in Assumption \ref{assm6: linear_model} while maintaining a linear relationship between transformations of $X^{e}$ and $Z^{e}$. We  present the main result informally below (details are in the supplement).

\begin{proposition} \label{prop6:pol_inf} (Informal statement) Let $\ell$ be the square loss. Given $\epsilon\in (0,1)$  and a $\delta \in (0,1)$, if Assumptions similar to Proposition \ref{prop5: irm_cfd_chd} hold and  $|D|\geq \frac{16L'^{4}}{\epsilon^2}\log(\frac{2|\mathcal{H}_{\Phi}|}{\delta})$, then  with a probability at least  $1-\delta$, every solution $\hat{\Phi}$ to EIRM (\eqref{eqn: EIRM_cons_swap} with $\frac{\epsilon}{2}$) satisfies  $\|\hat{\Phi} - \bar{\tilde{S}}^{\mathsf{T}}\gamma\|^2 \leq \epsilon$, where $\bar{\tilde{S}}^{\mathsf{T}}\gamma$ is the OOD optimal solution (defined in the supplement).

\end{proposition}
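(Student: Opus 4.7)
The plan is to reduce Proposition \ref{prop6:pol_inf} to the linear case already handled by Proposition \ref{prop5: irm_cfd_chd} by lifting both the covariates and the latent variables through the polynomial feature map $\zeta_p$. The guiding observation is that while $Y^e$ is polynomial in $Z_1^e$, it is linear in $\zeta_p(Z_1^e)$; and because $X^e$ is a linear function of $Z^e=(Z_1^e,Z_2^e)$, the Kronecker identity $(Au)\otimes(Av)=(A\otimes A)(u\otimes v)$ forces $\zeta_p(X^e)$ to be a linear function of $\zeta_p(Z^e)$. The whole problem therefore collapses to a linear IRM instance in the enlarged feature dimension $c'=\sum_{i=1}^{p} n^i = \mathcal{O}(n^p)$, which is precisely what the condition $|\mathcal{E}_{tr}|=\mathcal{O}(n^p)$ in Table \ref{table1} is tracking.

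First, I would construct the lifted scrambling. Setting $\bar S=\mathrm{blockdiag}(S,S^{\otimes 2},\ldots,S^{\otimes p})$, the Kronecker identity gives $\zeta_p(X^e)=\bar S\,\zeta_p(Z^e)$. Because Assumption \ref{assm6: linear_model} gives $\tilde S(S Z^e)=Z_1^e$, taking $k$-fold Kronecker powers yields $\tilde S^{\otimes k}(S Z^e)^{\otimes k}=(Z_1^e)^{\otimes k}$ for every $k\le p$, so the block-diagonal matrix $\bar{\tilde S}=\mathrm{blockdiag}(\tilde S,\tilde S^{\otimes 2},\ldots,\tilde S^{\otimes p})$ satisfies $\bar{\tilde S}\,\zeta_p(X^e)=\zeta_p(Z_1^e)$. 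The generative equation then rewrites as
$$Y^e \;=\; \bigl(\bar{\tilde S}^{\mathsf T}\gamma\bigr)^{\mathsf T}\zeta_p(X^e)+\varepsilon^e,$$
and the OOD optimal predictor on the lifted covariates is the linear model $\bar{\tilde S}^{\mathsf T}\gamma$.

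Next, I would transfer the remaining assumptions to the lifted problem. Boundedness of $Z^e$ and of $\|S\|$ propagates to boundedness of $\zeta_p(Z^e)$ and $\|\bar S\|$, and hence of $\zeta_p(X^e)$, so Assumption \ref{assm3: bounded loss and gradient} still holds (with updated constants $L$ and $L'$) for the lifted hypothesis class of linear predictors on $\zeta_p(X^e)$, which coincides with the polynomial hypothesis class on $X^e$. The inductive-bias constraints of Assumption \ref{assm8:regularity conditions on envmts and hphi} are imposed directly in the lifted parametrization (with $\omega,\Omega$ recomputed), and Assumption \ref{assm7: linear_gen_posn} is phrased directly on the lifted environments, requiring $|\mathcal{E}_{tr}|>c'-r+c'/r$, which is the source of the $\mathcal{O}(n^p)$ environment count. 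With all assumptions in place, invoking Proposition \ref{prop5: irm_cfd_chd} verbatim on the lifted linear problem yields $\hat\Phi=(\bar{\tilde S}^{\mathsf T}\gamma)\alpha$ with $\alpha\in\bigl[\tfrac{1}{1+\tau\sqrt\epsilon},\tfrac{1}{1-\tau\sqrt\epsilon}\bigr]$, provided $\epsilon$ lies below the corresponding threshold $\epsilon_{\mathsf{th}}$ and $|D|\ge\tfrac{16L'^4}{\epsilon^2}\log(2|\mathcal{H}_\Phi|/\delta)$.

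The main obstacle I expect is the verification of the general-position condition in the lifted space. The second-moment matrices $\mathbb{E}^e[\zeta_p(X^e)\zeta_p(X^e)^{\mathsf T}]$ inherit Kronecker block structure from $\bar S$, so they are not generic PSD matrices on $\mathbb{R}^{c'}$, and diversity of the underlying distributions $\{\mathbb{P}^e_{Z^e}\}$ does not automatically yield diversity of the lifted second moments across environments. The cleanest route, and the one I would adopt, is to state Assumption \ref{assm7: linear_gen_posn} directly for the lifted covariates so that Proposition \ref{prop5: irm_cfd_chd} applies as a black box; a sharper treatment would derive sufficient conditions on the original $\{\mathbb{P}^e_{Z^e}\}$ that imply the lifted condition, but this is the main nontrivial step beyond the Kronecker bookkeeping. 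A secondary bookkeeping issue is ensuring that $\bar{\tilde S}^{\mathsf T}\gamma$ lies strictly in the interior of the lifted hypothesis class, which requires choosing the $\omega,\Omega$ bounds appropriately as functions of $p$.
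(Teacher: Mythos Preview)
Your proposal is correct and mirrors the paper's own argument almost step for step: the paper likewise lifts via $\zeta_p$, builds the block-diagonal $\bar S=\mathrm{diag}[(S^{\otimes i})_{i=1}^p]$ and $\bar{\tilde S}=\mathrm{diag}[(\tilde S^{\otimes i})_{i=1}^p]$ using the mixed-product property, verifies boundedness of the lifted quantities, restates both the linear general position assumption and the inductive-bias assumption directly in the lifted dimension $n'=\sum_{i=1}^p n^i$, and then invokes Proposition \ref{prop5: irm_cfd_chd} as a black box. Your anticipated ``main obstacle'' is resolved in the paper exactly as you suggest---by postulating the general-position condition on the lifted covariates rather than deriving it from conditions on the base distributions.
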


\textbf{Insights from the polynomial case and infinite hypothesis case.} In the polynomial case, we adapt the  Assumption \ref{assumption_singular_value}, the number of environments $|\mathcal{E}_{tr}|$ are now required to grow as  $\mathcal{O}(n^p)$.  In the infinite hypothesis case, the main change in the results is that we replace $|\mathcal{H}_{\Phi}|$ with an appropriate model complexity metric \cite{shalev2014understanding}. 
% Consider Proposition \ref{prop5: irm_cfd_chd}, a sample complexity of $n\log(n)$ ensures $\|\hat{\Phi}- (\tilde{S}^{\mathsf{T}}\gamma\big)\|^2\leq \epsilon$  in contrast to $ \log\big(|\mathcal{H}_{\Phi}|\big)\Big)$ in the finite hypothesis case.  
We showed the benefits of IRM   for polynomial models, other extensions such as non-linear $S$ are an important future work. In the supplement, we provide a dialogue explaining how our work fits in the big picture.

\section{Experiments}

In this section, we discuss classification experiments (regression experiments with similar qualitative findings are in the supplement). We  introduce three new variants of the colored MNIST (CMNIST) dataset in \cite{arjovsky2019invariant}. We divide the training data in MNIST digits into two environments ($e=1,2$) equally and
the testing data in MNIST digits is assigned to another environment ($e=3$).   $X_g^e$: gray scale image of the digit, $Y_g^e$: label of the gray scale digit (digits $\geq 5$ have $Y_g^e = 1$ and digits $<5$ have $Y_g^e= 0$). $X^e$: final colored image and $Y^e$: final label are generated as follows. Define Bernoulli variables $G$, $N$, $N^e$ that take a value $1$ with probability $\theta$, $\beta$ and $\beta^e$ and $0$ otherwise. Define a color variable $C^{e}$, where $C^{e}=0$ is red and $C^{e}=1$ is green. Let $\oplus$ denotes xor operation.
\begin{equation}
    \begin{split}
        & Y_g^e \leftarrow \mathsf{L}(X_g^e), \mathsf{L}: \text{Human labeling},\\
        &Y^{e} \leftarrow \mathsf{L}(X_g^e)\oplus N,\; \text{Corrupt the original labels with noise}  \\ 
        & C^e \leftarrow G(Y^{e} \oplus N^e) + (1-G)(N\oplus N^e),\; \text{Use} \;G\; \text{to select b/w anti-causal or confounded} \\
        & X^{e} \leftarrow T(X_g,C^e), T: \text{transformation to color the image}
    \end{split}
    \label{cmnist_acausal_cfd_sem}
\end{equation}

If the probability $\theta=1$, then $G=1$ and that gives us back the original CMNIST in \cite{arjovsky2019invariant}, which we call anti-causal CMNIST (AC-CMNIST). If $\theta=0$, then we get  confounded colored MNIST (CF-CMNIST). If $0<\theta<1$, we get a hybrid dataset (HB-CMNIST). The above model in \eqref{cmnist_acausal_cfd_sem} has features of the model in Assumption \ref{assm6: linear_model}, where $X_g^e$, $C^e$, $\mathsf{L}$, $T$ take the role of $Z_1^e$, $Z_2^e$, $\gamma$, $S$. We set the noise $N$ parameter $\beta=0.25$, and the parameter for $N^e$ in the three environments $[\beta^1,\beta^2,\beta^3] =[0.1,0.2,0.9]$. Color is spuriously correlated with the label; $\mathbb{P}(C^e=1|Y^e=1)$ varies drastically in the three environments ($[0.9,0.8,0.1]$ for AC-CMNIST). In the variants of CMNIST we discussed, $\mathbb{P}(Y^e|X^e)$ varies across the environments. We now define a covariate shift based CMNIST (CS-CMNIST) ($\mathbb{P}(Y^e|X^e)$ is invariant). We use selection bias to induce spurious correlations. Generate a color $C^{e}$ uniformly at random. Select the pair $(X_g^e,C^{e})$ with probability  $1-\psi^{e}$ if the label $Y_g^{e}$ and $C^e$ are the same, else select them  with a probability $\psi^{e}$. If the pair is selected, then color the image $X^{e} \leftarrow T(X_g,C^e)$ and $Y^{e}\leftarrow Y^{e}_{g}$. Selection probability $\psi^e$ for the three environments are $[\psi^{1},\psi^{2},\psi^{3}]= [0.1,0.2,0.9]$. Due to the selection bias, color is spuriously correlated,  $\mathbb{P}(C^e=1|Y^e=1)$ varies drastically $[0.9,0.8,0.1]$. We provide the graphical models for the CMNIST variants and computations of $\mathbb{P}(C^e|Y^e)$, $\mathbb{P}(Y^e|X^e)$ in the supplement.

\begin{figure}[!h]
\centering
  \includegraphics[width=.3\textwidth]{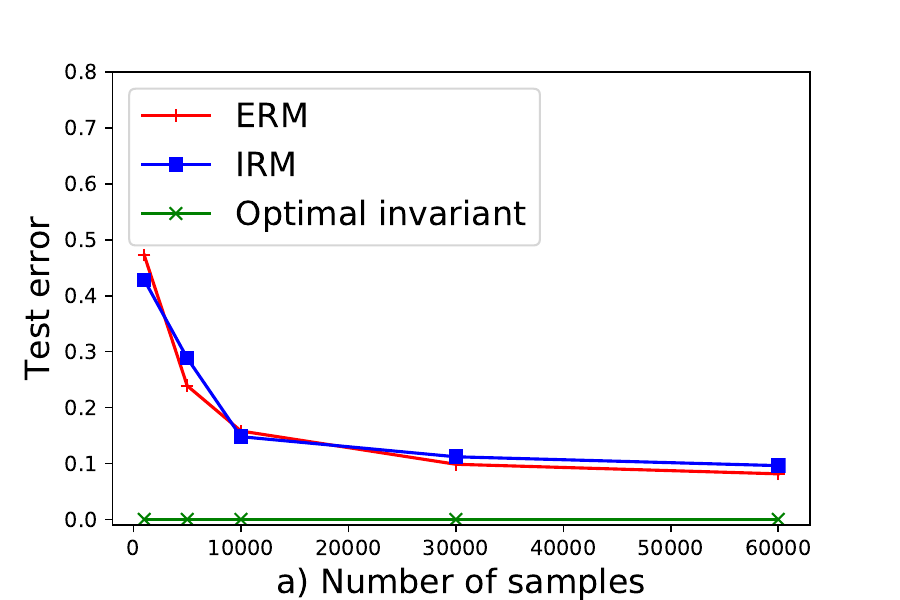}
  \includegraphics[width=.3\textwidth]{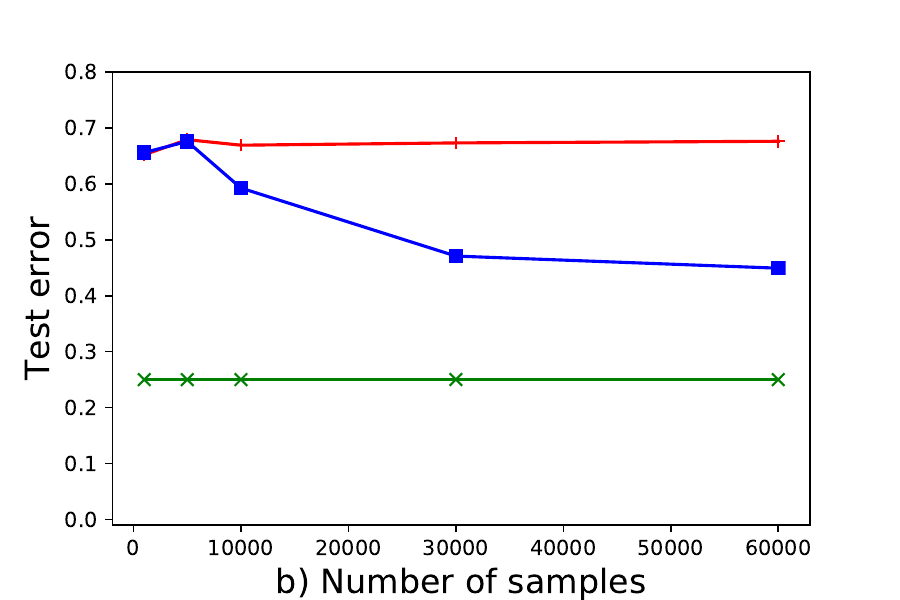}\\
  \includegraphics[width=.3\textwidth]{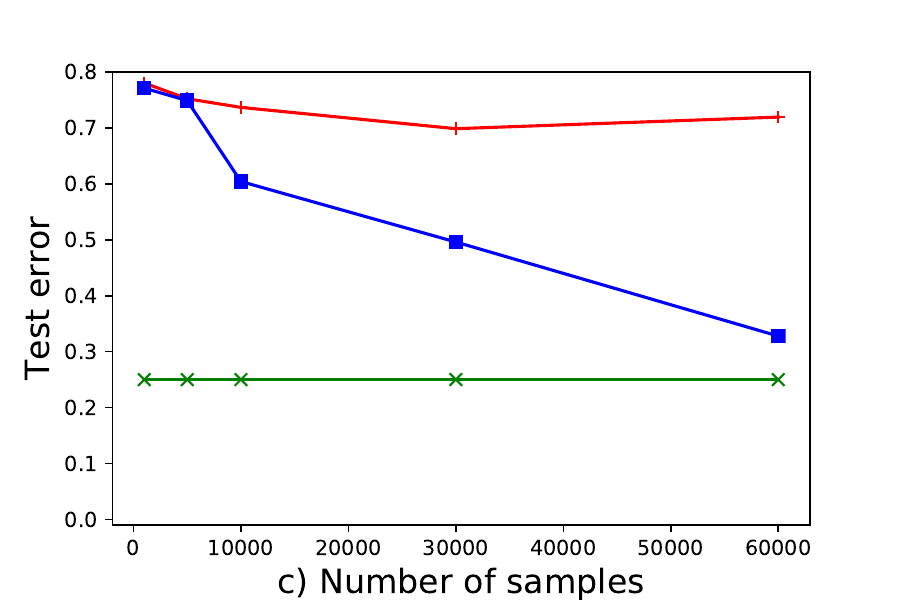}
  \includegraphics[width=.3\textwidth]{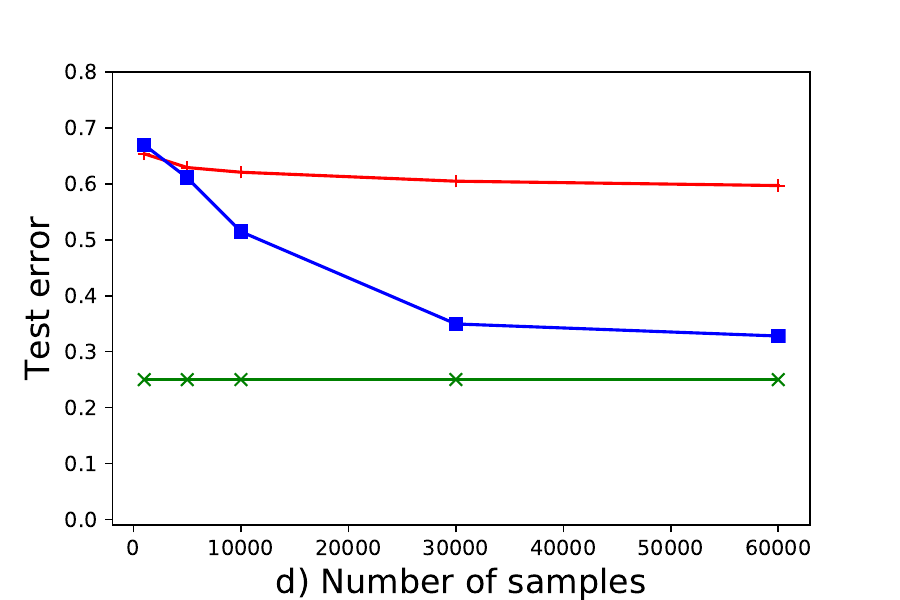}
\caption{\small{Comparisons: a) CS-CMNIST, b) CF-CMNIST, c) AC-CMNIST and d) HB-CMNIST.}}
\label{fig:cmnistvar}
\end{figure}

\subsection{Results}

We use the first two environments ($e=1,2$) to train and third environment ($e=3$) to test. Other details of the training (models, hyperparameters, etc.) are in the supplement. For each of the above datasets, we run the experiments for different amounts of training data from $1000$ to up to $60000$ samples (10 trials for each data size).  In Figure \ref{fig:cmnistvar}, we compare the models trained using IRM and ERM in terms of the classification error on the test environment $e=3$ (a poor  performance indicates model exploits the color) for varying number of train samples. We also provide the performance of the ideal hypothetical optimal invariant model. Observe that except for in the covariate shift setting where IRM and ERM are similar as seen in Figure \ref{fig:cmnistvar}a (as predicted from Proposition \ref{prop4: emr_irm_real}), IRM outperforms ERM in the remaining three datasets (as predicted from Proposition \ref{prop5: irm_cfd_chd}) as seen in Figure \ref{fig:cmnistvar}b-d. We further validate this claim through the regression experiments provided in the supplement.  In CF-CMNIST, IRM achievs an error  of $0.45$, which is much better than error  of ERM $(0.7)$ but is marginally better than a random guess. This suggests that confounder induced spurious correlations are harder to mitigate and may need more samples than needed in anti-causal case (AC-CMNIST).

\section{Conclusion}

We presented a sample complexity analysis of IRM to answer the question: when is IRM better than ERM (and vice-versa)? For distribution shifts such as the covariate shifts, we proved that both IRM and ERM have similar sample complexity and arrive at the desired OOD solution asymptotically.  For distribution shifts involving confounders and (or) anti-causal variables and polynomial generative models, we proved that IRM is guaranteed to achieve the desired OOD solution while ERM can be asymptotically biased. We proposed new variants of original colored MNIST dataset from \cite{arjovsky2019invariant}, which are more comprehensive and better capture how spurious correlations occur in reality. %We validate predictions from theory on these datasets and also on regression datasets. 
To the best of our knowledge, we believe this to be the first work that provides a rigorous characterization of impact of factors such as model complexity, number of environments on the sample complexity and its distance from the OOD solution in distribution shifts that go beyond covariate shifts. 

\section{Acknowledgements} 
This work was supported in part by the Rensselaer-IBM AI Research Collaboration (part of the IBM AI Horizons Network).

\section{Errata}

In the previous version of this work, there was an error in the proof of Proposition \ref{prop5: irm_cfd_chd}. We are extremely thankful to Advait Parulekar and Sanjay Shakkottai for bringing this to our attention and engaging in an insightful discussion.  In the current version, we have fixed the error, the main findings of the work (highlighted in Table \ref{table1}) have not changed.

\section{Supplement}

\subsection{Dialogue on IRM Continued}

[\emph{In the original IRM paper by \cite{arjovsky2019invariant}, we left two graduate students \textsc{Eric} and \textsc{Irma} strolling along the Parisian streets on a warm summer evening. A lot has transpired since then. \textsc{Eric} is now holed up inside his room at Cit\'e Universitaire and \textsc{Irma} has returned to her parents' home in Provence. The two are talking through a Zoom call.}]

\begin{itemize}
    \item \textsc{Eric}: After reading \cite{arjovsky2019invariant} and \cite{gulrajani2020search}, I was not sure of how to understand the settings where IRM is beneficial over ERM and vice-versa? From  \cite{gulrajani2020search}, I understood that ERM continues to be the state-of-the-art if sufficient care is taken in performing model selection.
    \item \textsc{Irma}:  But after reading this manuscript I understand that there are settings where no matter how one selects the model, ERM is bound to be at a disadvantage.
    \item \textsc{Eric}: What you say seems to contradict my understanding of \cite{gulrajani2020search}.
    \item \textsc{Irma}: Actually, they ...
\end{itemize}

[\emph{\textsc{Irma}'s audio and video freeze for a minute, leaving \textsc{Eric} to ponder this conundrum for a little bit on his own. \textsc{Irma} stops her video and continues only with audio and the conversation is able to resume.}]

\begin{itemize}
    \item \textsc{Irma}: Sorry, how much did you hear?
    \item \textsc{Eric}: You were just starting to explain why ERM can be at a disadvantage, but I didn't hear anything after that.
    \item \textsc{Irma}: Okay, let me start my explanation again. There is no contradiction. The current manuscript says that in covariate shift settings, there maybe no clear winner. Many of the datasets considered in \cite{gulrajani2020search} are perhaps similar to covariate shift settings. Also, there could be another reason why IRM did not outperform ERM in \cite{gulrajani2020search} and I will come to it in a bit.
    \item \textsc{Eric}: Yes, you are right! The datasets are human labeled images if I recall correctly. In these datasets it's safe to assume $\mathbb{P}(Y^e|X^e)$ is close to invariant across domains (as long as the cohort of humans in the different domains are not very different). What happens when $\mathbb{P}(Y^e|X^e)$ varies a lot? 
    \item \textsc{Irma}: Yes, when $\mathbb{P}(Y^e|X^e)$ varies a lot, IRM can be at an advantage over ERM. In this manuscript, I learned that when data generation involves confounders or anti-causal variables, it is possible to show that IRM methods converge to desirable OOD solutions with good convergence rates. 
    % \item ERMON: What are the settings when $\mathbb{P}(Y^e|X^e)$ varies a lot? 
    % \item IRMIE: When there are confounders in the dataset that make the label and the covariates appear to be correlated, then $\mathbb{P}(Y^e|X^e)$ varies a lot. ..
    
    \item \textsc{Eric}: Interesting! I should read that. The experiment on colored MNIST by \cite{arjovsky2019invariant} contained anti-causal variables right? Is that the reason why IRM performed better than ERM? 
    \item \textsc{Irma}: Yes, you are right. In the current manuscript, the authors also develop other variants of colored MNIST (covariate shift based, confounder based, anti-causal based, and hybrid of confounder and anti-causal). IRM retains its advantage on all but the first dataset. I liked that IRM has advantage over ERM in confounded datasets as I believe many real datasets can have confounders generating spurious correlations. 
    \item \textsc{Eric}: Wow! These different variants of CMNIST sound exciting. Going back to the covariate shift case, do you think that we can perhaps come up with a finer criterion to say when IRM is better than ERM and vice-versa?
    \item \textsc{Irma}: Yes, actually I have been thinking about this problem and would be happy to share my initial thoughts. Identifying a representation $\Phi^{*}$ that leads to  invariant conditional distributions is crucial to the success of IRM. A subtle factor that is implicitly assumed is representations obtained from multiple domains should overlap. 
    \item \textsc{Eric}: Can you clarify what you mean by overlap?
    \item \textsc{Irma}: Sure! Consider the dataset from domain 1, say photographs of birds, and domain 2, say sketches of birds. Imagine I have access to the oracle representation $\Phi^{*}$. Now I pass the data from domain 1 and 2 through $\Phi^{*}$ to get the representations. If these representations from the two domains live in very different parts of the representation space, then we cannot hope that IRM will offer any advantage. This is the other explanation, which I was going to come to, why IRM did not outperform ERM in \cite{gulrajani2020search}. 
    \item \textsc{Eric}: How about when there is a complete overlap?
    \item \textsc{Irma}: Yes, if there is a strong or complete overlap in the representations from the two domains, then it is possible that IRM can help. In fact I believe in such settings, if the ERM models trained on the two domains disagree a lot, then that can be a strong indication towards the models heavily exploiting spurious correlations. In such cases, I believe IRM can again offer some advantage. 
    \item \textsc{Eric}: I will try to put these ideas down on paper and meet you again for a discussion. 
    \item \textsc{Irma}: Great! I truly hope that it can be in person at the caf\'e in Palais-Royal where we first started this conversation. So long for now!
    
    % You are right, so this is how I would summarize my understanding. The current manuscript says that in settings with covariate shifts there is no clear winner. I believe most of the experiments in \cite{gulrajani2020search} operated with datasets that were generated with covariate shift assumptions. 
\end{itemize}

[\emph{The students end their call.}]

\subsection{Supplementary Materials for Experiments}

In this section, we cover the supplementary materials for the experiments.  The code to reproduce the results presented in this work can be found at \url{https://github.com/IBM/OoD}.

\subsubsection{Classification}
We first describe the model and other training details. 

\textbf{Choice of $\mathcal{H}_{\Phi}$ and other training details} 
We use the same architecture for both ERM and IRM. We choose the architecture that was used in \cite{arjovsky2019invariant}: 2 layer MLP. The first two layer consists of 390 hidden nodes, and the output layer has two nodes (for the two classes). We use ReLU activation in each layer, a regularization weight of 0.0011 is used for each layer. We use a learning rate of 4.9e-4, batch size of 512 for both ERM and IRM. We use 1000 gradient steps for IRM. As was done in the original IRM work \cite{arjovsky2019invariant}, we use a threshold on steps (190) after which a large penalty is imposed for violating the IRM constraint.  We use the train domain validation set procedure described in \cite{gulrajani2020search} to select the penalty value from the set 
$\{1e4, 3.3e4, 6.6e4, 1e5\}$ (with 4:1 train-validation split). With the same learning rate, we observed ERM was slower at learning than IRM. To ensure ERM always converges we set the number of epochs to a very high value $100$ ($118k$ steps).

\textbf{A. Covariate shift based for CMNIST}

We provide the generative model for CS-CMNIST below. 

\begin{equation}
    \begin{split}
        & Y_g^e \leftarrow \mathsf{L}(X_g^e), \; C^e\leftarrow \mathsf{Uniform}(\{0,1\})  \\ 
        & U^e \leftarrow \mathsf{Bernoulli}\Big((C^e \oplus Y_g^e)\psi^e + \big(1-(C^e \oplus Y_g^e)\big)\big(1- \psi^e)\big)\Big)\\
        & X^e \leftarrow T(X^g,C^e) |U^e =1, Y^e \leftarrow Y_g^e |U^e=1
    \end{split}
    \label{cmnist_selbias_sem}
\end{equation}
\textbf{A.1. Compute $\mathbb{P}(C^e|Y^e)$ and $\mathbb{P}(Y^e|X^e)$.}

We compute $\mathbb{P}(C^e|Y^e)$ and $\mathbb{P}(Y^e|X^e)$ for the covariate shift based CMNIST described by \eqref{cmnist_selbias_sem}.  $\mathbb{P}(C^e|Y^e)$ helps us understand how the spurious correlations vary across the environments. $\mathbb{P}(Y^e|X^e)$ helps us understand if the covariate shift condition is satisfied or not. 
Compute the probability $\mathbb{P}(C^e|Y^e=1) = \mathbb{P}(C^e|Y_g^e=1, U^e=1)$ as follows.
\begin{equation}
    \begin{split}
      &  \mathbb{P}(C^{e}=1|Y_g^{e}=1, U^{e}=1) =  \frac{\mathbb{P}(C^{e}=1,Y_g^{e}=1| U^{e}=1)}{\mathbb{P}(C^{e}=1,Y_g^{e}=1| U^{e}=1) + \mathbb{P}(C^{e}=0,Y_g^{e}=1| U^{e}=1)} \\
       & \mathbb{P}(C^{e}=1,Y_g^{e}=1| U^{e}=1) = \frac{\mathbb{P}(C^{e}=1,Y_g^{e}=1, U^{e}=1)}{\sum_{a,b} \mathbb{P}(C^{e}=a,Y_g^{e}=b, U^{e}=1)}   = \frac{1}{2}(1-\psi^{e}) \\
         &\mathbb{P}(C^{e}=0,Y_g^{e}=1| U^{e}=1)  =\frac{\mathbb{P}(C^{e}=0,Y_g^{e}=1, U^{e}=1)}{\sum_{a,b} \mathbb{P}(C^{e}=a,Y_g^{e}=b, U^{e}=1)} = \frac{1}{2}\psi^{e} \\
         &\mathbb{P}(C^{e}=1|Y_g^{e}=1, U^{e}=1) = (1-\psi^{e})
    \end{split}
\end{equation}

% Compute the probability $\mathbb{P}(C^e|Y^e=1) = \mathbb{P}(C^e|Y_g^e=1, S=1)$ for environment 1.
% \begin{equation}
%     \begin{split}
%       &  \mathbb{P}(Z_c=1|Y_g=1, S=1) =  \frac{\mathbb{P}(Z_c=1,Y_g=1| S=1)}{\mathbb{P}(Z_c=1,Y_g=1| S=1) + \mathbb{P}(Z_c=0,Y_g=1| S=1)} \\
%       & \mathbb{P}(Z_c=1,Y_g=1| S=1) = \frac{\mathbb{P}(Z_c=1,Y_g=1, S=1)}{\sum \mathbb{P}(Z_c=a,Y_g=b, S=1)}   = 0.45 \\
%          &\mathbb{P}(Z_c=0,Y_g=1| S=1)  =\frac{\mathbb{P}(Z_c=0,Y_g=1, S=1)}{\sum \mathbb{P}(Z_c=a,Y_g=b, S=1)} = 0.05 \\
%          &\mathbb{P}(Z_c=1|Y_g=1, S=1) = 0.9
%     \end{split}
% \end{equation}
% Similarly, $ \mathbb{P}(Z_c=1|Y_g=1, S^e=1) = 0.8$ for environment 2 and $0.1$ for test environment. 

From the above simplification we gather that $\mathbb{P}(C^{e}=1|Y^{e}=1)$ is $0.9$, $0.8$ and $0.1$ in the three environments. 
Define $p_l=P(Y_g^e=1|X_g^{e})$. In MNIST data it is reasonable to assume deterministic labeling, i.e. $p_l=1$ or $p_l=0$. From the way the data is constructed we can assume that $T$ is invertible, i.e. from colored image $X^{e}$ we can get back the grayscale image and the color $X_g^{e}, C^{e}$. We now move to computing $\mathbb{P}(Y^{e}|X^{e})$. We assume $Y_g^e=1$ in the simplifcation below. 

\begin{equation}
    \begin{split}
       & \mathbb{P}(Y_g^{e}=1,C^{e}=0,U^e=1|X_g^{e}) = \mathbb{P}(Y_g^{e}=1|X_g^{e}) \mathbb{P}(C^{e}=0)\mathbb{P}(U^e=1|Y_g^{e}=1, C^{e}=0)  \\& = 0.5p_l\psi_e\\ 
       &\mathbb{P}(Y_g^{e}=0,C^{e}=0,U^e=1|X_g^{e}) = \mathbb{P}(Y_g^{e}=0|X_g^{e}) \mathbb{P}(C^{e}=0)\mathbb{P}(U^e=1|Y_g^{e}=1, C^{e}=0)   \\&=0.5 (1-p_l) (1-\psi_e)\\
               &\mathbb{P}(Y^{e}=1|X^{e})  =   \mathbb{P}(Y_g^{e}=1|X_g^{e},C^{e}=0,U^e=1)  = \\& \frac{ \mathbb{P}(Y_g^{e}=1,C^{e}=0,U^e=1|X_g^{e})}{\mathbb{P}(Y_g^{e}=1,C^{e}=0,U^e=1|X_g^{e}) + \mathbb{P}(Y_g^{e}=0,C^{e}=0,U^e=1|X_g^{e})} \\ 
      & \mathbb{P}(Y^{e}=1|X^{e}) = \frac{p_l \psi_e}{2p_l\psi_e + 1-p_l-\psi_e} 
    \end{split}
\end{equation}
Observe that under standard assumption of deterministic labeling $p_l=1$, $\mathbb{P}(Y^{e}=1|X^{e})$ remains invariant and for high values of $p_l$ it is relatively stable across the environments. 

% \begin{equation}
%     \begin{split}
%       & \mathbb{P}(Y_g^{e}=1,Z_c=0,S^e=1|X_g) = \mathbb{P}(Y_g=1|X_g) \mathbb{P}(Z_c=0)\mathbb{P}(S^e=1|Y_g=1, Z_c=0)  =p_l\times 0.5\times p_e\\ 
%       &\mathbb{P}(Y_g=0,Z_c=0,S^e=1|X_g) = \mathbb{P}(Y_g=0|X_g) \mathbb{P}(Z_c=0)\mathbb{P}(S^e=1|Y_g=1, Z_c=0)  = (1-p_l)\times0.5 \times (1-p_e)\\
%               &\mathbb{P}(Y^{e}=1|X^{e})  =   \mathbb{P}(Y_g=1|X_g,Z_c=0,S^e=1)  = \frac{ \mathbb{P}(Y_g=1,Z_c=0,S^e=1|X_g)}{\mathbb{P}(Y_g=1,Z_c=0,S^e=1|X_g) + \mathbb{P}(Y_g=0,Z_c=0,S^e=1|X_g)} \\ 
%       & \mathbb{P}(Y^{e}=1|X^{e}) = \frac{p_l p_e}{2p_lp_e + 1-p_l-p_e}  = 0.916\;\; \text{(For environment 1)} \\
%       & \mathbb{P}(Y^{e}=1|X^{e}) =  0.961\;\; \text{(For environment 2)} \\
%           & \mathbb{P}(Y^{e}=1|X^{e}) = 0.999\;\; \text{(For environment 2)} 
%     \end{split}
% \end{equation}

\textbf{A.2 Graphical model for covariate shift based CMNIST.}
In Figure \ref{fig:sb_cmnist}, we provide the graphical model for covariate shift based CMNIST described in \eqref{cmnist_selbias_sem}.
\begin{figure}
    \centering
    \includegraphics[trim=0 0 0 0in , width=6in]{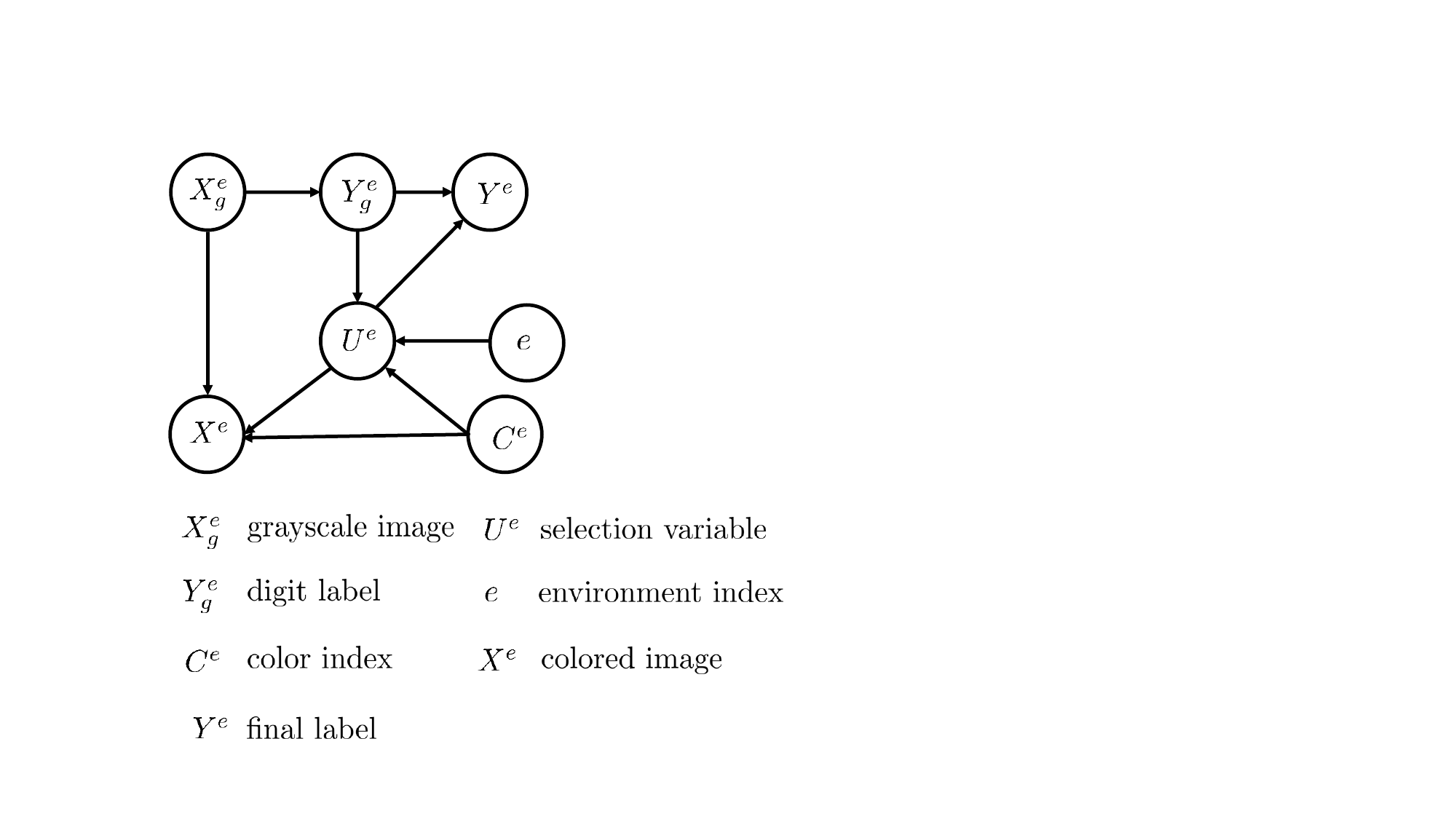}
    \caption{Graphical model for CS-CMNIST}
    \label{fig:sb_cmnist}
\end{figure}

\textbf{A.3. Results with numerical values and standard errors.}
In Table \ref{table_results_cmnits_sb}, we provide the numerical values for the results showed in the Figure \ref{fig:cmnistvar} a) along with the standard errors.
% \begin{table}[h!]
% \centering
% \def\arraystretch{1.5}
%  \begin{tabular}{||c c c||} 
%  \hline
% Method & Number of samples  & Test error \\ [0.5ex] 
%  \hline\hline
%  ERM &  $1000$ &  $45.55 \pm 1.06$ \\  
%   \hline
%  IRMv1 &  $1000$ & $41.17 \pm 1.37$\\ 
%  \hline
%   ERM &  $5000$ &   $23.31 \pm 0.47$\\ \hline 
%  IRMv1 &  $5000$ &  $22.97 \pm 3.01$\\ 
%  \hline
%   ERM &  $10000$ &   $16.93 \pm 0.55$\\ \hline 
%  IRMv1 &  $10000$ &   $15.11 \pm 0.28$\\ \hline 
%   ERM &  $30000$ &   $10.20 \pm 0.19$\\ \hline 
%  IRMv1 &  $30000$ &   $10.27 \pm 0.64$\\ \hline 
%     ERM &  $60000$ &   $7.89 \pm 0.36$\\ \hline 
%  IRMv1 &  $60000$ &   $10.33 \pm 0.46$\\ \hline 
% \end{tabular}
%  \caption{Comparison of ERM vs IRMv1: CS-CMNIST}
% % \end{center}
% \label{table_results_cmnits_sb}
% \end{table}

\begin{table}[h!]
\centering
\def\arraystretch{1.5}
 \begin{tabular}{||c c c||} 
 \hline
Method & Number of samples  & Test error \\ [0.5ex] 
 \hline\hline
 ERM &  $1000$ &  $47.27 \pm 0.72$ \\  
  \hline
 IRM &  $1000$ & $42.84 \pm 0.94$\\ 
 \hline
  ERM &  $5000$ &   $23.91 \pm 1.23$\\ \hline 
 IRM &  $5000$ &  $28.85 \pm 5.21$\\ 
 \hline
   ERM &  $10000$ &   $15.80 \pm 0.63$\\ \hline 
 IRM &  $10000$ &   $14.80 \pm 0.30$\\ \hline 
   ERM &  $30000$ &   $9.86 \pm 0.32$\\ \hline 
 IRM &  $30000$ &   $11.20 \pm 0.64$\\ \hline 
    ERM &  $60000$ &   $8.15 \pm 0.37$\\ \hline 
 IRM &  $60000$ &   $9.62 \pm 0.67$\\ \hline 
\end{tabular}
 \caption{Comparison of ERM vs IRM: CS-CMNIST}
% \end{center}
\label{table_results_cmnits_sb}
\end{table}
\textbf{B. Colored MNIST with anti-causal variables (AC-CMNIST)}

\textbf{B.1. Compute $\mathbb{P}(C^{e}|Y^e)$ and $\mathbb{P}(Y^{e}|X^{e})$.}
We first compute  $\mathbb{P}(C^{e}|Y^e)$.
 $\mathbb{P}(C^{e}=1|Y^e=1) = \mathbb{P}(Y^e \oplus N^e =1|Y^e=1) = \mathbb{P}(N^e=0|Y^{e}=1) = \beta^{e}$. Therefore,  $\mathbb{P}(C^{e}=1|Y^e=1)$ is $0.9,0.8$ and $0.1$ in environments $1,2$, and $3$ respectively.
Next, we compute $\mathbb{P}(Y^{e}|X^{e})$. We assume $Y_g^{e}=1$ in the simplfication below. We also assume deteministic labeling. In the simplfication that follows we use $\beta=0.25$. 

\begin{equation}
    \begin{split}
     &\mathbb{P}(Y^e=1,C^{e}=0|X_g^{e}) =\\& \mathbb{P}(Y^e=1,C^{e}=0|X_g^{e}) =  \mathbb{P}(Y^{e}=1|Y_g^{e}=1) \mathbb{P}(C^{e}=0|Y^{e}=1) = 0.75\beta^{e} \\ 
          &\mathbb{P}(Y^e=0,C^{e}=0|X_g^{e}) = \\& \mathbb{P}(Y^e=1,C^{e}=0|X_g^{e}) = \mathbb{P}(Y^{e}=0|Y_g^{e}=1) \mathbb{P}(C^{e}=0|Y^{e}=0) = 0.25(1-\beta^{e}) \\ 
    & \mathbb{P}(Y^{e}=1|X^{e}) = \mathbb{P}(Y^e=1|X_g^{e},C^{e}=0) = \frac{3\beta^{e}}{2\beta^{e}+1}  \\
          & \mathbb{P}(Y^{e}=1|X^{e}) =  0.25\;\; \text{(For environment 1)} \\
      & \mathbb{P}(Y^{e}=1|X^{e}) =  0.428 \;\; \text{(For environment 2)} \\
          & \mathbb{P}(Y^{e}=1|X^{e}) = 0.964\;\; \text{(For environment 2)} 
    \end{split}
\end{equation}
\textbf{B.2 Graphical model for anti-causal CMNIST}
In Figure \ref{fig:cfd_cmnist_combined}, we provide the graphical model for AC-CMNIST described in \eqref{cmnist_acausal_cfd_sem} (for $G=1$).

% \begin{figure}
%     \centering
%     \includegraphics[trim={0, 2in, 0 ,3in }, width=6in]{colored_mnist_child_gm.pdf}
%     \caption{Anti-causal colored MNIST}
%     \label{fig:chd_cmnist}
% \end{figure}

\textbf{B.3. Results with numerical values and standard errors.}
In Table \ref{table_results_cmnits_chd}, we provide the numerical values for the results showed in the Figure \ref{fig:cmnistvar} c) along with the standard errors.

\begin{table}[h!]
\centering
\def\arraystretch{1.5}
 \begin{tabular}{||c c c||} 
 \hline
Method & Number of samples  & Test error \\ [0.5ex] 
 \hline\hline
 ERM &  $1000$ &   $78.04 \pm 0.70$\\  
  \hline
 IRM &  $1000$ & $ 77.12 \pm 1.00$\\ 
 \hline
  ERM &  $5000$ &   $75.23 \pm 1.04$\\ \hline 
 IRM &  $5000$ &  $74.89 \pm 1.08$\\ 
 \hline
   ERM &  $10000$ &   $74.68 \pm 1.23$\\ \hline 
 IRM &  $10000$ &   $60.42 \pm 1.72$\\ \hline 
   ERM &  $30000$ &   $69.87 \pm 0.39$\\ \hline 
 IRM &  $30000$ &   $49.61 \pm 2.57$\\ \hline 
    ERM &  $60000$ &   $71.96 \pm 0.55$\\ \hline 
 IRM &  $60000$ &   $32.78 \pm 2.70$\\ \hline 
\end{tabular}
 \caption{Comparison of ERM vs IRM: AC-CMNIST}
% \end{center}
\label{table_results_cmnits_chd}
\end{table}

\textbf{C. Colored MNIST with confounded variables (CF-CMNIST)}

\textbf{C.1. Compute $\mathbb{P}(C^{e}|Y^e)$ and $\mathbb{P}(Y^{e}|X^{e})$.}
We start by computing $\mathbb{P}(C^{e}|Y^e)$.  Recall that $C^{e} = (N\oplus N^{e}) $.   In the simplfication that follows we use $\beta=0.25$. 

\begin{equation}
    \begin{split}
        \mathbb{P}(N=0|Y^{e}=1) = \frac{\mathbb{P}(N=0, Y^{e}=1)}{\mathbb{P}(N=0, Y^{e}=1) + \mathbb{P}(N=1, Y^{e}=1)} = \frac{0.75*0.5}{0.75*0.5 + 0.25*0.5} = 0.75
    \end{split}
\end{equation}
We use the above to compute $\mathbb{P}(C^{e}=0|Y^e=1) = \mathbb{P}(N=0,N^{e}=0|Y^{e}=1) + \mathbb{P}(N=1,N^{e}=1|Y^{e}=1)  = (1-\beta^{e})0.75 + 0.25\beta^{e} = 0.75-0.5\beta^{e}$. For environment 1, 2, 3 the above probability $\mathbb{P}(C^{e}=0|Y^e=1)$  is $0.7$, $0.65$ and $0.30$ respectively.  Next, we compute the probability $\mathbb{P}(Y^{e}|X^{e})$. Suppose $Y_g^{e}=1$ for the calculation below. We also assume deterministic labeling.

\begin{equation}
    \begin{split}
     &\mathbb{P}(Y^e=1,C^{e}=1|X_g^{e}) = \\& \mathbb{P}(Y^e=1,C^{e}=1|X_g^{e}) = \mathbb{P}(Y^{e}=1|Y_g^{e}=1) \mathbb{P}(C^{e}=1|N=0) = 0.75\beta^{e} \\ 
          &\mathbb{P}(Y^e=0,C^{e}=1|X_g^{e}) = \\& \mathbb{P}(Y^e=1,C^{e}=1|X_g^{e}) = \mathbb{P}(Y^{e}=0|Y_g^{e}=1) \mathbb{P}(C^{e}=1|N=1) = 0.25(1-\beta^{e}) \\ 
    & \mathbb{P}(Y^{e}=1|X^{e}) = \mathbb{P}(Y^e=1|X_g^{e},C^{e}=1) = \frac{3\beta^{e}}{2\beta^{e}+1}  \\
          & \mathbb{P}(Y^{e}=1|X^{e}) =  0.25\;\; \text{(For environment 1)} \\
      & \mathbb{P}(Y^{e}=1|X^{e}) =  0.428 \;\; \text{(For environment 2)} \\
          & \mathbb{P}(Y^{e}=1|X^{e}) = 0.964\;\; \text{(For environment 2)} 
    \end{split}
\end{equation}

\textbf{C.2 Graphical model for confounded CMNIST.}
In Figure \ref{fig:cfd_cmnist_combined}, we provide the graphical model for confounded CMNIST described in \eqref{cmnist_acausal_cfd_sem} (for $G=0$).
% \begin{figure}
%     \centering
%     \includegraphics[trim={0, 2in, 0 ,3in }, width=6in]{colored_mnist_cfdd_gm.pdf}
%     \caption{Confounder based colored MNIST}
%     \label{fig:cfd_cmnist}
% \end{figure}

\textbf{C.3. Results with numerical values and standard errors.}
In Table \ref{table_results_cmnits_cfd}, we provide the numerical values for the results showed in the Figure \ref{fig:cmnistvar} b) along with the standard errors.
\begin{table}[h!]
\centering
\def\arraystretch{1.5}
 \begin{tabular}{||c c c||} 
 \hline
Method & Number of samples  & Test error \\ [0.5ex] 
 \hline\hline
 ERM &  $1000$ &   $ 65.21 \pm 0.64$\\  
  \hline
 IRM &  $1000$ & $65.60 \pm 0.53$\\ 
 \hline
  ERM &  $5000$ &   $67.91 \pm 0.40$\\ \hline 
 IRM &  $5000$ &  $67.59 \pm 1.14$\\ 
 \hline
   ERM &  $10000$ &   $66.92 \pm 0.30$\\ \hline 
 IRM &  $10000$ &   $59.26 \pm 1.62$\\ \hline 
   ERM &  $30000$ &   $67.32 \pm 0.28$\\ \hline 
 IRM &  $30000$ &   $47.01 \pm 1.37$\\ \hline 
    ERM &  $60000$ &  $67.62 \pm 0.31$ \\ \hline 
 IRM &  $60000$ &   $44.92 \pm 0.84$\\ \hline 
\end{tabular}
 \caption{Comparison of ERM vs IRM: CF-CMNIST}
% \end{center}
\label{table_results_cmnits_cfd}
\end{table}

\textbf{D. Colored MNIST with anti-causal variables and confounded variables (HB-CMNIST)}

\textbf{D.1. Compute $\mathbb{P}(C^{e}|Y^e)$ and $\mathbb{P}(Y^{e}|X^{e})$.} 

We start by computing $\mathbb{P}(C^{e}|Y^e)$. Recall that $C^{e}  = G(Y^{e}\oplus N^{e}) + (1-G)(N\oplus N^{e})$. $G=1$ with probability $\theta$ and $0$ otherwise.  
\begin{equation}
    \mathbb{P}(C^{e}=1 |Y^{e}=1) = \frac{\mathbb{P}(C^{e}=1,Y^{e}=1)}{\mathbb{P}(C^{e}=1,Y^{e}=1) + \mathbb{P}(C^{e}=0,Y^{e}=1)}
\end{equation}

\begin{equation}
     \mathbb{P}(C^{e}=1 , Y^{e}=1) = \theta(1-\beta^{e}) + (1-\theta)(0.25 + 0.5\beta^{e}) 
\end{equation}
\begin{equation}
     \mathbb{P}(C^{e}=0 , Y^{e}=1) = \theta(\beta^{e}) + (1-\theta)(0.75 - 0.25\beta^{e}) 
\end{equation}
We used $\theta=0.8$ in the experiments. $\mathbb{P}(Y^{e}|X^{e})$ can be computed on the same lines as was shown for anti-causal and confounded model and it varies significantly across the environments.

\textbf{D.2 Graphical model for confounded CMNIST.}

In Figure \ref{fig:cfd_cmnist_combined}, we provide the graphical model for confounded CMNIST described in \eqref{cmnist_acausal_cfd_sem} (for  $0<\theta<1$).
% \begin{figure}
%     \centering
%     \includegraphics[trim={0, 2in, 0 ,3in }, width=6in]{colored_mnist_cfdd_chd_gm.pdf}
%     \caption{Anti-causal and confounded colored MNIST}
%     \label{fig:cfd_cfd_cmnist}
% \end{figure}

\textbf{D.3. Results with numerical values and standard errors.}
In Table \ref{table_results_cmnits_cfd_chd}, we provide the numerical values for the results showed in the Figure \ref{fig:cmnistvar} d) along with the standard errors.
\begin{table}[h!]
\centering
\def\arraystretch{1.5}
 \begin{tabular}{||c c c||} 
 \hline
Method & Number of samples  & Test error \\ [0.5ex] 
 \hline\hline
 ERM &  $1000$ &   $65.38 \pm 0.56$\\  
  \hline
 IRM &  $1000$ &   $66.98 \pm 0.61$\\ 
 \hline
  ERM &  $5000$ &   $62.92 \pm 0.99$\\ \hline 
 IRM &  $5000$ &  $61.09\pm 1.74$\\ 
 \hline
   ERM &  $10000$ &   $62.08 \pm 1.19$\\ \hline 
 IRM &  $10000$ &   $51.46 \pm 1.22$\\ \hline 
   ERM &  $30000$ &   $60.48 \pm 0.42$\\ \hline 
 IRM &  $30000$ &   $34.96 \pm 1.47$\\ \hline 
    ERM &  $60000$ &   $59.70 \pm 0.72$\\ \hline 
 IRM &  $60000$ &   $32.80 \pm 0.55$\\ \hline 
\end{tabular}
 \caption{Comparison of ERM vs IRM: HB-CMNIST}
% \end{center}
\label{table_results_cmnits_cfd_chd}
\end{table}
% \subsection{Classification:  supplementary results}
% Before discussing the supplementary results,  we also present the graphical models for the four types of models used below in Figure \ref{fig:sb_cmnist}, \ref{fig:cfd_cmnist}, \ref{fig:chd_cmnist}, \ref{fig:cfd_cfd_cmnist}. In Tables \ref{table_results_cmnits_sb}, \ref{table_results_cmnits_cfd}, \ref{table_results_cmnits_chd}, \ref{table_results_cmnits_cfd_chd}. we present the mean test error and standard error in the means for the figures shown in the main text. 

\begin{figure}
    \centering
    \includegraphics[trim=0 10em 0 0, width=5in]{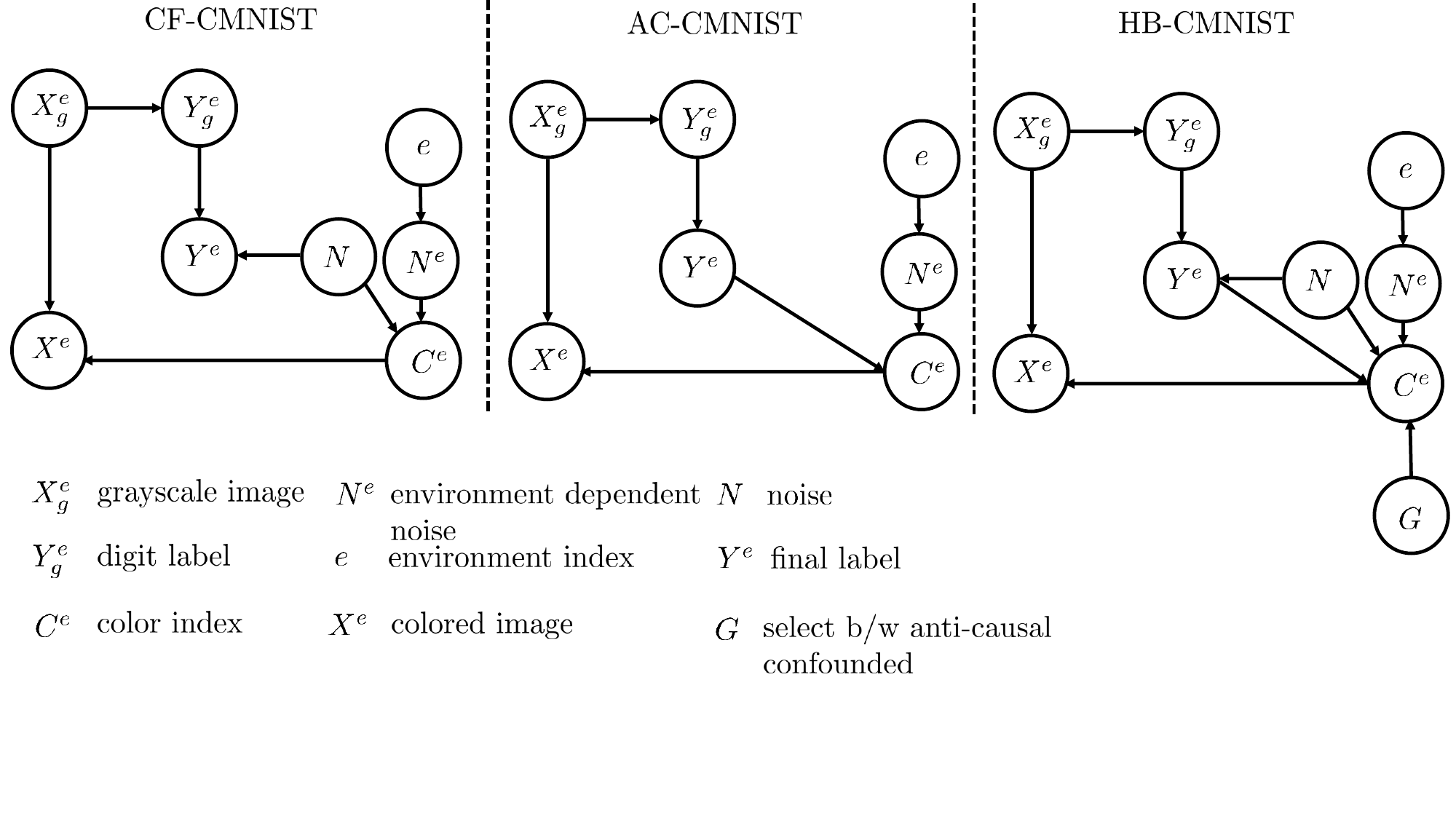}
    \caption{Graphical model for CF-CMNIST, AC-CMNIST, HB-CMNIST}
    \label{fig:cfd_cmnist_combined}
\end{figure}

\subsubsection{Regression} 
 We use the same structure for the generative model as described by \cite{arjovsky2019invariant}. We work with the four different variants on the same lines as CMNIST (covariate shift based, confounded, anti-causal, hybrid). The comparisons in \cite{arjovsky2019invariant} were for anti-causal and hybrid models. The general model is written as

\begin{equation}
    \begin{split}
      &  H^{e} \leftarrow \mathcal{N}(0, \sigma_e^{2}\mathsf{I}_{s}) \\ 
      &  X_1^{e} \leftarrow \mathcal{N}(0, \sigma_e^{2}\mathsf{I}_{s}) + W_{h \rightarrow 1}H^{e} \\ 
      &  Y^{e} \leftarrow W_{1\rightarrow y}^{e} X_{1}^{e} +  \mathcal{N}(0, \sigma_e^2)  + W_{h\rightarrow y} H^{e} \\
      & X_{2}^{e} \leftarrow W_{y\rightarrow 2}Y^{e} + \mathcal{N}(0, \mathsf{I}_s) + W_{h\rightarrow 2}H^{e}
    \end{split}
\end{equation}
$H^{e}$ is the hidden confounder, $X^{e} = [X_1^{e}$, $X_2^{e}]$ is the observed covariate vector, $Y^{e}$ is the label. Different $W$'s correspond to the weight vectors that multiply with the covariates and the confounders. The four datasets differ in the weight $W$ vectors and we describe them below. $\sigma_e$ is environment dependent standard deviation, we use two training environments with $\sigma_1=0.2$ and $\sigma_2=2.0$ respectively and both environments .  

\begin{itemize}
    \item Covariate shift case (CS-regresion): In this case, we  fix $W_{h \rightarrow 2}$, $W_{h\rightarrow y}$, and $W_{y\rightarrow 2}$ to zero and we draw each entry of  $W_{1\rightarrow y}$ from $ \frac{1}{s}\mathcal{N}(0,1)$ and set $W_{h \rightarrow 1}$ to identity.
    \item Confounded variable case (CF-regression): Set $W_{y\rightarrow 2}$ to zero and we draw each entry in $W_{1\rightarrow y}, W_{h \rightarrow 1}, W_{h \rightarrow 2}, W_{h\rightarrow y}$ from $ \frac{1}{s}\mathcal{N}(0,1)$.
    \item Anti-causal variable case (AC-regression): Set
$W_{h\rightarrow y}$,$W_{h\rightarrow 1}$ and $W_{h\rightarrow 2}$ to zero and draw each entry of  $W_{1\rightarrow y}$ and $W_{y\rightarrow 2}$ from $\frac{1}{s} \mathcal{N}(0,1)$.

\item Hybrid confounded and anti-causal variable case (HB-regression): Draw each
$W_{h\rightarrow y}$,$W_{h\rightarrow 1}$, $W_{h\rightarrow 2}$, $W_{1\rightarrow y}$ and $W_{y\rightarrow 2}$ from $\frac{1}{s} \mathcal{N}(0,1)$.
\end{itemize}
We also present the graphical models for the four types of models used  in Figure \ref{fig:graph_model_regn}. 

\begin{figure}
    \centering
    \includegraphics[trim={0, 3in, 0 ,0 }, width=6in]{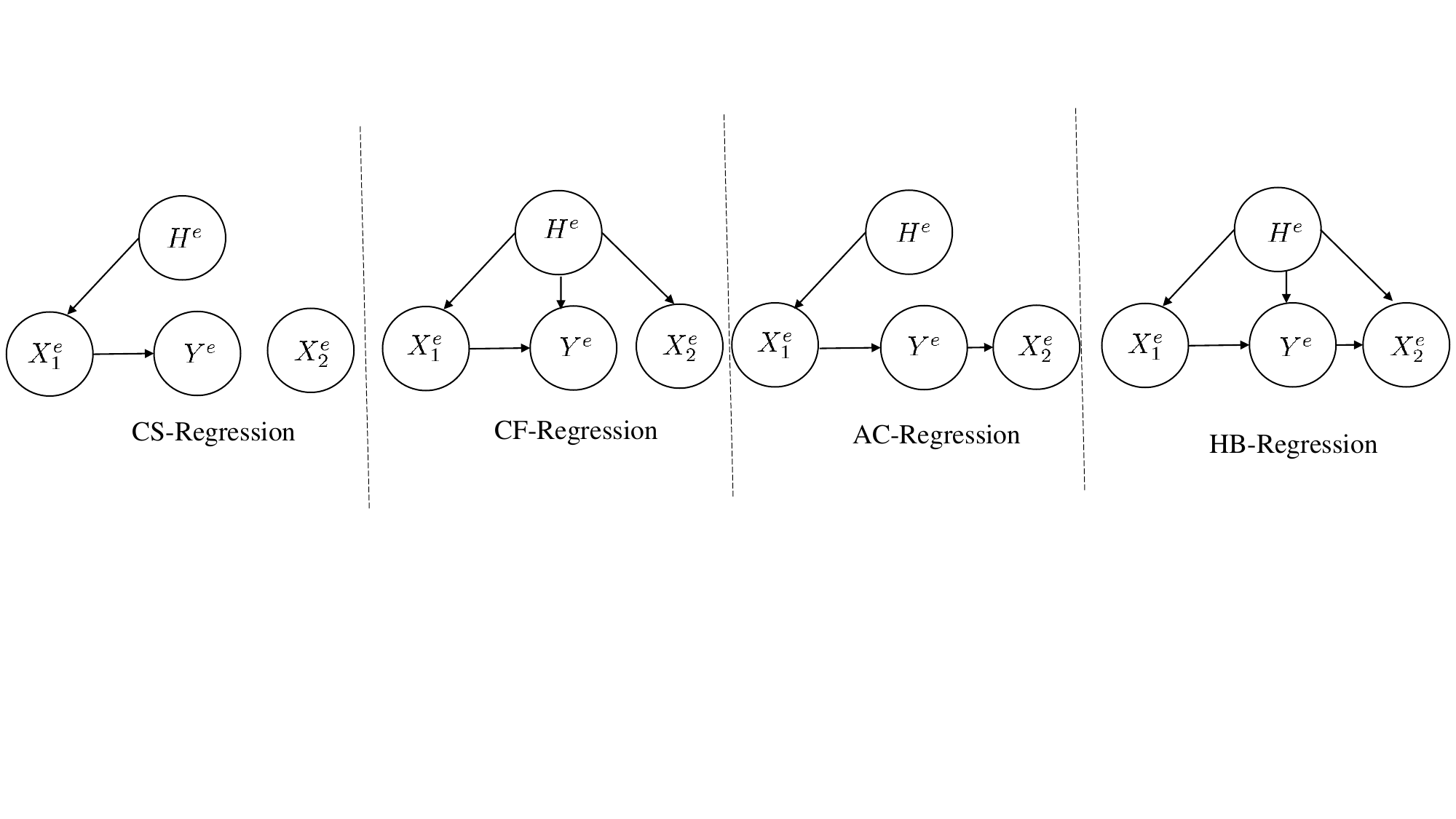}
    \caption{Graphical models for the different regression datasets}
    \label{fig:graph_model_regn}
\end{figure}

\subsubsection{Choice of $\mathcal{H}_{\Phi}$ and other training details}
We use a linear model that takes as input $X^{e}$. For ERM we use standard linear regression from sklearn. For IRM, we use 50k gradient steps with learning rate 1e-3, batch size is equal to the size of the training data. We use the train domain validation set procedure described by \cite{gulrajani2020search} to select the penalty value from the set 
$\{0, 1e-5, 1e-4, 1e-3, 1e-2, 1e-1\}$ (with 4:1 train-validation split). We average the results over $25$ trials.

\subsubsection{Results} We discuss results for the case when the length of the covariate vector $X^e$ is 10. The desired optimal invariant predictor is $W^{*} = [W_{1\rightarrow y},0]$. We will compare ERM and IRM in terms of the model estimation error, i.e., the distance between the model estimated by the method $\hat{W}$ and the true model given as  $\|\hat{W} - W^{*}\|^2$.
In Figures \ref{fig:dim10_sb}, \ref{fig:dim10_cfd}, \ref{fig:dim10_chd}, \ref{fig:dim10_chd_cfd}, we compare the model estimation error vs. the number of samples when the model. In these comparisons, we see that consistent with the classification experiments and predictions from Proposition \ref{prop4: emr_irm_real} in the covariate shift case (See Figure \ref{fig:dim10_sb}) there is no clear winner between the two approaches. There are gains from using IRM in the other cases (Figures \ref{fig:dim10_cfd}, \ref{fig:dim10_chd}, \ref{fig:dim10_chd_cfd}). However, in the confounder case in Figure \ref{fig:dim10_cfd}, the gains from IRM appear in the low sample regime but are not there in the high sample regime. This is because for this setup the asymptotic bias of ERM is also very small. In addition to the Figures \ref{fig:dim10_sb}, \ref{fig:dim10_cfd}, \ref{fig:dim10_chd}, \ref{fig:dim10_chd_cfd}, we   provide the tables (see Table \ref{table_suplement_reg10_real}, \ref{table_suplement_reg10_cfd}, \ref{table_suplement_reg10_chd}, \ref{table_suplement_reg10_cfd_chd}) with the numerical values for the mean model estimation (and the standard error) error shown in the figures.

% We also carried out the experiments for the case when the length of covariate vector is $50$.  In Figures \ref{fig:dim50_sb}, \ref{fig:dim50_cfd}, \ref{fig:dim50_chd}, \ref{fig:dim50_chd_cfd}, we present the plots for model estimation error vs. the number of samples for the case when the dimension of the covariate vector is $50$. For both 10 and 50 dimensional case, 

\begin{figure}[!h]
\centering
\begin{minipage}{.5\textwidth}
  \centering
  \includegraphics[width=3in]{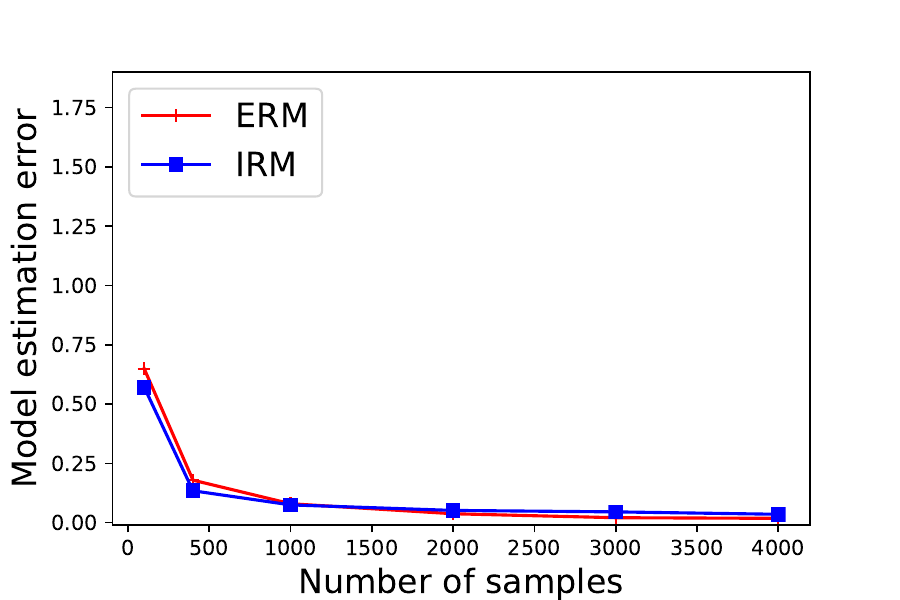}
   \captionof{figure}{Comparisons: $n= 10$ CS-regression}
  \label{fig:dim10_sb}
\end{minipage}%
\begin{minipage}{.5\textwidth}
  \centering
  \includegraphics[width=3in]{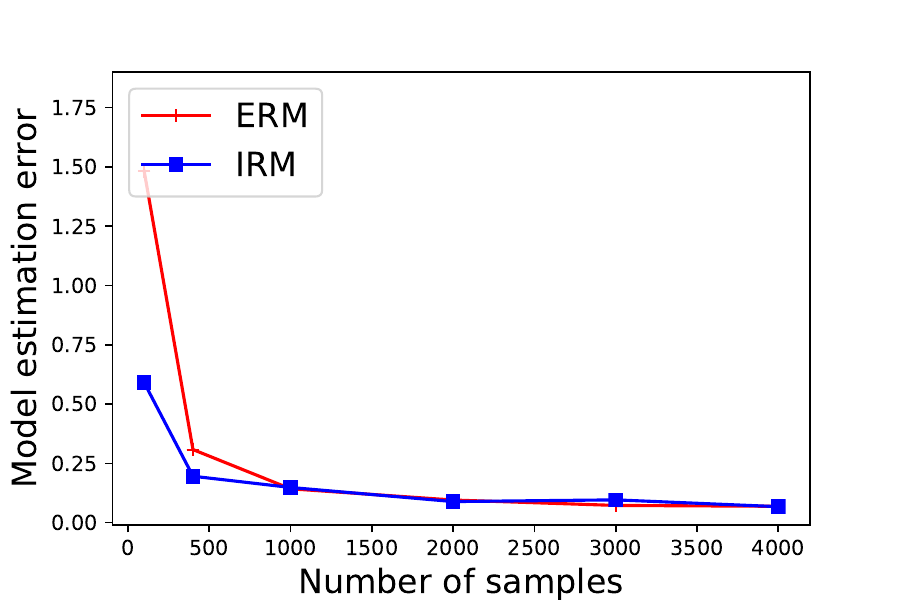}
   \captionof{figure}{Comparisons: $n= 10$ CF-regression}
  \label{fig:dim10_cfd}
\end{minipage}
\end{figure}
\begin{figure}[!h]
\centering
\begin{minipage}{.5\textwidth}
  \centering
  \includegraphics[width=3in]{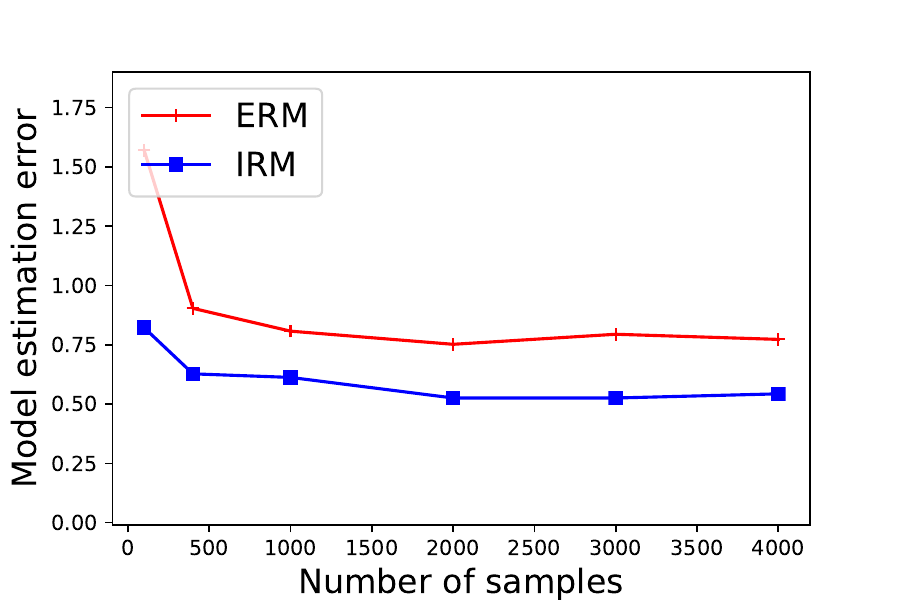}
   \captionof{figure}{Comparisons: $n= 10$ AC-regression}
  \label{fig:dim10_chd}
\end{minipage}%
\begin{minipage}{.5\textwidth}
  \centering
  \includegraphics[width=3in]{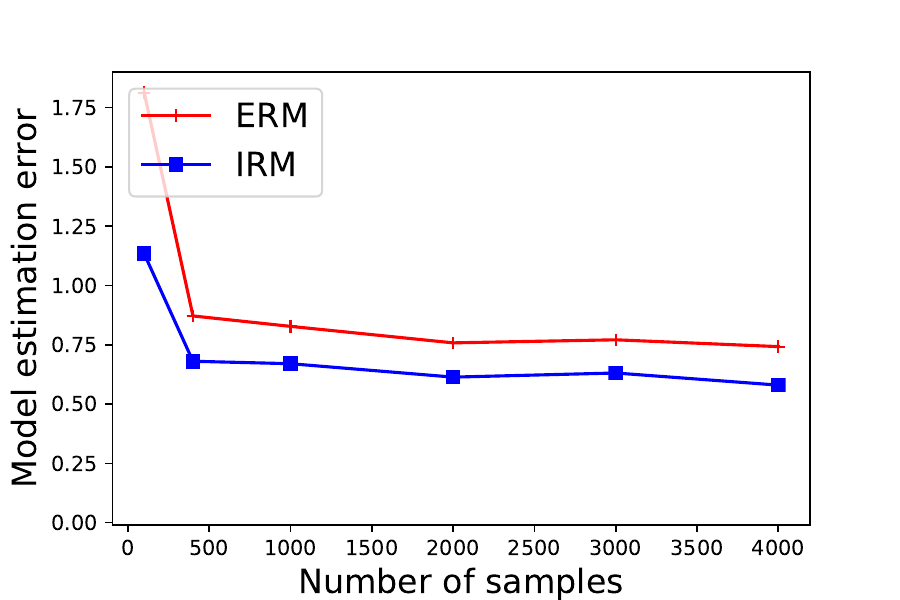}
   \captionof{figure}{Comparisons: $n= 10$ HB-regression}
  \label{fig:dim10_chd_cfd}
\end{minipage}
\end{figure}

% \subsubsection{Regression: supplementary results}
% Before discussing the supplementary results, 

% \begin{figure}[!h]
% \centering
% \begin{minipage}{.5\textwidth}
%   \centering
%   \includegraphics[width=3in]{dim_50_sb_rescaled_new.pdf}
%   \captionof{figure}{Comparisons: $n= 50$ covariate shift}
%   \label{fig:dim50_sb}
% \end{minipage}%
% \begin{minipage}{.5\textwidth}
%   \centering
%   \includegraphics[width=3in]{dim_50_cfd_rescaled_new.pdf}
%   \captionof{figure}{Comparisons: $n= 50$ confounded}
%   \label{fig:dim50_cfd}
% \end{minipage}
% \end{figure}
% \begin{figure}[!h]
% \centering
% \begin{minipage}{.5\textwidth}
%   \centering
%   \includegraphics[width=3in]{dim_50_chd_rescaled_new.pdf}
%      \captionof{figure}{Comparisons: $n= 50$ anti-causal}
%   \label{fig:dim50_chd}
% \end{minipage}%
% \begin{minipage}{.5\textwidth}
%   \centering
%   \includegraphics[width=3in]{dim_50_chd_rescaled_new.pdf}
%       \captionof{figure}{Comparisons: $n= 50$ hybrid}
%   \label{fig:dim50_chd_cfd}
% \end{minipage}
% \end{figure}

\begin{table}[h!]
\centering
\def\arraystretch{1.5}
 \begin{tabular}{||c c c||} 
 \hline
Method & Number of samples  & Model estimation error \\ [0.5ex] 
 \hline\hline
 ERM &  $50$ &   $0.65 \pm 0.06$\\  
  \hline
 IRM &  $50$ & $0.57 \pm 0.10$\\ 
 \hline
  ERM &  $200$ &   $0.18 \pm 0.02$\\ \hline 
 IRM &  $200$ & $0.13 \pm 0.02$\\ 
 \hline
   ERM &  $500$ &   $0.08 \pm 0.005$\\ \hline 
 IRM &  $500$ & $0.08 \pm 0.014$\\ \hline 
   ERM &  $1000$ &   $0.037 \pm 0.004$\\ \hline 
 IRM &  $1000$ & $0.051 \pm 0.007$\\ \hline 
    ERM &  $1500$ &   $0.021 \pm 0.002$\\ \hline 
 IRM &  $1500$ & $0.045 \pm 0.007$\\ \hline 
     ERM &  $2000$ &   $0.018 \pm 0.002$\\ \hline 
 IRM &  $2000$ & $0.035 \pm 0.006$\\ \hline 
\end{tabular}
 \caption{Comparison of ERM vs IRM: $n=10$ CS-regression}
% \end{center}
\label{table_suplement_reg10_real}
\end{table}

\begin{table}[h!]
\centering
\def\arraystretch{1.5}
 \begin{tabular}{||c c c||} 
 \hline
Method & Number of samples  & Model estimation error \\ [0.5ex] 
 \hline\hline
 ERM &  $50$ &   $1.48 \pm 0.15$\\  
  \hline
 IRM &  $50$ & $0.59 \pm 0.10$\\ 
 \hline
  ERM &  $200$ &   $0.30 \pm 0.03$\\ \hline 
 IRM &  $200$ & $0.20 \pm 0.03$\\ 
 \hline
   ERM &  $500$ &   $0.14 \pm 0.01$\\ \hline 
 IRM &  $500$ & $0.15 \pm 0.02$\\ \hline 
   ERM &  $1000$ &   $0.10 \pm 0.02$\\ \hline 
 IRM &  $1000$ & $0.09 \pm 0.01$\\ \hline 
    ERM &  $1500$ &   $0.07 \pm 0.01$\\ \hline 
 IRM &  $1500$ & $0.10 \pm 0.01$\\ \hline 
     ERM &  $2000$ &   $0.07 \pm 0.01$\\ \hline 
 IRM &  $2000$ & $0.07 \pm 0.01$\\ \hline 
\end{tabular}
 \caption{Comparison of ERM vs IRM: $n=10$ CF-regression}
% \end{center}
\label{table_suplement_reg10_cfd}
\end{table}

\begin{table}[h!]
\centering
\def\arraystretch{1.5}
 \begin{tabular}{||c c c||} 
 \hline
Method & Number of samples  & Model estimation error \\ [0.5ex] 
 \hline\hline
 ERM &  $50$ &   $1.57 \pm 0.18$\\  
  \hline
 IRM &  $50$ & $0.82 \pm 0.20$\\ 
 \hline
  ERM &  $200$ &   $0.90 \pm 0.08$\\ \hline 
 IRM &  $200$ & $0.63 \pm 0.08$\\ 
 \hline
   ERM &  $500$ &   $0.81 \pm 0.08$\\ \hline 
 IRM &  $500$ & $0.61 \pm 0.05$\\ \hline 
   ERM &  $1000$ &   $0.75 \pm 0.07$\\ \hline 
 IRM &  $1000$ & $0.53 \pm 0.04$\\ \hline 
    ERM &  $1500$ &   $0.79 \pm 0.07$\\ \hline 
 IRM &  $1500$ & $0.53 \pm 0.05$\\ \hline 
     ERM &  $2000$ &   $0.77 \pm 0.08$\\ \hline 
 IRM &  $2000$ & $0.54 \pm 0.04$\\ \hline 
\end{tabular}
 \caption{Comparison of ERM vs IRM: $n=10$ AC-regression}
% \end{center}
\label{table_suplement_reg10_chd}
\end{table}
\begin{table}[h!]
\centering
\def\arraystretch{1.5}
 \begin{tabular}{||c c c||} 
 \hline
Method & Number of samples  & Model estimation error \\ [0.5ex] 
 \hline\hline
 ERM &  $50$ &   $1.81 \pm 0.19$\\  
  \hline
 IRM &  $50$ & $1.13 \pm 0.18$\\ 
 \hline
  ERM &  $200$ &   $0.87 \pm 0.09$\\ \hline 
 IRM &  $200$ & $0.68 \pm 0.10$\\ 
 \hline
   ERM &  $500$ &   $0.83 \pm 0.09$\\ \hline 
 IRM &  $500$ & $0.67 \pm 0.07$\\ \hline 
   ERM &  $1000$ &   $0.76 \pm 0.07$\\ \hline 
 IRM &  $1000$ & $0.61 \pm 0.05$\\ \hline 
    ERM &  $1500$ &   $0.77 \pm 0.07$\\ \hline 
 IRM &  $1500$ & $0.63 \pm 0.05$\\ \hline 
     ERM &  $2000$ &   $0.74 \pm 0.07$\\ \hline 
 IRM &  $2000$ & $0.58 \pm 0.05$\\ \hline 
\end{tabular}
 \caption{Comparison of ERM vs IRM: $n=10$ HB-regression}
% \end{center}
\label{table_suplement_reg10_cfd_chd}
\end{table}

% \textbf{Illustrating IRM and OOD connection.}   We use a simplified version of the model described by \cite{peters2015causal}. In each environment $e$, the random variable $X^{e}=[X_1^{e},...,X_{n}^{e}]$ corresponds to the feature vector and $Y^{e}$ corresponds to the label. The data for each environment is generated by i.i.d. sampling $(X^{e},Y^{e})$ from the following generative model. Assume a subset $S^{*} \subset \{1,...,n\}$ is causal for the label $Y^{e}$.  For each environment $e\in \mathcal{E}_{all}$, $X^e$ has an arbitrary distribution and $Y^{e} \leftarrow g(X^{e}_{S^{*}}) + \upsilon^{e}$
% where $X^{e}_{S^{*}}$ is the vector $X^{e}$ with indices in $S^{*}$,   $g:\mathbb{R}^{|S^{*}|} \rightarrow \mathbb{R}$ is a function to describe the conditional expectation and  $\mathbb{E}^{e}[\upsilon^e]=0$, $\mathbb{E}^{e}[(\upsilon^e)^{2}]=\sigma^2$, $\upsilon^{e} \perp  X^{e}_{S^{*}}$. We fix the representation $\Phi^{*}(X^{e}) = X^{e}_{S^{*}}$. From Proposition \ref{assm1:ood_cond_envmt}, if $g\circ \Phi^{*} \in \mathcal{H}_{w\circ \Phi}$, then $g\circ \Phi^{*}$ solves the OOD problem in \eqref{eqn1: ood}. Use $\Phi^{*}$ as the representation in the constraint in \eqref{eqn: IRM}, the optimal classifier $w$ among all the functions is $g$. Since the optimal classifier does not vary across environments,  $g$ is an invariant predictor across all $\mathcal{E}_{tr}$. IRM framework hopes that by enforcing the constraint in \eqref{eqn: IRM} predictor $g$ is selected. 
\pagebreak
\subsection{Proofs for the Propositions}

In the results to follow, we will rely on Hoeffding's inequality. We restate the inequality below for convenience. 

\begin{lemma} \label{lemma1: hoeffding} (Hoeffding's inequality). Let $\theta_1,\dots \theta_m$ be a sequence of i.i.d. random variables and assume that for all $i$, $\mathbb{E}[\theta_i]=\mu$ and $\mathbb{P}[a \leq \theta_i\leq b] = 1$. Then, for any $\epsilon>0$
$$\mathbb{P}\bigg[\frac{1}{m}\big|\sum_{i=1}^{m}\theta_i - \mu\big|>\epsilon\bigg] \leq 2 \exp(-2\frac{m \epsilon^2}{(b-a)^2})$$

\end{lemma}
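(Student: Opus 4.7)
This is the classical Hoeffding concentration inequality; the plan is the standard Chernoff-bound approach combined with Hoeffding's lemma. First I would reduce the two-sided statement to one-sided tails via a union bound, which is where the factor of $2$ in the conclusion will come from. Set $Z_i = \theta_i - \mu$, so that each $Z_i$ is zero-mean and lies in an interval of length $b - a$, and focus on the upper tail $\mathbb{P}[\frac{1}{m}\sum_i Z_i > \epsilon]$.

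For any $t > 0$, I apply Markov's inequality to the moment generating function of $\sum_i Z_i$ and use independence to factor:
\begin{equation*}
\mathbb{P}\Bigl[\tfrac{1}{m}\sum_{i=1}^m Z_i > \epsilon\Bigr] = \mathbb{P}\bigl[e^{t \sum_i Z_i} > e^{t m \epsilon}\bigr] \leq e^{- t m \epsilon} \, \mathbb{E}\bigl[e^{t \sum_i Z_i}\bigr] = e^{- t m \epsilon} \prod_{i=1}^m \mathbb{E}[e^{t Z_i}].
\end{equation*}

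The key technical step, and the main obstacle, is Hoeffding's lemma: for a zero-mean random variable $Z$ taking values in an interval of length $w$, one has $\mathbb{E}[e^{t Z}] \leq \exp(t^2 w^2 / 8)$. I would prove this by writing $Z$ as a convex combination of the two endpoints of its support, exploiting convexity of $u \mapsto e^{t u}$ to upper bound $\mathbb{E}[e^{t Z}]$ by a deterministic function $\phi(t)$ depending only on $t$ and the endpoints, and then analyzing $\log \phi(t)$ by a second-order Taylor expansion around $t = 0$. One verifies that the zeroth- and first-order terms vanish while the second derivative is bounded uniformly by $w^2/4$, which yields the claimed sub-Gaussian bound via Taylor's theorem with remainder.

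Plugging Hoeffding's lemma into the Chernoff bound with $w = b - a$ gives $\mathbb{P}[\frac{1}{m}\sum_i Z_i > \epsilon] \leq \exp(- t m \epsilon + m t^2 (b-a)^2 / 8)$ for every $t > 0$. The right-hand side is minimized at $t^{*} = 4 \epsilon / (b-a)^2$, producing the tail bound $\exp(- 2 m \epsilon^2 / (b-a)^2)$. Running the identical argument with $-Z_i$ in place of $Z_i$ controls the lower tail by the same quantity, and a final union bound over the two one-sided events produces the stated factor-of-two two-sided inequality.
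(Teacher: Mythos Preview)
Your proof plan is correct and is the standard textbook derivation of Hoeffding's inequality via the Chernoff method and Hoeffding's lemma. Note, however, that the paper does not actually prove this lemma: it is simply restated ``for convenience'' as a classical result and then invoked throughout the later sample-complexity arguments, so there is no paper proof to compare against.
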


We restate the propositions from the main body. Next, we prove Proposition \ref{prop1: ood} from the main body of the manuscript. 
\begin{proposition}
\label{prop1: ood_supplement}
If $\ell$ is square loss, and Assumptions \ref{assm1:ood_cond_envmt},\ref{assm2:ood_cond_envmt} hold, then  $m\circ \Phi^{*}$  solves the OOD problem (\eqref{eqn1: ood}).

% If Assumptions \ref{assm1:ood_cond_envmt},\ref{assm2:ood_cond_envmt} hold and $\ell$ is square loss, then  $m\circ \Phi^{*}$  solves the OOD problem in \eqref{eqn1: ood}.
\end{proposition}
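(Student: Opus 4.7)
The plan is a standard bias-variance style argument tailored to the min-max OOD setting. I would proceed in two steps: first, compute the worst-case risk actually achieved by $m\circ\Phi^{*}$, and second, show that no other predictor can attain a smaller worst-case risk. Assumption \ref{assm1:ood_cond_envmt} will carry the first step and Assumption \ref{assm2:ood_cond_envmt} is what unlocks the second.

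For the first step I would fix an arbitrary $e\in\mathcal{E}_{all}$, condition on $Z^{e}=\Phi^{*}(X^{e})$ via the tower rule, and use the orthogonality of conditional expectation. Because $m(Z^{e})=\mathbb{E}^{e}[Y^{e}|Z^{e}]$ by definition, the cross term vanishes and
\[
R^{e}(m\circ\Phi^{*})=\mathbb{E}^{e}\bigl[(m(Z^{e})-Y^{e})^{2}\bigr]=\mathbb{E}^{e}\bigl[\mathsf{Var}^{e}[Y^{e}|Z^{e}]\bigr]=\xi^{2},
\]
where the last equality uses the constant conditional variance part of Assumption \ref{assm1:ood_cond_envmt}. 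Since this holds in every $e\in\mathcal{E}_{all}$, we get $\max_{e\in\mathcal{E}_{all}} R^{e}(m\circ\Phi^{*})=\xi^{2}$.

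For the second step I would pick, via Assumption \ref{assm2:ood_cond_envmt}, the special environment $\tilde e$ in which $Y^{\tilde e}\perp X^{\tilde e}\mid Z^{\tilde e}$. Since $Z^{\tilde e}$ is a deterministic function of $X^{\tilde e}$, conditional independence gives the two chained identities $\mathbb{E}^{\tilde e}[Y^{\tilde e}|X^{\tilde e}]=\mathbb{E}^{\tilde e}[Y^{\tilde e}|X^{\tilde e},Z^{\tilde e}]=\mathbb{E}^{\tilde e}[Y^{\tilde e}|Z^{\tilde e}]=m(Z^{\tilde e})$ and $\mathsf{Var}^{\tilde e}[Y^{\tilde e}|X^{\tilde e}]=\mathsf{Var}^{\tilde e}[Y^{\tilde e}|Z^{\tilde e}]=\xi^{2}$. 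The usual bias-variance split in environment $\tilde e$ then yields, for any $f\in\mathcal{F}$,
\[
R^{\tilde e}(f)=\mathbb{E}^{\tilde e}\bigl[(f(X^{\tilde e})-m(Z^{\tilde e}))^{2}\bigr]+\xi^{2}\ \ge\ \xi^{2},
\]
so $\max_{e\in\mathcal{E}_{all}} R^{e}(f)\ge R^{\tilde e}(f)\ge\xi^{2}$, matching what $m\circ\Phi^{*}$ achieves and giving min-max optimality.

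The two applications of the conditional variance decomposition are routine; the only subtle point, and the one I would be most careful about, is the chain justifying $\mathbb{E}^{\tilde e}[Y^{\tilde e}|X^{\tilde e}]=m(Z^{\tilde e})$ and the analogous variance identity, which rests on $Z^{\tilde e}$ being $\sigma(X^{\tilde e})$-measurable together with the conditional independence from Assumption \ref{assm2:ood_cond_envmt}. Without Assumption \ref{assm2:ood_cond_envmt} the second step breaks: in a generic environment one can only lower-bound $R^{e}(f)$ by $\mathbb{E}^{e}[\mathsf{Var}^{e}[Y^{e}|X^{e}]]$, which may be strictly smaller than $\xi^{2}$, and then an adversary could in principle pick an environment where exploiting non-invariant features of $X^{e}$ beats $m\circ\Phi^{*}$.
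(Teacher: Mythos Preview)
Your proof is correct and follows essentially the same two-step bias--variance argument as the paper: Assumption~\ref{assm1:ood_cond_envmt} pins $R^{e}(m\circ\Phi^{*})=\xi^{2}$ everywhere, and Assumption~\ref{assm2:ood_cond_envmt} gives a witness environment in which every $f$ has risk at least $\xi^{2}$. The only cosmetic difference is that in the second step you condition on $X^{\tilde e}$ and first derive $\mathbb{E}^{\tilde e}[Y^{\tilde e}\mid X^{\tilde e}]=m(Z^{\tilde e})$ and $\mathsf{Var}^{\tilde e}[Y^{\tilde e}\mid X^{\tilde e}]=\xi^{2}$, whereas the paper conditions on $Z^{\tilde e}$ and invokes conditional independence directly to kill the cross term; both routes are equivalent.
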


\begin{proof}
Define a predictor $w\circ \Phi^{*}$, where $\Phi^{*}$ is defined in Assumption \ref{assm1:ood_cond_envmt}. Let us simplify the expression for the risk for this predictor $R^{e}(w\circ \Phi^{*})$ using square loss for $\ell$. Recall $Z^{e} = \Phi^{*}(X^{e})$

\begin{equation}
\begin{split}
   & R^{e}(w\circ \Phi^{*}) = \mathbb{E}^{e}\bigg[\Big(Y^{e}- \mathbb{E}^{e}\big[Y^{e}\big|Z^{e}\big] +\mathbb{E}^{e}\big[Y^{e}\big|Z^{e}\big]  -(w\circ \Phi^{*})(X^{e})\Big)^2\bigg]\\ &= \mathbb{E}^{e}\bigg[\big(Y^{e}- \mathbb{E}^{e}\big[Y^{e}\big|Z^{e}\big]\big)^2\bigg] + \mathbb{E}^{e}\bigg[\big( \mathbb{E}^{e}\big[Y^{e}\big|Z^{e}\big]-(w\circ \Phi^{*})(X^{e})\big)^2\bigg]  + \\ & \;\;\;\;\; 2\mathbb{E}^{e}\bigg[\big(Y^{e}- \mathbb{E}^{e}\big[Y^{e}\big|Z^{e}\big]\big)\big( \mathbb{E}^{e}\big[Y^{e}\big|Z^{e}\big]-(w\circ \Phi^{*})(X^{e})\big)\bigg] \\ 
 &=   \mathbb{E}^{e}\bigg[\big(Y^{e}- m(Z^{e})\big)^2\bigg] + \mathbb{E}^{e}\bigg[\big( m(Z^{e})-w(Z^{e})\big)^2\bigg] + 2\mathbb{E}^{e}\bigg[\big(Y^{e}- m(Z^{e})\big)\big(m(Z^{e})-w(Z^{e})\big)\bigg] \\ 
  &=   \mathbb{E}^{e}\bigg[\big(Y^{e}- m(Z^{e})\big)^2\bigg] + \mathbb{E}^{e}\bigg[\big( m(Z^{e})-w(Z^{e})\big)^2\bigg]     \\ 
  &= \xi^{2} + \mathbb{E}^{e}\bigg[\big( m(Z^{e})-w(Z^{e})\big)^2\bigg] 
\end{split}
\label{proof_prop1:eqn1}
\end{equation}

In the above simplification in \eqref{proof_prop1:eqn1}, we use the following \eqref{proof_prop1:eqn2} and \eqref{proof_prop1:eqn3}, which rely on the law of total expectation. 
\begin{equation}
    \begin{split}
      &  \mathbb{E}^{e}\bigg[\Big(Y^{e}- m(Z^{e})\Big)\Big( m(Z^{e})-w(Z^{e})\Big)\bigg]   =  \mathbb{E}^{e}\bigg[\mathbb{E}^{e}\Big[\Big(Y^{e}- m(Z^{e})\Big)\Big(m(Z^{e}
      )-w(Z^{e})\Big)\Big|Z^{e}\Big]\bigg] \\
      & = \mathbb{E}^{e}\bigg[\Big(\mathbb{E}^{e}\big[Y^{e}\big|Z^{e}\big]- m(Z^{e})\Big)\Big( m(Z^{e})-w(Z^{e})\Big)\bigg] =0
    \end{split}
    \label{proof_prop1:eqn2}
\end{equation}

\begin{equation}
    \begin{split}
        \mathbb{E}^{e}\bigg[\Big(Y^{e}- m(Z^{e})\Big)^2\bigg]  =  \mathbb{E}^{e}\bigg[\mathbb{E}^{e}\Big[\Big(Y^{e}- m(Z^{e})\Big)^2\Big|Z^{e}\Big]\bigg] = \mathbb{E}^{e}\bigg[\mathsf{Var}^{e}\big[Y^{e}\big|Z^{e}\big]\bigg] = \xi^2
    \end{split}
        \label{proof_prop1:eqn3}
\end{equation}
In the last equality in \eqref{proof_prop1:eqn3}, we use the Assumption \ref{assm1:ood_cond_envmt} and obtain $\xi^2$.  Therefore, substituting $w=m$ in \eqref{proof_prop1:eqn1} achieves a risk of $\xi^2$ for all the environments. 
\begin{equation}
    \forall e \in \mathcal{E}_{all},\; R^{e}(m\circ\Phi^{*}) = \xi^2
    \label{proof_prop1:eqn9}
\end{equation}

% \begin{equation}
% \begin{split}
%     R^{e}(w\circ \Phi^{*})
%     \end{split}
%     \label{proof_prop1:eqn4}
% \end{equation}
% This establishes that $m$ is an invariant predictor across all the environments in $\mathcal{E}_{tr}_{all}$ with $\Phi^{*}$ as the representation. The risk achieved by $m\circ\Phi^{*}$ is $\sigma^{2}$ for all the environments and $m\circ \Phi^{*} \in \mathcal{S}^{\mathsf{IV}}\subseteq \mathcal{H}_{w\circ \Phi}$. 
Consider the environment $q$ that satisfies Assumption \ref{assm2:ood_cond_envmt}. Let us simplify the expression for the risk achieved by every predictor $f\in \mathcal{F}$ in the  environment $q$  following the steps similar to \eqref{proof_prop1:eqn1}. 
\begin{equation}
\begin{split}
    &R^{q}(f) = \mathbb{E}^{q}\bigg[\Big(Y^{q}- \mathbb{E}^{q}[Y^{q}|Z^{q}]\Big)^2\bigg] +\\& \mathbb{E}^{q}\bigg[\Big( \mathbb{E}^{q}[Y^{q}|Z^{q}]-f(X^{q})\Big)^2\bigg]   + 2\mathbb{E}^{q}\bigg[\big(Y^{q}- \mathbb{E}^{q}[Y^{q}|Z^{q}]\big)\Big( \mathbb{E}^{q}[Y^{q}|Z^{q}]-f(X^{q})\Big)\bigg] \\ 
 &=   \mathbb{E}^{q}\bigg[\Big(Y^{q}- m(Z^{q})\Big)^2\bigg] + \mathbb{E}^{q}\bigg[\big( m(Z^{q})-f(X^{q})\Big)^2\bigg]   + 2\mathbb{E}^{q}\bigg[\Big(Y^{q}- m(Z^{q})\Big)\Big(m(Z^{q})-f(X^{q})\Big)\bigg] \\ 
 & =   \mathbb{E}^{q}\bigg[\Big(Y^{q}- m(Z^{q})\Big)^2\bigg] + \mathbb{E}^{q}\bigg[\Big( m(Z^{q})-f(X^{q})\Big)^2\bigg]     \\ 
  &= \xi^{2} + \mathbb{E}^{q}\bigg[\Big( m(Z^{q})-f(X^{q})\Big)^2\bigg] 
\end{split}
\label{proof_prop1:eqn5}
\end{equation}

In the above simplification in \eqref{proof_prop1:eqn5}, we use the following \eqref{proof_prop1:eqn6}

\begin{equation}
    \begin{split}
      &  \mathbb{E}^{q}\bigg[\Big(Y^{q}- m(Z^{q})\Big)\Big( m(Z^{q})-f(X^q))\Big)\bigg]   =  \mathbb{E}^{q}\bigg[\mathbb{E}^{e}\Big[\Big(Y^{e}- m(Z^{q})\Big)\Big(m(Z^{q}
      )-f(X^{q})\Big)\Big|Z^{q}\Big]\bigg]\\
      & = \mathbb{E}^{q}\bigg[\Big(\mathbb{E}^{q}\big[Y^{q}\big|Z^{q}\big]- m(Z^{q})\Big)\Big( m(Z^{q})-f(X^{q})\Big)\bigg] =0\; \text{(Last equality follows from the  Assumption \ref{assm2:ood_cond_envmt})}
    \end{split}
    \label{proof_prop1:eqn6}
\end{equation}

Therefore,  for environment $q$ satisfying Assumption \ref{assm2:ood_cond_envmt} from \eqref{proof_prop1:eqn5}, it follows that for all $f\in \mathcal{F}$, $R^{q}(f)\geq \xi^{2}$. Therefore, we can write that 

\begin{equation}
\forall f \in \mathcal{F},\;  \max_{e\in \mathcal{E}_{all}}R^{e}(f)  \geq R^{q}(f) \geq \xi^2
  \label{proof_prop1:eqn8}
\end{equation}

Therefore, from \eqref{proof_prop1:eqn8} it directly follows that 
\begin{equation}
  \min_{f\in \mathcal{F} }  \max_{e\in \mathcal{E}_{all}}R^{e}(f)  \geq \xi^2
  \label{proof_prop1:eqn7}
\end{equation}

We showed in \eqref{proof_prop1:eqn9}, $R^{e}(m\circ \Phi^{*}) = \xi^{2}$ for all the environments.

Hence, $f=m\circ \Phi^{*}$ achieves the RHS of \eqref{proof_prop1:eqn7}. This completes the proof. 
\end{proof}

Some of the proofs that we describe next take a few  intermediate steps to build. Here we give a brief preview of the key ingredients that we developed to build these propositions.
\begin{itemize}
    \item In Proposition \ref{prop2: scomp_irm}, our goal is to carry out a sample complexity analysis of EIRM in the same spirit as ERM. However, there are two key challenges that we are faced with -  i) the IRM penalty $R^{'}$ is not separable (as it is composed of terms involving squares of expectations) and ii) unlike ERM, IRM is a constrained optimization problem. To deal with i), we develop an estimator in the next section that allows us to re-express IRM penalty in a separable fashion.  To deal with ii), we define a parameter $\kappa$ that measures the minimum separation between IRM penalty and $\epsilon$. We show that as long as this separation for all the predictors in the hypothesis class is positive, then we can rely on $\kappa$-representative property from \cite{shalev2014understanding} applied to the new estimator that we build to show that the set of empirical invariant predictors $\hat{\mathcal{S}}^{\mathsf{IV}}(\epsilon)$ are the same as exact invariant predictors $\mathcal{S}^{\mathsf{IV}}(\epsilon)$. 
    \item In Proposition \ref{prop5: irm_cfd_chd}, our goal is to show that approximate OOD can be achieved by IRM in the finite sample regime. This result builds on the infinite sample result from \cite{arjovsky2019invariant}. In Theorem 9 in \cite{arjovsky2019invariant}, it was shown that for linear models (defined in Assumption \ref{assm6: linear_model}) obeying linear general position, if the gradient constraints in the exact gradient constraint IRM \eqref{eqn: IRM_grad_cons} are satisfied, then the OOD solution is achieved. We extend this result  to show that if the constraints in the approximate $\epsilon$ penalty based IRM in \eqref{eqn: IRM_cons_swap} are satisfied, then we are guaranteed to be in the $\sqrt{\epsilon}$ neighborhood of the OOD solution. Note that  this result is again in the infinite sample regime as it proves the approximation for solutions of the problem \eqref{eqn: IRM_cons_swap}, which involves expectations w.r.t true distributions. Next, we exploit similar tools that we introduced to prove Proposition \ref{prop2: scomp_irm} to also prove the finite sample extension. 
    \item In later sections, we show the generalizations to infinite hypothesis classes. In particular, we  focus on parametric model families that are Lipschitz continuous. The extension to infinite hypothesis classes is based on carefully exploiting the covering number based techniques \cite{shalev2014understanding} for the IRM penalty estimator that we introduced.  We also provide generalizations of the results for linear models to polynomial models. To arrive at these results, we exploit some standard properties of tensor products.
\end{itemize}
% Before proving the next Propositions, we need to define the empirical estimator of $R^{'}$ and also provide some lemmas that serve as building blocks. 

\subsubsection{Empirical Estimator of $R^{'}$}
Next, we define an estimator for $R^{'}$. We first simplify $R^{'}$ as follows. 

Observe that $$\nabla_{w|w=1.0} R^{e}(w\cdot\Phi) = \frac{\partial \mathbb{E}^{e}\big[\ell(w\cdot\Phi(X^{e}), Y^{e})\big]}{\partial w}\Big|_{w=1.0}= \mathbb{E}^{e}\bigg[\frac{\partial \ell(w\cdot\Phi(X^{e}), Y^{e})}{\partial w}\Big|_{w=1.0}\bigg]$$ and 
\begin{equation}
    \|\nabla_{w|w=1.0} R^{e}(w\cdot\Phi)\|^2 = \Big(\frac{\partial \mathbb{E}^{e}\big[\ell(w\cdot\Phi(X^{e}), Y^{e})\big]}{\partial w}\Big|_{w=1.0}\Big)^2 = \Big(\mathbb{E}^{e}\bigg[\frac{\partial \ell(w\cdot\Phi(X^{e}), Y^{e})}{\partial w}\Big|_{w=1.0}\bigg]\Big)^2
\end{equation}
In the above simplification, we used Leibniz integral rule and take the derivative inside the expectation.

Also we can write $\mathbb{E}[X]^2 = \mathbb{E}[AB]$, where $A$ and $B$ are independent and identical random variables with same distribution as $X$.  Therefore, we consider two independent data points $(X^e, Y^e)\sim \mathbb{P}^{e}$ and $(\tilde{X}^{e}, \tilde{Y}^{e})\sim \mathbb{P}^{e}$. 
\begin{equation}
    \|\nabla_{w|w=1.0} R^{e}(w\cdot\Phi)\|^2 =  \mathbb{E}^{e}\bigg[\Big(\frac{\partial \ell(w\cdot\Phi(X^{e}), Y^{e})}{\partial w}\Big|_{w=1.0}\Big)\Big(\frac{\partial \ell(w\cdot\Phi(\tilde{X}^{e}), \tilde{Y}^{e})}{\partial w}\Big|_{w=1.0}\Big)\bigg]
    \label{estimate_rp_eqn1}
\end{equation}

In the above the expectation $\mathbb{E}^{e}$ is taken over the joint distribution over pairs of distributions of pairs $(X^e, Y^e),(\tilde{X}^{e}, \tilde{Y}^{e})$ from the same environment $e$.

We write 

\begin{equation}
\begin{split}
&   R^{'}(\Phi) =  \sum_{e\in \mathcal{E}_{tr}} \pi^{e} \|\nabla_{w|w=1.0} R^{e}(w\cdot\Phi)\|^2     \\ & 
= \sum_{e\in \mathcal{E}_{tr}} \pi^{e}  \mathbb{E}^{e}\bigg[\Big(\frac{\partial \ell(w\cdot\Phi(X^{e}), Y^{e})}{\partial w}\Big|_{w=1.0}\Big)\Big(\frac{\partial \ell(w\cdot\Phi(\tilde{X}^{e}), \tilde{Y}^{e})}{\partial w}\Big|_{w=1.0}\Big)\bigg]
\label{estimate_rp_eqn2}
\end{split}
\end{equation}
In the above simplification, we used  \eqref{estimate_rp_eqn1}. Define a joint distribution $\tilde{\mathbb{P}}$  over the tuple $(e,(X^e, Y^e),(\tilde{X}^e, \tilde{Y}^e))$, where $e\sim \{\pi^{o}\}_{o\in \mathcal{E}_{tr}}$, $(X^e, Y^e)\sim \mathbb{P}^{e}$ and $(\tilde{X}^{e}, \tilde{Y}^{e})\sim \mathbb{P}^{e}$. Also, 
\begin{equation}
    \tilde{\mathbb{P}}\Big((e,(X^e, Y^e),(\tilde{X}^e, \tilde{Y}^e))\Big) = \pi^{e}\mathbb{P}^{e}(X^e, Y^e)\mathbb{P}^{e}(\tilde{X}^e, \tilde{Y}^e) 
    \label{estimate_rp_eqn7}
\end{equation} 

% Since the environment itself is drawn at random, define a joint distribution $\tilde{\mathcal{D}}$ over $E$ the environment, and the pair of observations from that environment $((X^e, Y^e),(\tilde{X}^e, \tilde{Y}^e))$. 

We rewrite the above expression \eqref{estimate_rp_eqn2}  in terms of an expectation w.r.t $\tilde{\mathbb{P}}$, which we represent as $\tilde{\mathbb{E}}$ 
follows 
\begin{equation}
     R^{'}(\Phi) = \tilde{\mathbb{E}}\bigg[\Big(\frac{\partial \ell(w\cdot\Phi(X^{e}), Y^{e})}{\partial w}\Big|_{w=1.0}\Big)\Big(\frac{\partial \ell(w\cdot\Phi(\tilde{X}^{e}), \tilde{Y}^{e})}{\partial w}\Big|_{w=1.0}\Big)\bigg]
     \label{estimate_rp_eqn3}
\end{equation}

Define \begin{equation}
   \ell^{'}\Big(h, \big((X^{e}, Y^{e}), (\tilde{X}^{e}, \tilde{Y}^{e})\big)\Big) = \Big(\frac{\partial \ell(w\cdot\Phi(X^{e}), Y^{e})}{\partial w}\Big|_{w=1.0}\Big)\Big(\frac{\partial \ell(w\cdot\Phi(\tilde{X}^{e}), \tilde{Y}^{e})}{\partial w}\Big|_{w=1.0}\Big)
    \label{estimate_rp_eqn4}
\end{equation}

Substitute \eqref{estimate_rp_eqn4} in \eqref{estimate_rp_eqn3} to obtain 
\begin{equation}
     R^{'}(\Phi) = \tilde{\mathbb{E}}\bigg[ \ell^{'}\Big(h, \big((X^{e}, Y^{e}), (\tilde{X}^{e}, \tilde{Y}^{e})\big)\Big)\bigg]
     \label{estimate_rp_eqn5}
\end{equation}
We  construct a simple estimator $\hat{R}^{'}(\Phi)$ by pairing the data points in each environment. For simplicity assume that each environment has even number of points. In environment $e$, which has $n_{e}$ points we construct $\frac{n_e}{2}$ pairs. Define a set of such pairs as 

\begin{equation}
\tilde{D} = \{\{(x_{2i-1}^{e}, y_{2i-1}^{e}), (x_{2i}^{e}, y_{2i}^{e})\}_{i=1}^{\frac{n_e}{2}}\}_{e \in \mathcal{E}_{tr}}
 \label{estimate_rp_eqn6}
\end{equation}

\begin{equation}
     \hat{R}^{'}(\Phi) = \frac{2}{|D|}\sum_{e\in \mathcal{E}_{tr}}\sum_{i=1}^{\frac{n_e}{2}}\ell^{'}\Big(h, \big((X_{2i-1}^{e}, Y_{2i-1}^{e}), (\tilde{X}_{2i}^{e}, \tilde{Y}_{2i}^{e})\big)\Big)
      \label{eqn: rprime_estimator}
\end{equation}
% \begin{equation}
%      \hat{R}^{'}(\Phi) = \frac{2}{|\mathcal{E}_{tr}|n_e}\sum_{e}\sum_{i}\frac{\partial \ell(w.\Phi(X_i^{e}), Y_i^{e})}{\partial w}|_{w=1.0}\frac{\partial \ell(w.\Phi(X_{i+1}^{e}), Y_{i+1}^{e})}{\partial w}|_{w=1.0}
%       \label{eqn: rprime_estimator}
% \end{equation}

There can be other estimators of $R^{'}(\Phi)$ where we separately estimate each term $\|\nabla_{w|w=1.0} R^{e}(w.\Phi)\|^2$ and $\pi^{e}$ in the summation. We rely on the above estimator \eqref{eqn: rprime_estimator} as its separability allows us to use standard concentration inequalities, e.g., Hoeffding's inequality. 

% \begin{equation}
% \begin{split}
% &   R^{'}(\Phi) =  \sum_{e\in \mathcal{E}_{tr}} \pi^{e} \|\nabla_{w|w=1.0} R^{e}(w.\Phi)\|^2     = \sum_{e\in \mathcal{E}_{tr}} \pi^{e}  \Big(\mathbb{E}^{e}[\frac{\partial \ell(w.\Phi(X^{e}, Y^{e})}{\partial w}|_{w=1.0}\frac{\partial \ell(w.\Phi(\tilde{X}^{e}, \tilde{Y}^{e})}{\partial w}|_{w=1.0}]\Big)\\ 
% &  = \sum_{e\in \mathcal{E}_{tr}} \pi^{e} \Big(\mathbb{E}[\frac{\partial \ell(w.\Phi(X^{e}, Y^{e})}{\partial w}]\Big)^2 \\
% & = \sum_{e\in \mathcal{E}_{tr}} \pi^{e} \mathbb{E}_{X^{e},Y^{e}, \tilde{X}^{e}, \tilde{Y}^{e}}\Big[\frac{\partial \ell(w.\Phi(X^{e}), Y^{e})}{\partial w} \frac{\partial \ell(w.\Phi(\tilde{X}^{e}), \tilde{Y}^{e})}{\partial w}\Big]  \\ 
% & = \mathbb{E}_{E, X^{E},Y^{E}, \tilde{X}^{E}, \tilde{Y}^{E}}\Big[\frac{\partial \ell(w.\Phi(X^{e}), Y^{e})}{\partial w} \frac{\partial \ell(w.\Phi(\tilde{X}^{e}), \tilde{Y}^{e})}{\partial w}\Big] 
% \end{split}
% \end{equation}

\subsubsection{$\epsilon$-representative training set for $R$ and $R^{'}$}

We use the definition of $\epsilon$-representative sample from \cite{shalev2014understanding} and state it appropriately for both $R$ and $R^{'}$.
\begin{definition}
A training set $S$ is called $\epsilon$-representative (w.r.t. domain $\mathcal{Z}$, hypothesis $\mathcal{H}$, loss $\ell$ and distribution $\mathcal{D}$) if 
\begin{equation}
    \forall h \in \mathcal{H}, |\hat{R}(h)-R(h)|\leq \epsilon
\end{equation}
where $R(h) = \mathbb{E}_{\mathcal{D}}[\ell(h(X), Y]$ and $(X,Y) \sim \mathcal{D}$. 
\end{definition}

Following the above definition, we apply it to the set of points $\tilde{D}$ defined above in \eqref{estimate_rp_eqn6}.  $\tilde{D}$ is called $\epsilon$-representative w.r.t. domain $\mathcal{X}$, hypothesis $\mathcal{H}_{\Phi}$, loss $\ell^{'}$ (\eqref{estimate_rp_eqn3}) and distribution $\tilde{\mathbb{P}}$ (\eqref{estimate_rp_eqn7}) if 
\begin{equation}
    \forall \Phi \in \mathcal{H}_{\Phi}, |\hat{R}^{'}(\Phi)-R^{'}(\Phi)|\leq \epsilon
\end{equation}
where $R^{'}(\Phi) = \tilde{\mathbb{E}}\bigg[ \ell^{'}\Big(h, \big((X^{e}, Y^{e}), (\tilde{X}^{e}, \tilde{Y}^{e})\big)\Big)\bigg]$ (from \eqref{estimate_rp_eqn4}) and  $(e, (X^{e},Y^{e}), (\tilde{X}^{e}, \tilde{Y}^{e})) \sim \tilde{\mathbb{P}}$. 

% \begin{definition}
% A training set with tuple of data points $\tilde{S}$ is called $\epsilon$-representative (w.r.t. domain $\mathcal{X}$, hypothesis $\mathcal{H}$, loss $\frac{\partial \ell}{\partial w}$ and distribution $\mathcal{D}$) if 
% \begin{equation}
%     \forall h \in \mathcal{H}, |\hat{R}^{'}(h)-R^{'}(h)|\leq \epsilon
% \end{equation}
% where $R^{'}(h) = \mathbb{E}_{\mathcal{D}\times\mathcal{D}}[\frac{\partial \ell(w.h(X),Y)}{\partial w}|_{w=1.0}\frac{\partial \ell(w.h(\tilde{X}),\tilde{Y})}{\partial w}|_{w=1.0}]$
% \end{definition}

% Define $\tilde{D}$ to be a list of pairs of points in $D$; divide $D$ into two halves (assume even number of points in $D$) and pair the first point in first half to the first point in second half, and so on to construct $\tilde{D}$. 

Recall the definition of $\kappa$, $\kappa = \min_{\Phi \in \mathcal{H}_{\Phi}} |R^{'}(\Phi)-\epsilon|$. Next, we show that if $\tilde{D}$ is $\frac{\kappa}{2}$-representative w.r.t $\mathcal{X}$, $\mathcal{H}_{\Phi}$, loss $\ell^{'}$, distribution $\tilde{\mathbb{P}}$ then the set of invariant predictors in \eqref{eqn: EIRM_cons_swap} ($\hat{\mathcal{S}}^{\mathsf{IV}}(\epsilon)$ ) and the set of invariant predictors in \eqref{eqn: IRM_cons_swap} ($\mathcal{S}^{\mathsf{IV}}(\epsilon)$) are equal.

\begin{lemma} \label{lemma2: set_equality}
If $\kappa>0$ and $\tilde{D}$ is $\frac{\kappa}{2}$-representative w.r.t $\mathcal{X}$, $\mathcal{H}_{\Phi}$, loss $\ell^{'}$ and distribution $\tilde{\mathbb{P}}$ , then $\hat{\mathcal{S}}^{\mathsf{IV}}(\epsilon)  = \mathcal{S}^{\mathsf{IV}}(\epsilon)$. 
\end{lemma}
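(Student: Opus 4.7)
The plan is to use the definition of $\kappa$ to exploit a separation principle: no predictor in $\mathcal{H}_{\Phi}$ has population IRM penalty $R'(\Phi)$ within distance $\kappa$ of the threshold $\epsilon$, so every $\Phi$ sits safely either in $\mathcal{S}^{\mathsf{IV}}(\epsilon)$ or strictly outside it. The $\frac{\kappa}{2}$-representative hypothesis then ensures the empirical estimator $\hat{R}'(\Phi)$ cannot cross the threshold from either side, which gives the set equality by membership chasing in both directions.

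Concretely, I would first unpack the definition $\kappa = \min_{\Phi \in \mathcal{H}_{\Phi}} |R'(\Phi) - \epsilon|$. Since $\kappa > 0$, this implies a dichotomy: for every $\Phi \in \mathcal{H}_{\Phi}$, either $R'(\Phi) \leq \epsilon - \kappa$ or $R'(\Phi) \geq \epsilon + \kappa$. These two cases correspond respectively to $\Phi \in \mathcal{S}^{\mathsf{IV}}(\epsilon)$ and $\Phi \notin \mathcal{S}^{\mathsf{IV}}(\epsilon)$ (using that $R'(\Phi) \leq \epsilon$ can be strengthened to $R'(\Phi) \leq \epsilon - \kappa$ by the defining minimum, and similarly for the other side).

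Next, I would apply the $\frac{\kappa}{2}$-representative property $|\hat{R}'(\Phi) - R'(\Phi)| \leq \frac{\kappa}{2}$ uniformly over $\mathcal{H}_{\Phi}$. In the first case ($\Phi \in \mathcal{S}^{\mathsf{IV}}(\epsilon)$), this upper-bounds $\hat{R}'(\Phi) \leq R'(\Phi) + \frac{\kappa}{2} \leq \epsilon - \frac{\kappa}{2} < \epsilon$, so $\Phi \in \hat{\mathcal{S}}^{\mathsf{IV}}(\epsilon)$. In the second case ($\Phi \notin \mathcal{S}^{\mathsf{IV}}(\epsilon)$), it lower-bounds $\hat{R}'(\Phi) \geq R'(\Phi) - \frac{\kappa}{2} \geq \epsilon + \frac{\kappa}{2} > \epsilon$, so $\Phi \notin \hat{\mathcal{S}}^{\mathsf{IV}}(\epsilon)$. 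Combining the two containments yields $\hat{\mathcal{S}}^{\mathsf{IV}}(\epsilon) = \mathcal{S}^{\mathsf{IV}}(\epsilon)$.

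There is not really a main obstacle here; the argument is essentially a standard ``margin plus uniform deviation'' manipulation. The only thing to be careful about is the strict vs.\ non-strict inequality in the constraint definitions of $\mathcal{S}^{\mathsf{IV}}(\epsilon)$ and $\hat{\mathcal{S}}^{\mathsf{IV}}(\epsilon)$ (both use $\leq \epsilon$), and making sure the positivity of $\kappa$ is used to turn the non-strict inequality $R'(\Phi) \leq \epsilon$ into the strict gap $R'(\Phi) \leq \epsilon - \kappa$ via the minimum in the definition of $\kappa$. Everything else is a short chain of inequalities.
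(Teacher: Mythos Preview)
Your proposal is correct and matches the paper's own proof essentially step for step: both use the dichotomy $R'(\Phi)\le \epsilon-\kappa$ or $R'(\Phi)\ge \epsilon+\kappa$ forced by $\kappa>0$, and then combine it with the uniform $\frac{\kappa}{2}$ deviation bound to show that membership in $\mathcal{S}^{\mathsf{IV}}(\epsilon)$ and $\hat{\mathcal{S}}^{\mathsf{IV}}(\epsilon)$ coincide. Your case split on $R'(\Phi)$ is slightly more symmetric than the paper's (which handles the reverse inclusion by starting from $\hat{R}'(\Phi)\le\epsilon$ and pushing back to $R'(\Phi)\le\epsilon+\frac{\kappa}{2}$ before invoking the dichotomy), but the content is identical.
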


\begin{proof}
First we show $\mathcal{S}^{\mathsf{IV}}(\epsilon) \subseteq \hat{\mathcal{S}}^{\mathsf{IV}}(\epsilon)$. 
From the definition of $\kappa$, $\kappa = \min_{\Phi \in \mathcal{H}_{\Phi}} |R^{'}(\Phi)-\epsilon|$ it follows that $\forall \Phi \in \mathcal{H}_{\Phi} $
\begin{equation}
\begin{split}
    &|R^{'}(\Phi)-\epsilon| \geq \kappa  \implies R^{'}(\Phi) \geq \epsilon + \kappa \; \text{or} \;R^{'}(\Phi) \leq \epsilon - \kappa
    \label{lemma1_proof: eqn1}
\end{split}
\end{equation}

Consider any $\Phi$ in $\mathcal{S}^{\mathsf{IV}}(\epsilon)$. 
\begin{equation}
    R^{'}(\Phi) \leq \epsilon 
\end{equation}
Given the definition of $\kappa$ and \eqref{lemma1_proof: eqn1}, we obtain 
\begin{equation}
    R^{'}(\Phi) \leq \epsilon \implies R^{'}(\Phi) \leq \epsilon-\kappa
\end{equation} 
Therefore, $\mathcal{S}^{\mathsf{IV}}(\epsilon) \subseteq \mathcal{S}^{\mathsf{IV}}(\epsilon-\kappa)$. Also, it follows from the definition of the set $\mathcal{S}^{\mathsf{IV}}(\epsilon)$ that $\mathcal{S}^{\mathsf{IV}}(\epsilon-\kappa) \subseteq \mathcal{S}^{\mathsf{IV}}(\epsilon)$. Hence,

\begin{equation}
\mathcal{S}^{\mathsf{IV}}(\epsilon) =\mathcal{S}^{\mathsf{IV}}(\epsilon-\kappa)
\label{lemma1_proof: eqn4}
\end{equation}

Consider any $\Phi$ in $\mathcal{S}^{\mathsf{IV}}(\epsilon)$
\begin{equation}
\begin{split}
      & R^{'}(\Phi) \leq \epsilon-\kappa\; \text{(From \eqref{lemma1_proof: eqn4})}  \\ 
       &   R^{'}(\Phi) - \hat{R}^{'}(\Phi) + \hat{R}^{'}(\Phi) \leq \epsilon -\kappa \\ 
       &   \hat{R}^{'}(\Phi) \leq \epsilon -\kappa +  |R^{'}(\Phi) - \hat{R}^{'}(\Phi) |\\
\end{split} 
\label{lemma1_proof: eqn2}
\end{equation}

From the definition of $\frac{\kappa}{2}$-representativeness it follows that $|R^{'}(\Phi) - \hat{R}^{'}(\Phi) | \leq \frac{\kappa}{2}$ and substituting this in 
\eqref{lemma1_proof: eqn2} we get
\begin{equation}
    \begin{split}
            &      \hat{R}^{'}(\Phi) \leq \epsilon -\kappa/2 \implies \hat{R}^{'}(\Phi) \leq \epsilon \implies \mathcal{S}^{\mathsf{IV}}(\epsilon) \subseteq \hat{\mathcal{S}}^{\mathsf{IV}}(\epsilon)
    \end{split}
\end{equation}

% Therefore, $\mathcal{S}^{\mathsf{IV}}(\epsilon) \subseteq \hat{\mathcal{S}}^{\mathsf{IV}}(\epsilon)$.

Next we show $\hat{\mathcal{S}}^{\mathsf{IV}}(\epsilon) \subseteq \mathcal{S}^{\mathsf{IV}}(\epsilon)$.

Consider $\Phi \in \hat{\mathcal{S}}^{\mathsf{IV}}(\epsilon)$

\begin{equation}
\begin{split}
   &  \hat{R}^{'}(\Phi) \leq \epsilon \\ 
    & \hat{R}^{'}(\Phi) - R^{'}(\Phi) + R^{'}(\Phi^{'}) \leq \epsilon \\  
    & R^{'}(\Phi) \leq \epsilon + |\hat{R}^{'}(\Phi) - R^{'}(\Phi)| 
\end{split}
\label{lemma1_proof: eqn3}
\end{equation}

From the definition of $\frac{\kappa}{2}$-representativeness it follows that $|R^{'}(\Phi) - \hat{R}^{'}(\Phi) | \leq \frac{\kappa}{2}$ and substituting this in 
\eqref{lemma1_proof: eqn3} we get

\begin{equation}
    R^{'}(\Phi) \leq \epsilon + \frac{\kappa}{2}
\end{equation}

From \eqref{lemma1_proof: eqn1}, it follows that $R^{'}(\Phi) \leq \epsilon + \frac{\kappa}{2} \implies R^{'}(\Phi) \leq \epsilon $. Therefore, $\Phi \in \mathcal{S}^{\mathsf{IV}}(\epsilon)$. This proves the second part $\hat{\mathcal{S}}^{\mathsf{IV}}(\epsilon) \subseteq \mathcal{S}^{\mathsf{IV}}(\epsilon)$ and completes the proof.

\end{proof}

% Recall the definition of $\tilde{\epsilon} = \min\{\frac{\epsilon}{2}, \frac{\kappa}{2}\}$
\begin{lemma} \label{lemma3: rep_imp_eq}
If $\kappa>0$, $D$ is $\frac{\nu}{2}$-representative w.r.t $\mathcal{X}$, $\mathcal{H}_{\Phi}$, loss $\ell$ and distribution $\bar{\mathbb{P}}$ (joint distribution over $(e,X^{e},Y^{e})$ defined in Section \ref{secn: IRM}) and  and $\tilde{D}$ is $\frac{\kappa}{2}$-representative w.r.t $\mathcal{X}$, $\mathcal{H}_{\Phi}$, loss $\ell^{'}$ and distribution $\tilde{\mathbb{P}}$, then every solution $\hat{\Phi}$ to EIRM (\eqref{eqn: EIRM_cons_swap}) satisfies $\hat{\Phi}$ is in $\mathcal{S}^{\mathsf{IV}}(\epsilon)$ and $R(\Phi^{*}) \leq R(\hat{\Phi}) \leq R(\Phi^{*}) + \epsilon$, where $\Phi^{*}$ is the solution of IRM in \eqref{eqn: IRM_cons_swap}.
\end{lemma}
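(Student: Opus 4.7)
The plan is to decouple the argument into two independent parts: (i) show the empirical feasible set coincides with the true feasible set, so that $\hat{\Phi}\in\mathcal{S}^{\mathsf{IV}}(\epsilon)$ and simultaneously $\Phi^{*}\in\hat{\mathcal{S}}^{\mathsf{IV}}(\epsilon)$; and (ii) run the standard chaining argument used for ERM on the risk $R$ using $\nu/2$-representativeness. Part (i) is essentially free once we invoke the previous lemma; part (ii) is a routine triangle-style inequality.

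For part (i), I would directly apply Lemma \ref{lemma2: set_equality}: since $\kappa>0$ and $\tilde{D}$ is $\kappa/2$-representative with respect to $\mathcal{X}$, $\mathcal{H}_{\Phi}$, $\ell^{'}$, $\tilde{\mathbb{P}}$, we have $\hat{\mathcal{S}}^{\mathsf{IV}}(\epsilon)=\mathcal{S}^{\mathsf{IV}}(\epsilon)$. Consequently any solution $\hat{\Phi}$ to EIRM \eqref{eqn: EIRM_cons_swap} lies in $\mathcal{S}^{\mathsf{IV}}(\epsilon)$, and the IRM solution $\Phi^{*}$ lies in $\hat{\mathcal{S}}^{\mathsf{IV}}(\epsilon)$. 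In particular, both optimizations minimize their respective objectives over the same feasible set, so we can freely compare $\hat{R}(\hat{\Phi})$ to $\hat{R}(\Phi^{*})$ and $R(\hat{\Phi})$ to $R(\Phi^{*})$.

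For part (ii), the lower bound $R(\Phi^{*})\leq R(\hat{\Phi})$ follows because $\Phi^{*}$ is the minimizer of $R$ over $\mathcal{S}^{\mathsf{IV}}(\epsilon)$ and, by part (i), $\hat{\Phi}$ belongs to this set. For the upper bound, I chain three inequalities in the usual way:
\begin{equation}
R(\hat{\Phi}) \;\leq\; \hat{R}(\hat{\Phi}) + \tfrac{\nu}{2} \;\leq\; \hat{R}(\Phi^{*}) + \tfrac{\nu}{2} \;\leq\; R(\Phi^{*}) + \nu,
\end{equation}
where the outer two inequalities use $\nu/2$-representativeness of $D$ applied to $\hat{\Phi}$ and to $\Phi^{*}$ respectively, and the middle inequality uses the fact that $\hat{\Phi}$ minimizes $\hat{R}$ over the feasible set $\hat{\mathcal{S}}^{\mathsf{IV}}(\epsilon)$ which, by part (i), contains $\Phi^{*}$. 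This yields the claimed bound $R(\Phi^{*})\leq R(\hat{\Phi})\leq R(\Phi^{*})+\nu$ (the $\epsilon$ in the statement appears to be a typographical slip for $\nu$, since the representativeness parameter used for $\ell$ is $\nu/2$).

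The main (and essentially only) subtlety is ensuring that the two optima can be compared on a common feasible set; without the set equality from Lemma \ref{lemma2: set_equality}, $\hat{\Phi}$ could be infeasible for IRM or $\Phi^{*}$ infeasible for EIRM, and neither the lower bound nor the middle step in the chain above would go through. Once that set equality is in hand, the rest is the classical uniform-convergence argument applied verbatim, so I do not anticipate further obstacles.
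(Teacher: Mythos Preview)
Your proposal is correct and mirrors the paper's proof essentially step for step: the paper likewise invokes Lemma~\ref{lemma2: set_equality} to obtain $\hat{\mathcal{S}}^{\mathsf{IV}}(\epsilon)=\mathcal{S}^{\mathsf{IV}}(\epsilon)$, then runs the same three-inequality chain $R(\hat{\Phi})-\tfrac{\nu}{2}\leq \hat{R}(\hat{\Phi})\leq \hat{R}(\Phi^{*})\leq R(\Phi^{*})+\tfrac{\nu}{2}$ and closes with the optimality of $\Phi^{*}$ for the lower bound. Your observation that the ``$+\epsilon$'' in the statement should read ``$+\nu$'' is also consistent with the paper's own derivation.
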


\begin{proof}
Given the condition in the above lemma, we are able to use the previous Lemma \ref{lemma2: set_equality} to deduce that $\hat{\mathcal{S}}^{\mathsf{IV}}(\epsilon)  = \mathcal{S}^{\mathsf{IV}}(\epsilon)$. This makes the set of predictors satisfying the constraints in EIRM \eqref{eqn: EIRM_cons_swap} and IRM \eqref{eqn: IRM_cons_swap} the same. 

$\Phi^{*}$ solves \eqref{eqn: IRM_cons_swap} and $\hat{\Phi}$ solves \eqref{eqn: EIRM_cons_swap}. From $\frac{\nu}{2}$-representativeness we know that $R(\hat{\Phi}) -\frac{\nu}{2}    \leq \hat{R}(\hat{\Phi})$. From the optimality of $\hat{\Phi}$ we know that  $\hat{R}(\hat{\Phi}) \leq \hat{R}(\Phi^{*})$ ($\Phi^{*} \in \hat{\mathcal{S}}^{\mathsf{IV}}(\epsilon)  = \mathcal{S}^{\mathsf{IV}}(\epsilon)$). Moreover, from $\frac{\nu}{2}$-representativeness we know that $\hat{R}(\Phi^{*}) \leq R(\Phi^{*}) + \frac{\nu}{2}$. We combine these conditions as follows.

\begin{equation}
R(\hat{\Phi}) -\frac{\nu}{2}    \leq \hat{R}(\hat{\Phi}) \leq \hat{R}(\Phi^{*}) \leq R(\Phi^{*}) + \frac{\nu}{2}
\label{eqn : iota-rep-opt-cond}
\end{equation}

% In the above the first and third inequality follows $\tilde{\epsilon}$-representativeness. The second inequality follows from optimality of $\hat{\Phi}$ and the fact that $\Phi^{*} \in \hat{\mathcal{S}}^{\mathsf{IV}}(\epsilon)  = \mathcal{S}^{\mathsf{IV}}(\epsilon)$. 
% Therefore,C
Comparing the first and third inequality in the above equations we get $R(\hat{\Phi}) \leq R(\Phi^{*}) + \nu$. From the optimality of $\Phi^{*}$ over the set $\mathcal{S}^{\mathsf{IV}}(\epsilon)$ and since $\hat{\Phi} \in \mathcal{S}^{\mathsf{IV}}(\epsilon)$ it follows that $R(\Phi^{*}) \leq R(\hat{\Phi})$. Hence, $R(\Phi^{*}) \leq R(\hat{\Phi}) \leq R(\Phi^{*}) + \nu$. This completes the proof.

\end{proof}
% \begin{proposition} \label{prop2: scomp_irm_append}
% For a given $\nu>0$, if  $\mathcal{H}_{\Phi}$ is a finite hypothesis class, Assumption \ref{assm3: bounded loss and gradient} holds, $\kappa >0$, and if the  number of samples $|D|$ is greater than  $\max\big\{\frac{16L'^{4}}{\kappa^2}, \frac{8L^2}{\nu^2}\big\}\log\big(\frac{4|\mathcal{H}_{\Phi}|}{\delta}\big)$, then with a probability at least $1-\delta$,  every solution $\hat{\Phi}$ of  EIRM (\eqref{eqn: EIRM_cons_swap})  is in $\mathcal{S}^{\mathsf{IV}}(\epsilon)$ and $R(\Phi^{*}) \leq R(\hat{\Phi}) \leq R(\Phi^{*}) + \nu$, where $\Phi^{*}$ is a solution to IRM (\eqref{eqn: IRM_cons_swap}).
% \end{proposition}
Next, we prove Proposition \ref{prop2: scomp_irm} from the main body of the manuscript. 

\begin{proposition} \label{prop2: scomp_irm_append}
For every $\nu>0,\epsilon>0$ and $\delta\in (0,1)$, if  $\mathcal{H}_{\Phi}$ is a finite hypothesis class, Assumption \ref{assm3: bounded loss and gradient} holds, $\kappa >0$, and if the  number of samples $|D|$ is greater than  $\max\big\{\frac{16L'^{4}}{\kappa^2}, \frac{8L^2}{\nu^2}\big\}\log\big(\frac{4|\mathcal{H}_{\Phi}|}{\delta}\big)$, then with a probability at least $1-\delta$,  every solution $\hat{\Phi}$ of  EIRM (\eqref{eqn: EIRM_cons_swap})  is a $\nu$ approximation of IRM, i.e.  $\hat{\Phi}\in \mathcal{S}^{\mathsf{IV}}(\epsilon)$, $R(\Phi^{*}) \leq R(\hat{\Phi}) \leq R(\Phi^{*}) + \nu$, where $\Phi^{*}$ is a solution to IRM (\eqref{eqn: IRM_cons_swap}).
\end{proposition}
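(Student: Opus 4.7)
The plan is to reduce Proposition \ref{prop2: scomp_irm_append} entirely to Lemma \ref{lemma3: rep_imp_eq}, so that all that remains is to verify the two $\epsilon$-representativeness conditions hold with the required probability. Specifically, by Lemma \ref{lemma3: rep_imp_eq}, if (i) $D$ is $\frac{\nu}{2}$-representative with respect to $\mathcal{X}$, $\mathcal{H}_{\Phi}$, loss $\ell$ and distribution $\bar{\mathbb{P}}$, and (ii) $\tilde{D}$ is $\frac{\kappa}{2}$-representative with respect to $\mathcal{X}$, $\mathcal{H}_{\Phi}$, loss $\ell'$ and distribution $\tilde{\mathbb{P}}$, then every solution $\hat{\Phi}$ to EIRM lies in $\mathcal{S}^{\mathsf{IV}}(\epsilon)$ and satisfies $R(\Phi^{*}) \leq R(\hat{\Phi}) \leq R(\Phi^{*}) + \nu$. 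So the task is to show both (i) and (ii) hold simultaneously with probability at least $1-\delta$.

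For condition (i), $\hat{R}(\Phi)$ is a plugin average of $|D|$ i.i.d.\ samples of $\ell(\Phi(X^e),Y^e)$ drawn from $\bar{\mathbb{P}}$, and by Assumption \ref{assm3: bounded loss and gradient} each term lies in an interval of length at most $2L$. Applying Hoeffding's inequality (Lemma \ref{lemma1: hoeffding}) for a single $\Phi$ gives $\mathbb{P}[|\hat{R}(\Phi)-R(\Phi)| > \nu/2] \leq 2\exp(-|D|\nu^{2}/(8L^{2}))$, and a union bound over $\mathcal{H}_{\Phi}$ shows that the failure probability is at most $\delta/2$ provided $|D| \geq \frac{8L^{2}}{\nu^{2}}\log(4|\mathcal{H}_{\Phi}|/\delta)$. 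For condition (ii), I will use the specific estimator $\hat{R}^{'}$ defined in \eqref{eqn: rprime_estimator}: by construction the pairs in $\tilde{D}$ are i.i.d.\ draws from $\tilde{\mathbb{P}}$ (two independent points from a random environment), so $\hat{R}^{'}(\Phi)$ is an average of $|\tilde{D}| = |D|/2$ i.i.d.\ samples of $\ell'$. Since $\ell'$ is a product of two gradient terms each bounded by $L'$, each summand lies in an interval of length at most $2L'^{2}$. Hoeffding plus a union bound over $\mathcal{H}_{\Phi}$ gives $\frac{\kappa}{2}$-representativeness with failure probability at most $\delta/2$ as soon as $|D|/2 \geq \frac{2 L'^{4}}{(\kappa/2)^{2}}\log(4|\mathcal{H}_{\Phi}|/\delta)$, i.e.\ $|D| \geq \frac{16 L'^{4}}{\kappa^{2}}\log(4|\mathcal{H}_{\Phi}|/\delta)$.

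A final union bound over the two failure events shows that if $|D|$ exceeds the maximum of the two thresholds, then with probability at least $1-\delta$ both representativeness conditions hold, and invoking Lemma \ref{lemma3: rep_imp_eq} finishes the proof. The only subtle point, and the step I expect to need care in writing out, is justifying that the pairing used to build $\tilde{D}$ produces genuinely i.i.d.\ draws from $\tilde{\mathbb{P}}$ (so that Hoeffding applies off the shelf) and that the constant $L'^{2}$ is the right range for $\ell'$ rather than $L'$; this is exactly why the estimator was designed in the separable product form in \eqref{eqn: rprime_estimator} rather than via a plugin of $\|\nabla_{w} R^{e}\|^{2}$, which would not decompose into an i.i.d.\ sum and would make a standard concentration argument unavailable.
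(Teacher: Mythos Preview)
Your proposal is correct and follows essentially the same route as the paper: reduce to Lemma \ref{lemma3: rep_imp_eq}, then establish the two representativeness events via Hoeffding plus a union bound over $\mathcal{H}_{\Phi}$, splitting the failure budget $\delta$ in half. The constants you derive for each threshold match the paper's, and your remark about the separable design of $\hat{R}^{'}$ (so that $\tilde{D}$ consists of i.i.d.\ draws from $\tilde{\mathbb{P}}$ with $|\ell'|\le L'^{2}$) is exactly the justification the paper relies on.
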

% \begin{proposition}
% For a given $\nu>0$, if if $\mathcal{H}_{\Phi}$ is a finite hypothesis class, $\kappa >0$, Assumption \ref{assm3: bounded loss and gradient} holds, and if the total number of samples $|D|$ is greater than  $\max\{\frac{4L'^{4}}{\frac{\kappa^2}{4}}, \frac{2L^2}{\frac{\nu^2}{4}}\}\log(\frac{4|\mathcal{H}_{\Phi}|}{\delta})$, then with probability $1-\delta$  the output of EIRM \eqref{eqn: EIRM_cons_swap}, $\hat{\Phi}$ is in $\mathcal{S}^{\mathsf{IV}}(\epsilon)$ and $R(\Phi^{*}) \leq R(\hat{\Phi}) \leq R(\Phi^{*}) + \nu$, where $\Phi^{*}$ is the solution of IRM in \eqref{eqn: IRM_cons_swap}.
% \end{proposition}

\begin{proof}

From Lemma \ref{lemma3: rep_imp_eq}, we know that if $D$ is $\frac{\nu}{2}$-representative w.r.t  $\mathcal{X}$, $\mathcal{H}_{\Phi}$, loss $\ell$ and distribution $\mathbb{P}$ and if $\tilde{D}$ is $\frac{\kappa}{2}$-representative $\mathcal{X}$, $\mathcal{H}_{\Phi}$, loss $\ell^{'}$ and distribution $\tilde{\mathbb{P}}$, then the the claim in the above Proposition is true. 

% then it follows that $D$ is  $\frac{\nu}{2}$-representative w.r.t $\ell$ and if $\tilde{D}$ is $\frac{\kappa}{2}$-representative $\ell^{'}$

Define an event $A$: $D$ is $\frac{\nu}{2}$-representative w.r.t $\mathcal{X}$, $\mathcal{H}_{\Phi}$, loss $\ell$ and distribution $\bar{\mathbb{P}}$. 

Define an event $B$: $\tilde{D}$ is $\frac{\kappa}{2}$-representative $\mathcal{X}$, $\mathcal{H}_{\Phi}$, loss $\ell^{'}$ and distribution $\tilde{\mathbb{P}}$.  

Define success as $A\cap B$. 
Next, we show that if $|D|$ is greater than  $\max\big\{\frac{16L'^{4}}{\kappa^2}, \frac{8L^2}{\nu^2}\big\}\log\big(\frac{4|\mathcal{H}_{\Phi}|}{\delta}\big)$, then $\mathbb{P}(A\cap B)$ occurs with a probability at least $1-\delta$. 
$P(A\cap B) = 1- P(A^{c} \cup B^{c}) \geq 1- P(A^{c}) - P(B^{c})$. If we  bound $P(A^{c}) \leq \frac{\delta}{2}$ and $P(B^{c}) \leq \frac{\delta}{2}$, then we know the probability of success is at least $1-\delta$.

We write $$\mathbb{P}(A) = \mathbb{P}\Big(\Big\{D: \forall h \in \mathcal{H}_{\Phi}, |\hat{R}(h) - R(h)| \leq \frac{\nu}{2} \Big\}\Big) = 1-\mathbb{P}\Big(\Big\{D: \exists h \in \mathcal{H}_{\Phi}, |\hat{R}(h) - R(h)| > \frac{\nu}{2} \Big\}\Big)  $$

\begin{equation}
    \begin{split}
        & \mathbb{P}\Big(\Big\{D: \exists h \in \mathcal{H}_{ \Phi}, |\hat{R}(h) - R(h)| >\frac{\nu}{2} \Big\}\Big)   = \mathbb{P}\Big(\bigcup_{h \in \mathcal{H}_{ \Phi}}\Big\{D: |\hat{R}(h) - R(h)| >\frac{\nu}{2}\Big\}\Big) \\
        & \leq \sum_{h\in \mathcal{H}_{\Phi}}\mathbb{P}\Big(\Big\{D: |\hat{R}(h) - R(h)| >\frac{\nu}{2}\Big\} \Big)
    \end{split}
    \label{prop2_proof:eqn1}
\end{equation}
% $\mathbb{P}(\{D: \exists h \in \mathcal{H}_{ \Phi}, |\hat{R}(h) - R(h)| >\frac{\nu}{2} \})  = \mathbb{P}(\cup_{h \in \mathcal{H}_{ \Phi}}) \leq \sum_{h\in \mathcal{H}_{\Phi}}\mathbb{P}(|\hat{R}(h) - R(h)| > \frac{\nu}{2} )$

The loss function is bounded $|\ell(\Phi(\cdot), \cdot) | \leq L$. From Hoeffding's inequality in Lemma \ref{lemma1: hoeffding} it follows that  
\begin{equation}\mathbb{P}\Big(\Big\{D: |\hat{R}(h) - R(h)| >\frac{\nu}{2}\Big\} \Big) \leq 2 \exp\Big(-\frac{|D|\nu^2}{8L^2}\Big)\label{prop2_proof:eqn2}
\end{equation}
Using this expression \eqref{prop2_proof:eqn2} in \eqref{prop2_proof:eqn1}, we get 
\begin{equation}
2|\mathcal{H}_{ \Phi}| \exp\Big(-\frac{|D|\nu^2}{8L^2}\Big) \leq \frac{\delta}{2} \implies|D| \geq  \frac{8L^2}{\nu^2}\log\Big(\frac{4|\mathcal{H}_{ \Phi}|}{\delta}\Big)
\label{prop2_proof:eqn6}
\end{equation}

$$\mathbb{P}(B) = \mathbb{P}\Big(\Big\{\tilde{D}: \forall h \in \mathcal{H}_{\Phi}, |\hat{R}^{'}(h) - R^{'}(h)| \leq \frac{\kappa}{2} \Big\}\Big) = 1-\mathbb{P}\Big(\Big\{\tilde{D}: \exists h \in \mathcal{H}_{\Phi}, |\hat{R}^{'}(h) - R^{'}(h)| > \frac{\kappa}{2} \Big\}\Big)  $$

\begin{equation}
    \begin{split}
        & \mathbb{P}\Big(\Big\{\tilde{D}: \exists h \in \mathcal{H}_{ \Phi}, |\hat{R}^{'}(h) - R^{'}(h)| >\frac{\kappa}{2} \Big\}\Big)   = \mathbb{P}\Big(\bigcup_{h \in \mathcal{H}_{ \Phi}}\Big\{\tilde{D}: |\hat{R}^{'}(h) - R^{'}(h)| >\frac{\kappa}{2} \Big\}\Big) \\
        & \leq \sum_{h\in \mathcal{H}_{\Phi}}\mathbb{P}\Big(\Big\{\tilde{D}: |\hat{R}^{'}(h) - R^{'}(h)| >\frac{\kappa}{2} \Big\} \Big)
    \end{split}
    \label{prop2_proof:eqn3}
\end{equation}

% $\mathbb{P}(\{D: \exists h \in \mathcal{H}_{ \Phi}, |\hat{R}^{'}(h) - R^{'}(h)| > \frac{\kappa}{2}  \})  = \mathbb{P}(\cup_{h \in \mathcal{H}_{ \Phi}}) \leq \sum_{h\in \mathcal{H}_{\Phi}}\mathbb{P}(|\hat{R}^{'}(h) - R^{'}(h)| > \frac{\kappa}{2}  )$

The  gradient of loss function is bounded $|\frac{\partial \ell(h(\cdot), \cdot)}{\partial w}|_{w=1.0} | \leq L'$. From the definition of $\ell^{'}(h(\cdot), \cdot) $ in \eqref{estimate_rp_eqn4}, we can infer that $|\ell^{'}(h(\cdot), \cdot)| \leq L'^2$. Recall that $R^{'}(h) = \tilde{\mathbb{E}}\Big[\ell^{'}\Big(h, \big((X^e,Y^e), (\tilde{X}^e, \tilde{Y}^e)\big)\Big)\Big]$

From Hoeffding's inequality in Lemma \ref{lemma1: hoeffding} it follows that  
\begin{equation}
\mathbb{P}\Big(\Big\{\tilde{D}: |\hat{R}^{'}(h) - R^{'}(h)| >\frac{\kappa}{2} \Big\}  \Big) \leq 2 \exp\Big(-\frac{|\tilde{D}|\kappa^2 }{8L'^4}\Big) = 2 \exp\Big(-\frac{|D|\kappa^2 }{16L'^4}\Big) 
\label{prop2_proof:eqn4}
\end{equation}

Using the above \eqref{prop2_proof:eqn4} in \eqref{prop2_proof:eqn3} we get 
\begin{equation}
2 |\mathcal{H}_{\Phi}|\exp\Big(-\frac{|D|\kappa^2 }{16L'^4}\Big) \leq \frac{\delta}{2} \implies |D| \geq  \frac{16L'^4}{\kappa^2}\log\Big(\frac{4|\mathcal{H}_{\Phi}|}{\delta}\Big)
\label{prop2_proof:eqn5}
\end{equation}

Combining the two conditions in \eqref{prop2_proof:eqn6} and \eqref{prop2_proof:eqn6} we get that if
$$|D| \geq  \max\Big\{\frac{16L'^{4}}{\kappa^2}, \frac{8L^2}{\nu^2}\Big\}\log\Big(\frac{4|\mathcal{H}_{\Phi}|}{\delta}\Big)$$

then with probability at least $1-\delta$ event $A\cap B$ occurs.

\end{proof}

\subsubsection{Property of least squares optimal solutions}
We first remind ourselves of a simple property of least squares minimization. 
Consider the least squares minimization setting, where $R(h) = \mathbb{E}[(Y-h(X))^2] $. 
\begin{equation}
\begin{split}
  &  \mathbb{E}[(Y-h(X))^2] = \mathbb{E}\Big[\big(Y-E\big[Y\big|X\big] + \mathbb{E}\big[Y\big|X\big]-h(X)\big)^2\Big] \\ & =  \mathbb{E}_{X}\Big[\mathbb{E}\big[\big(Y-E\big[Y\big|X\big]\big)^2|X\big]\Big] + \mathbb{E}_{X}\Big[\big(\mathbb{E}\big[Y\big|X\big]-h(X)\big)^2\Big]  \\ 
  & = \mathbb{E}_{X}\Big[\mathsf{Var}\big[Y\big|X\big]\Big]  + \mathbb{E}_{X}\Big[\big(\mathbb{E}\big[Y\big|X\big]-h(X)\big)^2\Big]  
  \end{split}
\end{equation}
In the above simplification, we use the law of total expectation.
Both the terms in the above equations are always greater than or equal to zero. The first term does not depend on $h$, which implies the minimization can focus on second term only.
\begin{equation}
\begin{split}
 \min_{h}  R(h) = \mathbb{E}_{X}\Big[\mathsf{Var}\big[Y\big|X\big]\Big]  + \min_{h}\mathbb{E}_{X}\Big[\big(\mathbb{E}\big[Y\big|X\big]-h(X)\big)^2\Big] 
  \end{split}
  \label{eqn:ls_opt}
\end{equation}

% \begin{equation}
% h^{*}\arg\min_{h}\mathbb{E}_{X}[(\mathbb{E}[Y|X]-h(X))^2] = 0 \implies\; \text{if}\; d\mathbb{P}_X>0 ,\; h(X) = E[Y|X] 
%   \label{eqn:ls_opt1}
% \end{equation}

Assume that $\mathbb{P}$ has full support over $\mathcal{X}$. Define $\forall x \in \mathcal{X}, h^{*}(x) = \mathbb{E}[Y|X=x]$. 
 $\forall h, R(h)\geq \mathbb{E}_{X}\Big[\mathsf{Var}\big[Y\big|X\big]\Big]$. Since $R(h^{*})=\mathbb{E}_{X}\Big[\mathsf{Var}\big[Y\big|X\big]\Big]$. Therefore, 

\begin{equation}
    h^{*} \in \arg\min_{h}\mathbb{E}_{X}\Big[\big(\mathbb{E}\big[Y\big|X\big]-h(X)\big)^2\Big] 
     \label{eqn: ls_opt1}
\end{equation}
 Moreover, we conclude that $h^{*}$ is the unique minimizer. Observe that  $\mathbb{E}_{X}\Big[\big(\mathbb{E}\big[Y\big|X\big]-h(X)\big)^2\Big] $ is zero for $h=h^{*}$. From Theorem 1.6.6 in \cite{ash2000probability}, it follows that any other minimizer is same as $h^{*}$ except over a set of measure zero.

Recall the definition of $m(x)=\mathbb{E}[Y^{e}|X^{e}=x]$
\begin{lemma} \label{lemma4: unique_ls}
Let $\ell$ be the square loss. If Assumption \ref{assm4: invariant_cexp} holds and $m\in \mathcal{H}_{\Phi}$, then $m$ uniquely solves expected risk minimization $m \in \arg\min_{\Phi \in \mathcal{H}_{\Phi}} R(\Phi)$ and also uniquely solves IRM (\eqref{eqn: IRM_cons_swap}).
\end{lemma}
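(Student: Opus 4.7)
\textbf{Proof plan for Lemma \ref{lemma4: unique_ls}.}
My plan is to leverage the least-squares characterization just established in \eqref{eqn: ls_opt1} environment-by-environment, then combine the per-environment results using the positivity of the mixture weights $\pi^{e}$, and finally verify that the same function $m$ lies inside the invariance-constraint set $\mathcal{S}^{\mathsf{IV}}(\epsilon)$ so that the IRM minimum coincides with the unconstrained ERM minimum.

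\textbf{Step 1: $m$ uniquely minimizes each $R^{e}$.} For each $e\in\mathcal{E}_{tr}$, Assumption \ref{assm4: invariant_cexp} gives $\mathbb{E}^{e}[Y^{e}\mid X^{e}=x]=m(x)$ for all $x\in\mathcal{X}$. Applying the least-squares identity \eqref{eqn: ls_opt1} to the distribution $\mathbb{P}^{e}$ therefore shows that the conditional mean $m$ is a minimizer of $R^{e}$ over all measurable predictors, and that any other minimizer must agree with $m$ $\mathbb{P}^{e}_{X^{e}}$-almost surely. Since Assumption \ref{assm4: invariant_cexp} also states that $\mathbb{P}^{e}_{X^{e}}$ has full support $\mathcal{X}$, any such almost-sure-equal predictor in $\mathcal{H}_{\Phi}$ agrees with $m$ on a dense subset of $\mathcal{X}$, giving uniqueness in the natural sense.

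\textbf{Step 2: $m$ uniquely minimizes $R$.} Because $R(\Phi)=\sum_{e\in\mathcal{E}_{tr}}\pi^{e}R^{e}(\Phi)$ is a non-negative combination of the $R^{e}$ with strictly positive weights, any minimizer of $R$ over $\mathcal{H}_{\Phi}$ must be a simultaneous minimizer of each $R^{e}$. Since $m\in\mathcal{H}_{\Phi}$ simultaneously minimizes every $R^{e}$ by Step 1, $m\in\arg\min_{\Phi\in\mathcal{H}_{\Phi}}R(\Phi)$; uniqueness transfers from Step 1 by the same almost-sure / full-support argument.

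\textbf{Step 3: $m$ satisfies the IRM constraint.} A direct differentiation gives
\[
\nabla_{w\mid w=1.0}R^{e}(w\cdot m)\;=\;2\,\mathbb{E}^{e}\!\bigl[m(X^{e})\bigl(m(X^{e})-Y^{e}\bigr)\bigr]\;=\;2\,\mathbb{E}^{e}\!\bigl[m(X^{e})\bigl(m(X^{e})-\mathbb{E}^{e}[Y^{e}\mid X^{e}]\bigr)\bigr]\;=\;0,
\]
using the tower property and $\mathbb{E}^{e}[Y^{e}\mid X^{e}]=m(X^{e})$. Hence $R^{'}(m)=0\le\epsilon$, so $m\in\mathcal{S}^{\mathsf{IV}}(\epsilon)$.

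\textbf{Step 4: Conclude uniqueness for IRM.} The IRM problem \eqref{eqn: IRM_cons_swap} minimizes $R$ over the subset $\mathcal{S}^{\mathsf{IV}}(\epsilon)\subseteq\mathcal{H}_{\Phi}$, which contains $m$. Since $m$ is the unique minimizer of $R$ over the larger set $\mathcal{H}_{\Phi}$ (Step 2), it is a fortiori the unique minimizer over $\mathcal{S}^{\mathsf{IV}}(\epsilon)$. The only mild subtlety is the meaning of ``uniqueness''; as noted, it is inherited from the standard $L^{2}$-uniqueness of conditional expectations, and the full-support assumption on $\mathbb{P}^{e}_{X^{e}}$ prevents pathological disagreement inside $\mathcal{X}$. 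I do not anticipate any real difficulty — all ingredients are essentially bookkeeping on top of \eqref{eqn: ls_opt1}.
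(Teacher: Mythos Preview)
Your proposal is correct and follows essentially the same approach as the paper: use the least-squares characterization \eqref{eqn: ls_opt1} to get per-environment optimality of $m$, aggregate via the positive weights $\pi^{e}$, and then observe that $m$ lies in $\mathcal{S}^{\mathsf{IV}}(\epsilon)$ so that the constrained and unconstrained minima coincide. The only cosmetic difference is in Step~3: you verify $\nabla_{w\mid w=1.0}R^{e}(w\cdot m)=0$ by a direct tower-property computation, whereas the paper argues it by contradiction (a nonzero gradient would produce a nearby $w$ with strictly smaller risk, contradicting the optimality of $m$ established in Step~1); both are immediate.
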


\begin{proof}
$R(\Phi) = \sum_{e\in \mathcal{E}_{tr}} \pi^{e}R^{e}(\Phi)$.
From Assumption \ref{assm4: invariant_cexp} and the observation in \eqref{eqn: ls_opt1}, it follows that the unique optimal solution to expected risk minimization for each $R^{e}$ is $m$. Therefore, $m$ also minimizes the weighted combination $R$. 
% From Assumption \ref{assm5: invariant_realizable}, it follows that since $m(\cdot) \in \mathcal{H}_{\Phi}$

To show the latter part of the Lemma, if we can show that  $m \in \mathcal{S}^{\mathsf{IV}}(\epsilon)$, then the rest of the proof follows from the previous part as we already showed $m$ is a minimizer among all the functions in $\mathcal{H}_{\Phi}$ and $\mathcal{S}^{\mathsf{IV}}(\epsilon) \subseteq \mathcal{H}_{\Phi}$.

Suppose $m \not\in \mathcal{S}^{\mathsf{IV}}(\epsilon)$. This implies there exists at least one environment for which $\|\nabla_{w|w=1.0} R^{e}(w\cdot m)\|^2 > 0 \implies \nabla_{w|w=1.0} R^{e}(w\cdot m) \not=0 $. As a result $\exists$ $w$ in the neighborhood of $w=1.0$ where $R^{e}(w\cdot m) < R^{e}(m)$ (if such a point does not exist and all the points in the neigbohood of $w=1.0$ are greater than or equal to $R^{e}(m)$ that would make $\nabla_{w|w=1.0} R^{e}(w\cdot m) =0 $, which would be a contradiction). Therefore, $R^{e}(w\cdot m) < R^{e}(m)$. However, this is a contradiction as we know that $m$ is the unique optimizer for each environment. Hence, $m\not\in \mathcal{S}^{\mathsf{IV}}(\epsilon)$ cannot be true and thus $m \in \mathcal{S}^{\mathsf{IV}}(\epsilon)$. This completes the proof.

\end{proof}

Next, we prove Proposition \ref{prop4: emr_irm_real} from the main body of the manuscript. 

\begin{proposition} \label{prop4: emr_irm_real_append}Let $\ell$ be the square loss.  For every $\nu>0, \epsilon>0$ and $\delta\in (0,1)$, if $\mathcal{H}_{\Phi}$ is a finite  hypothesis class, $m\in \mathcal{H}_{\Phi}$, Assumptions \ref{assm3: bounded loss and gradient}, \ref{assm4: invariant_cexp} hold, and

$\bullet$  if the  number of samples $|D|$ is greater than
$\max\big\{\frac{8L^2}{\nu^2} \log(\frac{4|\mathcal{H}_{\Phi}|}{\delta}),\frac{16L'^{4}}{\epsilon^2}\log(\frac{2}{\delta}) \big\}$, then  with a probability at least $1-\delta$, every solution $\hat{\Phi}$ to EIRM (\eqref{eqn: EIRM_cons_swap}) satisfies  $R(m) \leq R(\hat{\Phi}) \leq R(m) + \nu$. If also $\nu<\tilde{\kappa}$, then $\Phi^{\dagger} = m$. 

% If  Assumptions \ref{assm3: bounded loss and gradient}, \ref{assm4: invariant_cexp} and \ref{assm5: invariant_realizable} hold, $\mathcal{H}_{\Phi}$ be a finite hypothesis classes and 

$\bullet$ if the number of samples $|D|$ is greater than
 $\frac{8L^2}{\nu^2}\log(\frac{2|\mathcal{H}_{ \Phi}|}{\delta})$, then with a probability at least $1-\delta$, every solution $\Phi^{\dagger}$ to ERM satisfies $R(m) \leq R(\Phi^{\dagger}) \leq R(m) + \nu$.  If also $\nu<\tilde{\kappa}$, then $\Phi^{\dagger} = m$. 
\end{proposition}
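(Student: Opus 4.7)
My plan is to handle both bullets together by combining two Hoeffding-based concentration steps with the uniqueness structure provided by Lemma \ref{lemma4: unique_ls}, which under Assumption \ref{assm4: invariant_cexp} and $m\in\mathcal{H}_{\Phi}$ guarantees that $m$ is the unique minimizer of $R$ over $\mathcal{H}_{\Phi}$ and the unique solution of the population IRM \eqref{eqn: IRM_cons_swap}. A key fact to record upfront is that $R^{'}(m)=0$: since $m$ minimizes each $R^{e}$ individually, the convex scalar map $w\mapsto R^{e}(w\cdot m)$ attains its minimum at $w=1$, so its derivative vanishes there. This observation reduces the verification of the IRM constraint at $m$ to a pointwise concentration event, which is precisely what frees the second term of the EIRM sample complexity from a $\log|\mathcal{H}_{\Phi}|$ factor.

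For the ERM bullet, I would apply Hoeffding (Lemma \ref{lemma1: hoeffding}) to each $\Phi\in\mathcal{H}_{\Phi}$ and union-bound, obtaining that $|D|\geq\frac{8L^{2}}{\nu^{2}}\log\big(\frac{2|\mathcal{H}_{\Phi}|}{\delta}\big)$ samples suffice for $D$ to be $\frac{\nu}{2}$-representative w.r.t.\ $\ell$ with probability at least $1-\delta$. On this event, the standard chain
\begin{equation*}
R(\Phi^{\dagger}) - \tfrac{\nu}{2} \;\leq\; \hat{R}(\Phi^{\dagger}) \;\leq\; \hat{R}(m) \;\leq\; R(m) + \tfrac{\nu}{2},
\end{equation*}
combined with $R(m)\leq R(\Phi^{\dagger})$ from Lemma \ref{lemma4: unique_ls}, yields $R(m)\leq R(\Phi^{\dagger})\leq R(m)+\nu$. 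If additionally $\nu<\tilde{\kappa}$, any $\Phi^{\dagger}\neq m$ would force $R(\Phi^{\dagger})-R(m)\geq\tilde{\kappa}>\nu$, contradicting the two-sided bound; hence $\Phi^{\dagger}=m$.

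For the EIRM bullet, I would define two sub-events and allocate a $\delta/2$ budget to each. The first is the same $\frac{\nu}{2}$-representativeness of $D$ as above, producing the $\frac{8L^{2}}{\nu^{2}}\log\big(\frac{4|\mathcal{H}_{\Phi}|}{\delta}\big)$ term. The second is the \emph{pointwise} concentration $|\hat{R}^{'}(m)-R^{'}(m)|\leq\epsilon$; since no union bound over $\mathcal{H}_{\Phi}$ is needed, Hoeffding applied to the $|\tilde{D}|=|D|/2$ i.i.d.\ summands of $\ell^{'}$ (each bounded by $L'^{2}$) yields the $\frac{16L'^{4}}{\epsilon^{2}}\log\big(\frac{2}{\delta}\big)$ term. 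On the joint event, $R^{'}(m)=0$ combined with the pointwise bound gives $\hat{R}^{'}(m)\leq\epsilon$, so $m\in\hat{\mathcal{S}}^{\mathsf{IV}}(\epsilon)$ and $m$ is feasible for EIRM. Consequently $\hat{R}(\hat{\Phi})\leq\hat{R}(m)$, and the same telescoping chain as in the ERM case delivers $R(m)\leq R(\hat{\Phi})\leq R(m)+\nu$; the uniqueness conclusion when $\nu<\tilde{\kappa}$ follows by the identical contradiction argument.

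The main subtlety — and the reason the EIRM bound in the covariate-shift setting differs from the ERM bound only by a $|\mathcal{H}_{\Phi}|$-free additive term — is recognizing that IRM feasibility must be certified \emph{only at the single known-optimal candidate} $m$, not uniformly over $\mathcal{H}_{\Phi}$. This asymmetry between $\hat{R}$ (which is compared across all candidates and so incurs $\log|\mathcal{H}_{\Phi}|$) and $\hat{R}^{'}$ (which is only needed at $m$) is what replaces the $\max\{1/\epsilon^{2},1/\nu^{2}\}\log|\mathcal{H}_{\Phi}|$ scaling of Proposition \ref{prop2: scomp_irm} with the $|\mathcal{H}_{\Phi}|$-free penalty term here, and is the main departure from the general-case proof.
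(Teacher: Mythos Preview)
Your proposal is correct and follows essentially the same route as the paper: both arguments split into a uniform $\tfrac{\nu}{2}$-representativeness event for $\hat R$ (incurring the $\log|\mathcal{H}_\Phi|$ factor) and a \emph{pointwise} Hoeffding event for $\hat R'$ at the single hypothesis $m$ (exploiting $R'(m)=0$, hence no union bound), then combine via the standard sandwich chain and the $\tilde\kappa$-separation for uniqueness. The only cosmetic difference is that the paper takes the pointwise tolerance to be $\epsilon/2$ rather than $\epsilon$, which shifts constants but not the structure; your stated bound is sufficient either way.
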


\begin{proof}
We first cover the second part of the Proposition. From Proposition \ref{prop3: scomp_erm}, we know that the output of ERM will satisfy $R(\Phi^{+}) \leq R(\Phi^{\dagger}) \leq R(\Phi^{+}) + \nu$. In this case from Lemma \ref{lemma4: unique_ls}, it follows that $\Phi^{+} = m$. From the definition of $\tilde{\kappa}$ and the fact that $\nu <\tilde{\kappa}$ implies that $\Phi^{\dagger} = m$. We now move to the first part of the Proposition.

For EIRM we will derive a tighter bound on sample complexity than the one in Proposition \ref{prop2: scomp_irm} since we can now use the Assumption \ref{assm4: invariant_cexp}. Observe that $\forall e \in \mathcal{E}_{tr}, \; \nabla_{w| w=1.0}R^{e}(w\cdot m) = 0$ (see the proof of Lemma \ref{lemma4: unique_ls}). Therefore, $R^{'}(m) =0$.

Define an event $A$: $D$ is $\frac{\nu}{2}$-representative w.r.t $\mathcal{X}$, $\mathcal{H}_{\Phi}$, loss $\ell$ and distribution $\bar{\mathbb{P}}$. 

Define an event $B$: $\tilde{D}$ is such that $|\hat{R}^{'}(m) - R^{'}(m)| \leq \frac{\epsilon}{2}$.   Since $R^{'}(m)=0$,  
$|\hat{R}^{'}(m) - R^{'}(m)| \leq \frac{\epsilon}{2} \implies |\hat{R}^{'}(m) | \leq \frac{\epsilon}{2} \implies \hat{R}^{'}(m)  \leq \frac{\epsilon}{2} $.
 Therefore, $m \in \hat{\mathcal{S}}^{\mathsf{IV}}(\epsilon)$.
 
If $A\cap B$ occurs, then $R(m) \leq R(\hat{\Phi}) \leq R(m) + \nu$; we justify claim next.
 Suppose $\hat{\Phi}$ solves \eqref{eqn: EIRM_cons_swap}. If event $A$ occurs, then from $\frac{\nu}{2}$-representative condition  we know that $R(\hat{\Phi}) -\frac{\nu}{2}    \leq \hat{R}(\hat{\Phi})$. From optimality of $\hat{\Phi}$ it follows that  $\hat{R}(\hat{\Phi}) \leq \hat{R}(m)$ (event $B$ $\implies m \in \hat{\mathcal{S}}^{\mathsf{IV}}(\epsilon)$). Moreover, from $\frac{\nu}{2}$-representative property, we  conclude that $\hat{R}(m) \leq R(m) + \frac{\nu}{2}$. We combine these conditions as follows. 

\begin{equation}
R(\hat{\Phi}) -\frac{\nu}{2}    \leq \hat{R}(\hat{\Phi}) \leq \hat{R}(m) \leq R(m) + \frac{\nu}{2}
\end{equation}
 From the above we have $R(m) \leq R(\hat{\Phi}) \leq R(m) + \nu$. Recall the definition of $\tilde{\kappa}$ and since $\nu<\tilde{\kappa}$ $\implies$ $\hat{\Phi} =m$.
 
Next, we bound the probability of success.
$P(A\cap B) = 1- P(A^{c} \cup B^{c}) \geq 1- P(A^{c}) - P(B^{c})$. If we can bound $P(A^{c}) \leq \frac{\delta}{2}$ and $P(B^{c}) \leq \frac{\delta}{2}$, then we know the probability of success is at least $1-\delta$.

We write $$\mathbb{P}(A) = \mathbb{P}\Big(\Big\{D: \forall h \in \mathcal{H}_{\Phi}, |\hat{R}(h) - R(h)| \leq \frac{\nu}{2} \Big\}\Big) = 1-\mathbb{P}\Big(\Big\{D: \exists h \in \mathcal{H}_{\Phi}, |\hat{R}(h) - R(h)| > \frac{\nu}{2} \Big\}\Big)  $$

% \begin{equation}
%     \begin{split}
%         & \mathbb{P}\Big(\Big\{D: \exists h \in \mathcal{H}_{ \Phi}, |\hat{R}(h) - R(h)| >\frac{\nu}{2} \Big\}\Big)   = \mathbb{P}(\cup_{h \in \mathcal{H}_{ \Phi}}\{|\hat{R}(h) - R(h)| >\frac{\nu}{2}\}) \\
%         & \leq \sum_{h\in \mathcal{H}_{\Phi}}\mathbb{P}(|\hat{R}(h) - R(h)| > \frac{\nu}{2} )
%     \end{split}
% \end{equation}

% % $\mathbb{P}(\{D: \exists h \in \mathcal{H}_{ \Phi}, |\hat{R}(h) - R(h)| >\frac{\nu}{2} \})  = \mathbb{P}(\cup_{h \in \mathcal{H}_{ \Phi}}) \leq \sum_{h\in \mathcal{H}_{\Phi}}\mathbb{P}(|\hat{R}(h) - R(h)| > \frac{\nu}{2} )$

% The loss function is bounded $|\ell(\Phi(\cdot), \cdot) | \leq L$. From Hoeffding's inequality in Lemma \ref{lemma1: hoeffding} it follows that  $$\mathbb{P}(|\hat{R}(h) - R(h)| > \frac{\nu}{2} ) \leq 2 \exp(-\frac{\frac{|D|\nu^2}{4}}{2L^2})$$
% Using this expression in the union bound above we get the condition that
% $$2|\mathcal{H}_{ \Phi}| \exp(-\frac{\frac{|D|\nu^2}{4}}{2L^2}) \leq \frac{\delta}{2} $$

% The loss function is bounded $|\ell(\Phi(\cdot), \cdot) | \leq L$. From Hoeffding's inequality in Lemma \ref{lemma1: hoeffding} it follows that  $$\mathbb{P}(|\hat{R}(h) - R(h)| > \frac{\nu}{2} ) \leq 2 \exp(-\frac{\frac{|D|\nu^2}{4}}{2L^2})$$
% Using this expression in the union bound above we get the condition that
% $$2|\mathcal{H}_{ \Phi}| \exp(-\frac{\frac{|D|\nu^2}{4}}{2L^2}) \leq \frac{\delta}{2} $$
From \eqref{prop2_proof:eqn6} if the condition \begin{equation}|D| \geq 
\frac{8L^2}{\nu^2}\log\Big(\frac{4|\mathcal{H}_{ \Phi}|}{\delta}\Big)
\label{prop4_proof: eqn1}
\end{equation}is true, then 
is true, then event $A^{c}$ occurs with probability at most $\frac{\delta}{2}$.

We write $\mathbb{P}(B) = \mathbb{P}\Big(\Big\{\tilde{D}:, |\hat{R}^{'}(m) - R^{'}(m)| \leq \frac{\epsilon}{2} \Big\}\Big) = 1-\mathbb{P}\Big(\Big\{\tilde{D}: |\hat{R}^{'}(m) - R^{'}(m)| > \frac{\epsilon}{2} \Big\}\Big)  $. 
The  gradient of loss function is bounded $|\frac{\partial \ell(\Phi(\cdot), \cdot)}{\partial w}|_{w=1.0} | \leq L^{'}$. From Hoeffding's inequality in Lemma \ref{lemma1: hoeffding} it follows that  

\begin{equation}\mathbb{P}\Big(\Big\{\tilde{D}: |\hat{R}^{'}(h) - R^{'}(h)| > \frac{\epsilon}{2}\Big\}  ) \leq 2 \exp(-\frac{|D|\epsilon^2}{16L'^4}\Big)
\label{prop4_proof: eqn3}
\end{equation}
We bound the above  \eqref{prop4_proof: eqn3} by $\frac{\delta}{2}$ to get

\begin{equation}
2 \exp(-\frac{|D|\epsilon^2}{16L'^4})\leq \frac{\delta}{2}  \implies |D| \geq  \frac{16L'^4}{\epsilon^2}\log(\frac{4}{\delta}) 
\label{prop4_proof: eqn2}
\end{equation}

Combining the two conditions \eqref{prop4_proof: eqn1} and \eqref{prop4_proof: eqn2}, 
$$|D| \geq  \max \Big\{ \frac{8L^2}{\nu^2}\log(\frac{4|\mathcal{H}_{\Phi}|}{\delta}), \frac{16L'^{4}}{\epsilon^2}\log(\frac{4}{\delta}) \Big\}$$
This ensures $P(A\cap B) \geq 1-\delta$.  This completes the proof. 

\end{proof}

Before stating the proof of Proposition \ref{prop5: irm_cfd_chd}, we will prove an intermediate proposition. 

For clarity, we will restate the result (Theorem 9 from \cite{arjovsky2019invariant}) next. 

\begin{assumption} \textbf{Linear general position.}
\label{assm7: linear_gen_posn}
 A set of training environments $\mathcal{E}_{tr}$ is said to lie in a linear general position of degree $r$ for some $r\in \mathbb{N}$ if $|\mathcal{E}_{tr}| > n-r + n/r$ and for all non-zero $x\in \mathbb{R}^{n}$ 
\begin{equation}
    \mathsf{dim}\Big(\mathsf{span}\Big\{\Sigma^e x-c^e\Big\}_{e\in \mathcal{E}_{tr}}\Big) > n-r
\end{equation}
where $\mathsf{span}$ is the linear span, $\mathsf{dim}$ is the dimension, and recall $n$ is dimension of $X^{e}$. This assumption checks for diversity in the environments and holds almost everywhere \cite{arjovsky2019invariant}. 
\end{assumption}
\begin{proposition} \label{thm10_arjovsky} (Theorem 9 \cite{arjovsky2019invariant})
If Assumptions \ref{assm6: linear_model} and \ref{assm7: linear_gen_posn} (with $r=1$) hold and let  $\Phi \in \mathbb{R}^{ n \times 1}$ ($\Phi\not=0$), then \begin{equation}
  \Phi^{\mathsf{T}}  \mathbb{E}^{e}[X^{e}X^{e,\mathsf{T}}]\Phi = \Phi^{\mathsf{T}}\mathbb{E}^{e}[X^{e}Y^{e}]
  \label{thm1: eqn1}
\end{equation}
holds for all $e\in \mathcal{E}_{tr}$ iff $\Phi=\tilde{S}^{\mathsf{T}}\gamma$.  
\end{proposition}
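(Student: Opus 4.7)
\textbf{Proof plan for Proposition \ref{thm10_arjovsky}.}

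The plan is to treat the two directions of the ``iff'' separately, with the forward direction being a short direct computation and the converse being a linear-algebra argument that isolates the linear-general-position assumption.

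For the $(\Leftarrow)$ direction, I will substitute $\Phi = \tilde{S}^{\mathsf{T}}\gamma$ into both sides. Using the identity $\tilde{S} X^{e} = Z_1^{e}$, the quantity $\Phi^{\mathsf{T}} X^{e}$ equals $\gamma^{\mathsf{T}} Z_1^{e}$. Then $\Phi^{\mathsf{T}}\mathbb{E}^{e}[X^{e}X^{e,\mathsf{T}}]\Phi = \mathbb{E}^{e}[(\gamma^{\mathsf{T}} Z_1^{e})^2]$, while $\Phi^{\mathsf{T}}\mathbb{E}^{e}[X^{e}Y^{e}] = \mathbb{E}^{e}[\gamma^{\mathsf{T}} Z_1^{e}(\gamma^{\mathsf{T}} Z_1^{e} + \varepsilon^{e})]$. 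The cross term $\mathbb{E}^{e}[\gamma^{\mathsf{T}} Z_1^{e}\varepsilon^{e}]$ vanishes by the independence $\varepsilon^{e}\perp Z_1^{e}$ together with $\mathbb{E}[\varepsilon^{e}]=0$ (both in Assumption \ref{assm6: linear_model}), so the two sides agree.

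For the $(\Rightarrow)$ direction, the key algebraic rewrite is to expand $Y^{e}$ and again use $Z_1^{e}=\tilde{S} X^{e}$ to obtain
\begin{equation*}
\mathbb{E}^{e}[X^{e}Y^{e}] \;=\; \mathbb{E}^{e}[X^{e}X^{e,\mathsf{T}}]\,\tilde{S}^{\mathsf{T}}\gamma \;+\; \mathbb{E}^{e}[X^{e}\varepsilon^{e}].
\end{equation*}
Setting $\Delta := \Phi - \tilde{S}^{\mathsf{T}}\gamma$, the hypothesized identity \eqref{thm1: eqn1} becomes
\begin{equation*}
\Phi^{\mathsf{T}}\bigl(\mathbb{E}^{e}[X^{e}X^{e,\mathsf{T}}]\,\Delta \;-\; \mathbb{E}^{e}[X^{e}\varepsilon^{e}]\bigr) \;=\; 0,\qquad \forall\, e\in\mathcal{E}_{tr}.
\end{equation*}
The right factor is exactly the vector appearing in Assumption \ref{assm7: linear_gen_posn} evaluated at $x=\Delta$. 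I will argue by contradiction: suppose $\Delta\neq 0$. Then by linear general position with $r=1$, the set $\{\mathbb{E}^{e}[X^{e}X^{e,\mathsf{T}}]\,\Delta - \mathbb{E}^{e}[X^{e}\varepsilon^{e}]\}_{e\in\mathcal{E}_{tr}}$ spans a subspace of dimension strictly greater than $n-1$, hence all of $\mathbb{R}^{n}$. But $\Phi$ is orthogonal to every such vector, forcing $\Phi=0$, contradicting the hypothesis $\Phi\neq 0$. Therefore $\Delta=0$, i.e.\ $\Phi=\tilde{S}^{\mathsf{T}}\gamma$.

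The main obstacle I anticipate is purely bookkeeping, namely identifying that the natural change of variables $\Delta=\Phi-\tilde{S}^{\mathsf{T}}\gamma$ is what matches the quantifier structure of Assumption \ref{assm7: linear_gen_posn}; once the rewrite is in place, the spanning argument is immediate and the $(\Leftarrow)$ direction is a one-line application of $\varepsilon^{e}\perp Z_1^{e}$. No appeal to positive definiteness of $\Sigma^{e}$ or to boundedness of $\|S\|$ is needed at this step; those assumptions enter later when the proposition is combined with the $\epsilon$-slack analysis for Proposition \ref{prop5: irm_cfd_chd}.
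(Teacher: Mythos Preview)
Your proposal is correct and follows essentially the same approach as the paper: the paper does not reproduce a proof of this cited result, but in proving the $\epsilon$-approximate version (Proposition~\ref{prop_int: inf_sam_irm_approx}) it uses exactly your rewrite $\Phi^{\mathsf{T}}\bigl(\mathbb{E}^{e}[X^{e}X^{e,\mathsf{T}}]\Delta - \mathbb{E}^{e}[X^{e}\varepsilon^{e}]\bigr)=0$ followed by the linear-general-position/rank-nullity contradiction, and verifies the $(\Leftarrow)$ direction via the same independence computation (see \eqref{thm1: eqn7}--\eqref{thm1: eqn8}). Your phrasing ``$\Phi$ is orthogonal to a spanning set of $\mathbb{R}^{n}$ hence $\Phi=0$'' is just the contrapositive of the paper's ``$q^{e}\in\ker(\Phi^{\mathsf{T}})$ with $\dim\ker(\Phi^{\mathsf{T}})=n-1$''.
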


% We can also derive that $\tilde{S}^{\mathsf{T}}\gamma$ solves the OOD problem provided the data follows the above generative model. We denote $\mathbb{E}[X^{e}X^{e, \mathsf{T}}]$ as $\Sigma_e$

% \begin{assumption}
% \begin{itemize}
% \item $\forall e \in \mathcal{E}_{tr}, \pi^{e} \geq \frac{\pi^{\mathsf{min}}}{|\mathcal{E}_{tr}|}$
% % \item $\exists \; p_{\mathsf{min}}>0$ such that 
% % $\|\tilde{S}^{\mathsf{T}}\gamma\|^2 \geq \min_{e \in \mathcal{E}_{tr}}\{\frac{2 p_{\mathsf{min}}}{\lambda_{\mathsf{min}}(\Sigma_{e})}\}$ 
% \item $\exists \; \omega>0$ such that 
% $\|\tilde{S}^{\mathsf{T}}\gamma\|^2 \geq 2\omega$, $\forall \Phi \in \mathcal{H}_{\Phi}, \|\Phi\|^2 \geq \omega$, and $\tilde{S}^{\mathsf{T}}\gamma \in \mathcal{H}_{\Phi}$ 
% \item  $\lambda_{\mathsf{min}}  = \min_{e\in \mathcal{E}_{tr}}\lambda_{\mathsf{min}}(\Sigma_{e})>0$
% \end{itemize}
% \end{assumption}

% Observe that $\frac{\frac{2}{\omega\lambda_{\mathsf{min}} }\sqrt{\frac{\epsilon |\mathcal{E}_{tr}|}{\pi^{\mathsf{min}} }}}{1-\frac{2}{\omega\lambda_{\mathsf{min}}}\sqrt{\frac{\epsilon |\mathcal{E}_{tr}|}{\pi^{\mathsf{min}}}}}$ decreases as $\epsilon$ goes to zero; select $\epsilon_0$ to be the point where $\frac{\frac{2}{\omega\lambda_{\mathsf{min}} }\sqrt{\frac{\epsilon |\mathcal{E}_{tr}|}{\pi^{\mathsf{min}} }}}{1-\frac{2}{\omega\lambda_{\mathsf{min}}}\sqrt{\frac{\epsilon |\mathcal{E}_{tr}|}{\pi^{\mathsf{min}}}}}$ is $1-\frac{1}{\sqrt{2}}$, 
 Next, we propose an $\epsilon$-approximation of Proposition \ref{thm10_arjovsky}.
 
% Define $\epsilon_0 = \frac{\pi^{\mathsf{min}}}{|\mathcal{E}_{tr}|} (\omega \lambda_{\mathsf{min}})^2 (12-8\sqrt{2})$.
% . 
\begin{proposition} \label{prop_int: inf_sam_irm_approx}Let $\ell$ be the square loss.
If Assumptions \ref{assm6: linear_model}, \ref{assm8:regularity conditions on envmts and hphi} and \ref{assumption_singular_value} hold , a $\Phi$ that satisfies the constraint in \eqref{eqn: IRM_cons_swap} \begin{equation}
    R^{'}(\Phi) \leq \epsilon
\end{equation}
satisfies $\|\Phi- \tilde{S}^{\mathsf{T}}\gamma\|^2 \leq \epsilon$, where $\epsilon\in (0,1)$.
% where $\alpha \in [\frac{1}{1+\frac{1}{2\omega\lambda_{\mathsf{min}}}\sqrt{\frac{\epsilon |\mathcal{E}_{tr}|}{\pi^{\mathsf{min}}}}},  \frac{1}{1-\frac{1}{2\omega\lambda_{\mathsf{min}}}\sqrt{\frac{\epsilon |\mathcal{E}_{tr}|}{\pi^{\mathsf{min}}}}}]$
\end{proposition}
\begin{proof}
% From Assumption 7, we can write that \eqref{thm1: eqn1} implies
 Let us start by simplifying $\nabla_{w|w=1.0} R^{e}(w\cdot\Phi)$, using square loss for $\ell$ and linear representation $\Phi\in \mathbb{R}^{n\times 1}$.
\begin{equation} 
\begin{split}
    \nabla_{w|w=1.0} R^{e}(w\cdot\Phi) & = \frac{\partial \mathbb{E}^{e}\big[\big(Y^{e}-w\cdot\Phi^{\mathsf{T}}X^{e}\big)^2\big]}{\partial w}\Big|_{w=1.0} = 2\mathbb{E}^{e}\big[(\Phi^{\mathsf{T}}X^{e})^2\big] - 2\mathbb{E}^{e}\big[\Phi^{\mathsf{T}}X^{e}Y^{e}\big] \\ 
    & = 2\Phi^{\mathsf{T}}\mathbb{E}^{e}\big[X^{e}X^{e, \mathsf{T}}\big]\Phi - 2\Phi^{\mathsf{T}} \mathbb{E}^{e}\big[Y^{e}X^{e}\big]
    \end{split}
    \label{thm1_proof: eqn}
\end{equation}

Plug the above \eqref{thm1_proof: eqn} in the condition $R^{'}(\Phi)\leq \epsilon$ to get
\begin{equation}
     R^{'}(\Phi) = \sum_{e}\pi^{e}\|\nabla_{w| w=1.0} R^{e}(w\cdot\Phi)\|^2 =  4\sum_{e}\pi^{e}\Big( \Phi^{\mathsf{T}}\mathbb{E}^{e}\big[X^{e}X^{e,\mathsf{T}}\big]\Phi - \Phi^{\mathsf{T}}\mathbb{E}^{e}\big[X^{e}Y^{e}\big] \Big)^2 \leq \epsilon
      \label{thm1_proof: eqn0}
\end{equation}

From the bound on $\pi^{e}$ in Assumption \ref{assm6: linear_model} it follows that 
\begin{equation}
\begin{split}
  & \sum_{e}\pi^{e}\|\nabla_{w| w=1.0} R^{e}(w\cdot\Phi)\|^2  \geq \pi^{\mathsf{min}}  \sum_{e}\|\nabla_{w| w=1.0} R^{e}(w\cdot\Phi)\|^2  = \\ &\pi^{\mathsf{min}}  \sum_{e}\Big( \Phi^{\mathsf{T}}\mathbb{E}^{e}\big[X^{e}X^{e,\mathsf{T}}\big]\Phi - \Phi^{\mathsf{T}}\mathbb{E}^{e}\big[X^{e}Y^{e}\big] \Big)^2 =\pi^{\mathsf{min}} \sum_{e}\Big(\Phi^{\mathsf{T}}(\Sigma^e (\Phi-\tilde{S}^{\mathsf{T}}\gamma) - c^e)\Big)^2 \\ 
  & = \pi^{\mathsf{min}} \sum_{e}\Big((u + \tilde{S}^{\mathsf{t}}\gamma)^{\mathsf{T}}(\Sigma^e u  - c^e)\Big)^2 = \pi^{\mathsf{min}} \|(u + \tilde{S}^{\mathsf{t}}\gamma)^{\mathsf{T}}Q(u) \|^2,
   \label{thm1_proof: eqn1_n}
  \end{split}
\end{equation}
where $u = \Phi - \tilde{S}^{\mathsf{t}}\gamma$. We use the above equation \eqref{thm1_proof: eqn1_n} and combine with equation \eqref{thm1_proof: eqn0} to get
\begin{equation}
\begin{split}
  &  4\pi^{\mathsf{min}} \|(u + \tilde{S}^{\mathsf{t}}\gamma)^{\mathsf{T}}Q(u) \|^2 \leq \epsilon   \\
  &  \lambda_{\mathsf{min}}(Q(u)Q(u)^{\mathsf{T}})\|u + \tilde{S}^{\mathsf{t}}\gamma \|^2 \leq \frac{\epsilon}{4\pi^{\mathsf{min}}} \\ 
\end{split}
\end{equation}

We divide the analysis into two cases a) $\|u\|^2 \geq  \epsilon$ and b) $\|u\|^2< \epsilon$.  Consider case a). We use the bound on minimum eigenvalue in Assumption \ref{assumption_singular_value} to get 

\begin{equation}
\begin{split}
      &  \frac{1}{4 \pi^{\mathsf{min}}\omega}  \|u + \tilde{S}^{\mathsf{t}}\gamma \|^2  \leq \frac{\epsilon}{4\pi^{\mathsf{min}}} \\ 
  &  \|u + \tilde{S}^{\mathsf{t}}\gamma \|^2  \leq \omega \epsilon 
 \end{split}
\end{equation}

Further since $\epsilon <1$ we get

% Further from Assumption \ref{assumption_singular_value} we can use the bound on $\eta \geq \frac{1}{4\pi^{\mathsf{min}}\omega}$ to  simplify the above 
\begin{equation}
     \|u + \tilde{S}^{\mathsf{t}}\gamma \|^2 \leq \omega\epsilon \implies \|\Phi\|^2 \leq \omega
\end{equation}

From the inductive bias assumption (Assumption \ref{assm8:regularity conditions on envmts and hphi}), we know that $ \|\Phi\|^2 > \omega$. Therefore, the only case that is left is $\|u\|^2< \epsilon$, which implies $\|\Phi- \tilde{S}^{\mathsf{T}}\gamma\|^2 \leq \epsilon$. This completes the proof.

\end{proof}

Before proving Proposition \ref{prop5: irm_cfd_chd}, we first establish that Assumptions \ref{assm6: linear_model} and \ref{assm8:regularity conditions on envmts and hphi} are sufficient to ensure that Assumption \ref{assm3: bounded loss and gradient} holds and we can thus use bounds $L$ and $L'$ defined in Assumption \ref{assm3: bounded loss and gradient}. 

\textbf{Proving Assumption \ref{assm3: bounded loss and gradient} for square loss $\ell$ from Assumptions \ref{assm6: linear_model},  \ref{assm8:regularity conditions on envmts and hphi}.} 
From  Assumption \ref{assm6: linear_model}, we have

\begin{equation}
\begin{split}
&Z^{e} = (Z_1^{e},Z_2^{e})\\
  &X^{e}= SZ^e\\    
  &\|X^{e}\| \leq \|S\|\|Z^{e}\| 
 \end{split}
  \label{ass3_ass68}
\end{equation}

From Assumption \ref{assm6: linear_model},  $\|S\|$ is bounded and $\|Z^{e}\|$ is bounded $\implies$ $\|X^{e}\|$ is bounded as well (from \eqref{ass3_ass68}). Therefore, there exists $X^{\mathsf{sup}}<\infty$, s.t. $\|X^{e}\|\leq X^{\mathsf{sup}}$. 

\begin{equation}
\begin{split}
    &Y^{e} = (\tilde{S}^{\mathsf{T}}\gamma)^{\mathsf{T}}X^{e}+\varepsilon^{e} \\
    &| Y^{e} | \leq  \|\tilde{S}^{\mathsf{T}}\gamma\|\|X^{e}\| + |\varepsilon^{e}|\implies \\
    & |Y^{e}| \leq \sqrt{\Omega}X^{\mathsf{sup}} + \varepsilon^{\mathsf{sup}} 
\end{split}
\label{ass3_ass68_1}
\end{equation}

In the last step of \eqref{ass3_ass68_1}, we use $\|\tilde{S}^{\mathsf{T}}\gamma\|^2 \leq \Omega$ (Assumption \ref{assm8:regularity conditions on envmts and hphi}),  $\|X^{e}\| \leq X^{\mathsf{sup}}$ derived above (\eqref{ass3_ass68}), and $|\varepsilon^{e}| \leq \varepsilon^{\mathsf{sup}}$ (Assumption \ref{assm6: linear_model}).  Therefore, $Y^{e}$ is bounded and there exists a $K$ such that $|Y^{e}| \leq K \leq \sqrt{\Omega}X^{\mathsf{sup}} + \varepsilon^{\mathsf{sup}}$. Therefore, $\forall \Phi \in \mathcal{H}_{\Phi}$ and for all $X^{e},Y^{e}$ sampled from the model in Assumption \ref{assm6: linear_model} we have

\begin{equation}
   \ell(\Phi(X^{e}),Y^{e}) = (Y^{e}-\Phi^{\mathsf{T}}X^{e})^2 \leq (K+ \sqrt{\Omega}X^{\mathsf{sup}})^2
   \label{ass3_ass68_2}
\end{equation}

\begin{equation}
    \begin{split}
        &  \Big|\frac{\partial \ell(w\cdot\Phi^{\mathsf{T}}X,Y)}{\partial w}\Big|_{w=1.0}\Big| = \big|\Phi^{\mathsf{T}}X( \Phi^{\mathsf{T}}X -Y)\big| \leq (\sqrt{\Omega}X^{\mathsf{sup}})(\sqrt{\Omega}X^{\mathsf{sup}}+K)
    \end{split}
    \label{ass3_ass68_3}
\end{equation}

From \eqref{ass3_ass68_2}, we conclude that $\ell(\Phi(\cdot), \cdot)$ is bounded and there exists an $L$ such that $|\ell(\Phi(\cdot), \cdot)| \leq L \leq(K+ \sqrt{\Omega}X^{\mathsf{sup}})^2 $. 
From \eqref{ass3_ass68_3}, we conclude that $\frac{\partial \ell(w.\Phi^{\mathsf{T}}X,Y)}{\partial w}\Big|_{w=1.0}$ is bounded and there exists an $L'$ such that  $\Big|\frac{\ell(w.\Phi(\cdot),\cdot)}{\partial w}\Big|_{w=1.0}\Big|\leq L^{'} \leq (\sqrt{\Omega}X^{\mathsf{sup}})(\sqrt{\Omega}X^{\mathsf{sup}}+K)$. 

Next, we prove Proposition \ref{prop5: irm_cfd_chd} from the main body of the manuscript. 
\begin{proposition} \label{prop5: irm_cfd_chd_append} 
Let $\ell$ be the square loss. Given $\epsilon\in (0,1)$  and a $\delta \in (0,1)$, if Assumptions  \ref{assm6: linear_model},  \ref{assm8:regularity conditions on envmts and hphi}, \ref{assumption_singular_value} hold and if the number of data points $|D|$ is greater than $ \frac{16L'^{4}}{\epsilon^2}\log\big(\frac{2|\mathcal{H}_{\Phi}|}{\delta}\big)$, then  with a probability at least $1-\delta$, every solution $\hat{\Phi}$ to EIRM (\eqref{eqn: EIRM_cons_swap} with $\frac{\epsilon}{2}$) satisfies  $\|\hat{\Phi}- (\tilde{S}^{\mathsf{T}}\gamma\big)\|^2 \leq  \epsilon$.
% Let $\ell$ be the square loss. For every  $\epsilon\in (0,\epsilon_{\mathsf{th}})$ and $\delta \in (0,1)$, if Assumptions  \ref{assm6: linear_model},  \ref{assm8:regularity conditions on envmts and hphi}, and \ref{assumption_singular_value} hold and if the number of data points $|D|$ is greater than $ \frac{16L'^{4}}{\epsilon^2}\log\big(\frac{2|\mathcal{H}_{\Phi}|}{\delta}\big)$, then  with a probability at least $1-\delta$, every solution $\hat{\Phi}$ to EIRM (\eqref{eqn: EIRM_cons_swap} with $\frac{\epsilon}{2}$ and $\epsilon<1$) satisfies  $\|\hat{\Phi} - \tilde{S}^{\mathsf{T}}\gamma\|^2 \leq \epsilon $.
\end{proposition}

\begin{proof}

Define an event $A$: $\{\tilde{D}: \forall \Phi \in \mathcal{H}_{\Phi}, |\hat{R}^{'}(\Phi)-R^{'}(\Phi)|\leq \frac{\epsilon}{2}\}$. If event $A$ happens, then 
\begin{equation}
\begin{split}
    &\hat{R}^{'}(\Phi) \leq \frac{\epsilon}{2}  \\
    & \hat{R}^{'}(\Phi) - R^{'}(\Phi) + R^{'}(\Phi) \leq \frac{\epsilon}{2} \\
    &  R^{'}(\Phi) \leq \frac{\epsilon}{2} + |\hat{R}^{'}(\Phi) - R^{'}(\Phi)  | \\
     & R^{'}(\Phi)\leq \epsilon
    \end{split}
    \label{proof: prop_irm_cfd_child_eqn1}
\end{equation}

If event $A$ happens, then  every solution $\hat{\Phi}$ to EIRM (\eqref{eqn: EIRM_cons_swap}) satisfies \eqref{proof: prop_irm_cfd_child_eqn1}. We can now use  Proposition \ref{prop_int: inf_sam_irm_approx}. If the Assumptions \ref{assm6: linear_model},  \ref{assm8:regularity conditions on envmts and hphi}, \ref{assumption_singular_value} hold and event $A$ happens, then for all the output of EIRM \eqref{eqn: EIRM_cons_swap}   $\|\hat{\Phi}- \tilde{S}^{t}\gamma\|^2<\epsilon$. Hence, all that remains to be shown is that event $A$ occurs with a probability at least $1-\delta$. Next, we show that if $|D| \geq \frac{16L'^{4}}{\epsilon^2}\log(\frac{2|\mathcal{H}_{\Phi}|}{\delta})$, then with probability $1-\delta$ event $A$ happens. 

Now we need to bound the probability $\mathbb{P}(A)$. We will find an upper bound on the failure probability using Hoeffding's inequality (Lemma \ref{lemma1: hoeffding}) and the bound $L'$ (derived in \eqref{ass3_ass68_3})  as follows. We redo the same analysis as was done in \eqref{prop2_proof:eqn5} for reader's convenience. 

$$\mathbb{P}(A) = 1-\mathbb{P}\Big(\Big\{\tilde{D}: \exists \Phi \in \mathcal{H}_{\Phi}, |\hat{R}^{'}(\Phi)-R^{'}(\Phi)|> \frac{\epsilon}{2}\Big\}\Big) = \mathbb{P}\Big(\bigcup_{\Phi \in \mathcal{H}_{\Phi}}\Big\{\tilde{D}: |\hat{R}^{'}(\Phi)-R^{'}(\Phi)|> \frac{\epsilon}{2}\Big\}\Big) $$ 
\begin{equation}
\begin{split}
& \mathbb{P}\Big(\bigcup_{\Phi \in \mathcal{H}_{\Phi}}\Big\{\tilde{D}: |\hat{R}^{'}(\Phi)-R^{'}(\Phi)|> \frac{\epsilon}{2}\Big\}\Big) \leq \sum_{\Phi\in \mathcal{H}_{\Phi}}\mathbb{P}\Big(\{\tilde{D}: |\hat{R}^{'}(\Phi)-R^{'}(\Phi)|> \frac{\epsilon}{2}\Big\}\Big)    
\leq 2|\mathcal{H}_{\Phi}| e^{-\frac{\epsilon^{2}|D|}{16L'^{4}}}\\
&2|\mathcal{H}_{\Phi}|e^{-\frac{\epsilon^{2}|D|}{16L'^{4}}}\leq \delta \implies \mathbb{P}(A^{c}) \leq \delta \\
& |D| \geq \frac{16L'^{4}}{\epsilon^2}\log\Big(\frac{2|\mathcal{H}_{\Phi}|}{\delta}\Big) \implies \mathbb{P}(A^{c}) \leq \delta
\end{split}
\end{equation}
Hence, we know that if $|D| \geq \frac{16L'^{4}}{\epsilon^2}\log\Big(\frac{2|\mathcal{H}_{\Phi}|}{\delta}\Big)$, then with probability $1-\delta$ event $A$ happens.

The proof characterized the property of $\hat{\Phi}$ that satisfies $\hat{R}^{'}(\hat{\Phi}) \leq \epsilon$. But how do we know such a $\hat{\Phi}$ exists; we show that a $\hat{\Phi}$ always exists. 
Consider a $\Phi$ that satisfiess the following.
\begin{equation}
\begin{split}
    & R^{'}(\Phi) \leq \frac{\epsilon}{2}  \\
     & \hat{R}^{'}(\Phi) - \hat{R}^{'}(\Phi) + R^{'}(\Phi) \leq \frac{\epsilon}{2} \\
      & \hat{R}^{'}(\Phi) \leq \frac{\epsilon}{2} + |\hat{R}^{'}(\Phi) - R^{'}(\Phi)  | \\
      & \hat{R}^{'}(\Phi)\leq \epsilon \; (\text{follows from event} \; A)
    \end{split}
    \label{proof: prop_irm_cfd_chd_eqn1}
\end{equation}

We know that $R^{'}(\tilde{S}^{\mathsf{T}}\gamma) = 0$. From \eqref{proof: prop_irm_cfd_chd_eqn1}, $\tilde{\mathcal{S}}^{\mathsf{T}}\gamma \in \mathcal{H}_{\Phi}$ satisfies $\hat{R}^{'}(\Phi)\leq \epsilon$. 

% Therefore, if event $A$ happens solutions to $ R^{'}(\Phi) \leq \frac{\epsilon}{2}$ are subset solutions of  $\hat{R}^{'}(\Phi) \leq \frac{3\epsilon}{4}$ which in turn is a subset of 
% $ R^{'}(\Phi) \leq \epsilon$. If $\tilde{S}^{t}\gamma \in \mathcal{H}_{\Phi}$, then $\tilde{S}^{t}\gamma \in  R^{'}(\Phi) \leq \frac{\epsilon}{2}$, which implies it is also contained in $\hat{R}^{'}(\Phi) \leq \frac{3\epsilon}{4}$. This shows that if we assume $\mathcal{H}_{\Phi}$ feasibility is guaranteed with probability $1-\delta$ as well. 

\end{proof}

\subsection{Extensions: Polynomial models, infinite hypothesis classes, binary-classification}

\subsubsection{Polynomial model}

In the main body of the work, we did not cover the polynomial model in detail due to space limitations. In this section, we study the polynomial model and show how the results for the linear model can be generalized to this case. We first begin by stating the model itself.

\begin{assumption} \label{pol_model}
\begin{equation}
    \begin{split}
    &    e \sim \mathsf{Categorical}(\pi^{e}), \pi^{e}>0 \forall e \in \mathcal{E}_{tr} \\
    &    Y^{e} = \gamma^{\mathsf{T}}\zeta_{p}^{c}\big(Z_1^{e}\big) + \varepsilon^{e}, \varepsilon^{e} \perp Z_1^{e},  \mathbb{E}[\varepsilon^{e}] = 0, \mathbb{E}[(\varepsilon^{e})^2] = \sigma^2, |\varepsilon^{e}| \leq \varepsilon^{\mathsf{sup}}   \\ 
      &  X^{e} = S(Z_1^{e}, Z_2^{e})
    \end{split}
\end{equation}

Assume that $Z_1^e$ component of $S$ is invertible, i.e. $\exists \tilde{S}$ such that $\tilde{S}(S(Z_1^e,Z_2^e))=Z_1^e$ and also $\tilde{S}^{t}\gamma \not =0 $. In the above $\zeta_p^{a}$ is a polynomial feature map of degree $p$ defined as $\zeta_p^{a}:\mathbb{R}^{a} \rightarrow \mathbb{R}^{a^{'}}$, where $a$ denotes the dimension of the input to the map $\zeta_p^{a}$, 
$\zeta_p^{a}(W) = [W, W\otimes W,\dots,  (W \otimes W ... \text{p times} \otimes W)] = [(W^{\otimes i})_{i=1}^{p}]$ and $\otimes$ is the Kronecker product. Also, $a^{'} = \sum_{i=1}^{p}a^{i}$. $\forall e \in \mathcal{E}_{tr}, \pi^{e} \geq \frac{\pi^{\mathsf{min}}}{|\mathcal{E}_{tr}|}$.  The support of distribution of $Z^{e}=[Z_1^e,Z_2^e]$, $\mathbb{P}^{e}_{Z^{e}}$, is bounded and the operator norm of $S$, $\|S\|=\sigma_{\mathsf{max}}(S)$ ($\sigma_{\mathsf{max}}(S)$ is maximum singular value of $S$), is also bounded.
\end{assumption}

Define $Z^{e}=(Z_1^e, Z_2^{e})$ and say $Z_1^e \in \mathbb{R}^c$ and $Z_2^{e}\in \mathbb{R}^{d}$.

Can we directly use the analysis from the linear case? We cannot directly use the polynomial map for the features as we also need to find an appropriate transformation of the matrix $S$, which preserves the linear relationship between the transformed features and the transformed variables $Z$. We carry out this exercise below.

Define $\bar{S} = \mathsf{diag}\Big[(S^{\otimes i})_{i=1}^{p}\Big]$, where $\bar{S}$ is a block diagonal matrix with diagonal matrices defining it given as $\Big\{S, S^{\otimes2}\dots , S^{\otimes p} \Big\}$. 

Define $\bar{X^{e}} = \zeta_p^{n}\big(X^{e}\big)$, where $\zeta_p^{n}(X^{e})$ is the polynomial feature map of degree $p$ of $n$ dimensional input $X^{e}$.
Similarly,  define $\bar{Z_1^{e}} = \zeta_p^{c}(Z_1^{e})$, where $\zeta_p^{c}(Z_1^{e})$ is the polynomial feature map of degree $p$ of $c$ dimensional input $Z_1^{e}$ and define $\bar{Z^{e}} = \zeta_p^{c+d}(Z^{e})$, where $\zeta_p^{c+d}(Z^{e})$ is the polynomial feature map of degree $p$ of $c+d$ dimensional input $Z^{e}=(Z_1^e, Z_2^{e})$.  Observe that each component of $\bar{Z_1^{e}}$ is also in $\bar{Z}$.

From the model we know that $X^{e} = S Z^{e}$. We would like to remind the reader of the mixed product property of tensors. 
Consider matrices $A\in \mathbb{R}^{i\times j}$, $B\in \mathbb{R}^{k\times l}$ $C\in \mathbb{R}^{j\times p}$, $D\in \mathbb{R}^{l\times q}$. 
\begin{equation}
    (A\otimes B)(C\otimes D) = (AC)\otimes (BD)
\end{equation}

In the expressions that follow, we exploit the mixed-product property of Kronecker product stated above.

\begin{equation}
    \begin{split}
&        \bar{S}\bar{Z^{e}} = \mathsf{diag}\Big[(S^{\otimes i})_{i=1}^{p}\Big] \Big[(Z^{e,\otimes i})_{i=1}^{p}\Big]  = \Big[(SZ^{e})^{\otimes i})_{i=1}^{p}\Big] = \Big[(X^{e,\otimes i})_{i=1}^{p}\Big]  = \zeta_p^{n}(X^{e}) = \bar{X}^{e}
    \end{split}
    \label{poly_proof_eqn1}
\end{equation}

Define $\bar{\tilde{S}} = \mathsf{diag}\Big[(\tilde{S}^{\otimes i})_{i=1}^{p}\Big]$. 
\begin{equation}
    \begin{split}
        \bar{\tilde{S}}X^{e} & =  \bar{\tilde{S}}\bar{S}\bar{Z}^{e}  = \mathsf{diag}\Big[(\tilde{S}^{\otimes i})_{i=1}^{p}\Big]\mathsf{diag}\Big[(S^{\otimes i})_{i=1}^{p}\Big]\Big[(Z^{e,\otimes i})_{i=1}^{p}\Big]  \\ 
        & \mathsf{diag}\Big[((\tilde{S}S)^{\otimes i})_{i=1}^{p}\Big]\Big[(Z^{e,\otimes i})_{i=1}^{p}\Big]  = \Big[((\tilde{S}SZ^{e})^{\otimes i})_{i=1}^{p}\Big] = \Big[((Z_1^{e})^{\otimes i})_{i=1}^{p}\Big] = \zeta_p^{c}(Z_1^{e}) = \bar{Z_1^{e}}
    \end{split}
     \label{poly_proof_eqn2}
\end{equation}

The dimensionality of $\bar{X}^{e}$ is $n^{'} = \sum_{i=1}^{p} n^{i} = \frac{n^{p+1}-n}{n-1} $. 

\begin{assumption} \label{assm_pol:regularity conditions on envmts and hphi} \textbf{Inductve bias.} $\mathcal{H}_{\Phi}$ is a  finite set of linear models (bounded) parametrized by $\Phi \in \mathbb{R}^{n'}$.  $\bar{\tilde{S}}^{\mathsf{T}}\gamma\in \mathcal{H}_{\Phi}$.   $\exists \; \omega>0, \Omega>0,$ $\forall \Phi \in \mathcal{H}_{\Phi},\omega \leq \|\Phi\|^2 \leq \Omega$. 
\end{assumption}

We next compute the norm of $\bar{S}$ in terms of the norm of $S$.  Recall we are using operator norm defined as $\|S\| = \sigma_{\mathsf{max}}(S)$.
$\|\bar{S}\|= \Big\|\mathsf{diag}\Big[(S^{\otimes i})_{i=1}^{p}\Big]\Big\|$. Since $\bar{S}$ is a diagonal matrix $\|\bar{S}\| = \max_{i=1,..,p}\{\|S^{\otimes i }\|\}$. Also, note that$ \|S^{\otimes i }\| = \|S\|^{i}$ \cite{laub2005matrix}. Therefore, $\|\bar{S}\| = \max_{i=1....,p}\{\|S\|^{i}\}$. Hence, if $\|S\|$ is bounded, $\|\bar{S}\|$ is also bounded. 

Also, $\|\bar{Z^{e}}\|^2 = \sum_{i}\|Z^{e,\otimes i}\|^2$. Observe that $\|Z^{e,\otimes i}\| = \|Z^{e}\|^{i}$. Hence, if $\|Z^{e}\|$ is bounded, $\|\bar{Z^{e}}\|$ is also bounded. Since $\bar{X^{e}} =\bar{S}\bar{Z^{e}}$. We can conclude that $\|\bar{X^e}\|$ is also bounded. We can now follow the same line of reasoning as in \eqref{ass3_ass68}, \eqref{ass3_ass68_1},\eqref{ass3_ass68_2}, \eqref{ass3_ass68_3} to conclude that the loss and the gradient of the loss are bounded. 

We rewrite the above model in Assumption \ref{pol_model} as a linear model in terms of the transformed features.

\begin{equation}
    \begin{split}
    &    e \sim \mathsf{Categorical}(\pi^{e}), \pi^{e}>0 \forall e \in \mathcal{E} \\
    &    Y^{e} = \gamma^{t}\bar{Z_1} + \varepsilon^{e}, \varepsilon^{e} \perp \bar{Z_1}^{e},  \mathbb{E}[\varepsilon^{e}] = 0, \mathbb{E}[(\varepsilon^{e})^2] = \sigma^2, |\varepsilon^{e}| \leq \varepsilon^{\mathsf{sup}}    \\ 
      &  \bar{X}^{e} = \bar{S}\bar{Z}^{e}
    \end{split}
\end{equation}

We showed above in \eqref{poly_proof_eqn2} that $\bar{Z_1^{e}}$ defined in \eqref{poly_proof_eqn1} component of $\bar{Z^{e}}$ is invertible, $\bar{\tilde{S}}\bar{S}\bar{Z^{e}} = \bar{Z_1^{e}}$. We have also shown above that support of $\bar{Z}^{e}$ is bounded and the norm of $\bar{S}$ is bounded and as a result $\|\bar{X^e}\|$ and the loss and the gradient of the loss (conditions in Assumption \ref{assm3: bounded loss and gradient} are satisfied) are bounded. We adapt the Assumption \ref{assumption_singular_value} for the polynomial case below.

% \begin{assumption}
% \textbf{Linear general position of training environments.}
% \label{assm: linear_gen_posn_pol}
% A set of training environments $\mathcal{E}_{tr}$ is said to lie in a linear general position of degree $r$ for some $r\in \mathbb{N}$ if $|\mathcal{E}_{tr}| > n^{'}-r + n^{'}/r$ and for non-zero $x\in \mathbb{R}^{n^{'}}$ and 

% \begin{equation}
%     \mathsf{dim}\Big(\mathsf{span}\Big\{\mathbb{E}^{e}[\bar{X}^{e}\bar{X}^{e,\mathsf{T}}]x-\mathbb{E}^{e}[\bar{X}^{e,\mathsf{T}}\varepsilon^{e}]\Big\}_{e\in \mathcal{E}_{tr}}\Big) > n^{'}-r
% \end{equation}
% \end{assumption}
We denote $\mathbb{E}[\bar{X}^{e} \bar{X}^{e, \mathsf{T}}]=\bar{\Sigma}_{e}$

% \begin{assumption}
% \label{assm:reg_pol}
% For all the environments $e\in \mathcal{E}_{tr}$, $\bar{\Sigma}_{e}$ is positive definite. 
% \end{assumption}

For each environment $e\in \mathcal{E}_{tr}$, define  $\bar{c}^e = \mathbb{E}^{e}[\bar{X}^{e}\varepsilon^{e}]$. Also, define a matrix $\bar{Q}(x)$, where each column of the matrix $Q(x)$ is $\bar{\Sigma}^e x - \bar{c}^e$.

\begin{assumption}
\label{assumption_singular_value_pol}
For all $\|x\|^2\geq \epsilon >0$, where $\epsilon$ is the slack in the IRM penalty in \eqref{eqn: IRM_cons_swap}, the minimum singular value of $\bar{Q}(x)$, $\lambda_{\mathsf{min}}(\bar{Q}(x) \bar{Q}(x)^{\mathsf{T}}) \geq \frac{1}{4\pi^{\mathsf{min}}\omega} $.
\end{assumption}

% Define the minimum eigenvalue over all the matrices $\bar{\Sigma}_{e}$ as $\bar{\lambda}_{\mathsf{min}} = \min_{e\in \mathcal{E}_{tr}}\lambda(\bar{\Sigma}_{e})$. Define $\bar{\epsilon}_{\mathsf{th}} = \frac{24-16\sqrt{2}}{3} \frac{\pi^{\mathsf{min}}}{|\mathcal{E}_{tr}|} (\omega \bar{\lambda}_{\mathsf{min}})^2 $.

From the analysis in this section, we  see that we have been able to construct a linear model identical to \ref{assm6: linear_model}, where the role of $X^{e}$, $Z^{e}$, $S$, $\tilde{S}$, $\Sigma^{e}$, $\lambda_{\mathsf{min}}$, $\epsilon_{\mathsf{th}}$  is taken by $\bar{X^{e}}$, $\bar{Z^{e}}$, $\bar{S}$,  $\bar{\tilde{S}}$,  $\bar{\Sigma}^{e}$, $\bar{\lambda}_{\mathsf{min}}$, $\bar{\epsilon}_{\mathsf{th}}$. Now we are ready to use the result already proven for linear model and state the next Proposition in terms of the parameters for the polynomial model.

% \begin{assumption} \label{assm11:regularity conditions on envmts and hphi} \textbf{Regularity conditions on environments and $\mathcal{H}_{\Phi}$}
% \begin{itemize}
% \item $\forall e \in \mathcal{E}_{tr}, \pi^{e} \geq \frac{\pi^{\mathsf{min}}}{|\mathcal{E}_{tr}|}$
% % \item $\exists \; p_{\mathsf{min}}>0$ such that 
% % $\|\tilde{S}^{\mathsf{T}}\gamma\|^2 \geq \min_{e \in \mathcal{E}_{tr}}\{\frac{2 p_{\mathsf{min}}}{\lambda_{\mathsf{min}}(\Sigma_{e})}\}$ 
% \item  $\bar{\lambda}_{\mathsf{min}}  = \min_{e\in \mathcal{E}_{tr}}\lambda_{\mathsf{min}}(\bar{\Sigma}_{e})>0$
% \item $\mathcal{H}_{\Phi}$ is a family of linear models parametrized by $\Phi \in \mathbb{R}^{n^{'}}$  and $\bar{\tilde{S}}^{\mathsf{T}}\gamma \in \mathcal{H}_{\Phi}$ 
% \item $\exists \; \omega>0$ such that 
% $\|\bar{\tilde{S}}^{\mathsf{T}}\gamma\|^2 \geq 2\omega$, $\forall \Phi \in \mathcal{H}_{\Phi}, \|\Phi\|^2 \geq \omega$
% \end{itemize}
% \end{assumption}

% \begin{proposition}
% Let $\ell$ be the square loss. 
% If Assumptions \ref{pol_model}, 
% \end{proposition}

\begin{proposition}  Let $\ell$ be the square loss. Given $\epsilon\in (0,1)$  and a $\delta \in (0,1)$,  if Assumptions \ref{pol_model}, \ref{assm_pol:regularity conditions on envmts and hphi}, \ref{assumption_singular_value_pol} hold and if the number of data points $|D|$ is greater than $ \frac{16L'^{4}}{\epsilon^2}\log\big(\frac{2|\mathcal{H}_{\Phi}|}{\delta}\big)$, then  with a probability at least $1-\delta$, every solution $\hat{\Phi}$ to EIRM (\eqref{eqn: EIRM_cons_swap}) satisfies  $\|\hat{\Phi} - \bar{\tilde{S}}^{\mathsf{T}}\gamma\|^2\leq \epsilon$. 
% where  $\alpha \in [\frac{1}{1+\tau \sqrt{\epsilon}},  \frac{1}{1-\tau\sqrt{\epsilon}}]$. 
\end{proposition}

\subsubsection{Infinite Hypothesis Classes}

In the work so far we have assumed that the hypothesis class $\mathcal{H}_{\Phi}$ is finite. In this section, we discuss infinite hypothesis class extensions. Before we do that we state an important result on covering numbers that we will use soon. 

\begin{lemma}\label{lemma:covering_number} \cite{shalev2014understanding} Define a set $\mathcal{A}= \{a\in \mathbb{R}^{k}, \|a\|^2 \leq A^{\mathsf{sup}}\}$.  Covering number  for $\eta$-cover of  $\mathcal{A}$ given as $N_{\eta}(\mathcal{A})$ is bounded as 
\begin{equation}
   N_{\eta}(\mathcal{A}) \leq \Big(\frac{2\sqrt{A^{\mathsf{sup}} k}}{\eta}\Big)^{k}
    \label{prop5: eqn1}
\end{equation}
\end{lemma}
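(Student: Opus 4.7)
The plan is to construct an explicit $\eta$-cover of $\mathcal{A}$ via a cubic grid argument and then count the cells. The key observation is that $\mathcal{A}$, which is the Euclidean ball of radius $\sqrt{A^{\mathsf{sup}}}$ in $\mathbb{R}^k$, is contained in the axis-aligned cube $[-\sqrt{A^{\mathsf{sup}}},\sqrt{A^{\mathsf{sup}}}]^k$, since $\|a\|_2\le \sqrt{A^{\mathsf{sup}}}$ forces $|a_i|\le \sqrt{A^{\mathsf{sup}}}$ coordinatewise. It therefore suffices to $\eta$-cover this enclosing cube.

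Next I would tile the cube with small axis-aligned subcubes of common side length $s$. A subcube of side $s$ in $\mathbb{R}^k$ has Euclidean diameter $s\sqrt{k}$, so its center lies at distance at most $s\sqrt{k}/2$ from every point inside it. Choosing $s = 2\eta/\sqrt{k}$ makes this worst-case distance equal to $\eta$. Taking the set of subcube centers as the candidate cover, every $a\in\mathcal{A}$ lies in at least one subcube and is thus within $\eta$ of its center, giving a valid $\eta$-cover.

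It then remains to count the subcubes. Along each of the $k$ coordinate axes, the number of pieces needed is
\[
\Big\lceil \tfrac{2\sqrt{A^{\mathsf{sup}}}}{s}\Big\rceil \;=\; \Big\lceil \tfrac{\sqrt{A^{\mathsf{sup}}\,k}}{\eta}\Big\rceil,
\]
so the total number of subcubes is at most $\lceil \sqrt{A^{\mathsf{sup}} k}/\eta\rceil^{k}$. In the regime of interest $\sqrt{A^{\mathsf{sup}} k}/\eta \ge 1$, and using $\lceil x\rceil \le 2x$ for $x\ge 1$, this is bounded by $(2\sqrt{A^{\mathsf{sup}} k}/\eta)^{k}$, which is the claimed inequality. (If instead $\sqrt{A^{\mathsf{sup}} k}/\eta<1$, a single point already serves as an $\eta$-cover and the bound is trivial.)

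The only delicate point is the $\sqrt{k}$ factor in the numerator, and it is unavoidable in this argument: it arises from the gap between $\ell_\infty$-geometry (the cube tiling) and $\ell_2$-geometry (the cover radius). A cube of side $s$ fits inside an $\ell_2$-ball of radius $s\sqrt{k}/2$ but no smaller, so forcing the cover radius to be $\eta$ demands $s \lesssim \eta/\sqrt{k}$, multiplying the per-axis count—and hence the final bound—by $\sqrt{k}$. An alternative volumetric argument via packing numbers avoids the $\sqrt{k}$ factor but produces a slightly different constant ($(1+2\sqrt{A^{\mathsf{sup}}}/\eta)^k$); the cubic tiling above is the cleanest route to the specific form stated.
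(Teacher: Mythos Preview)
Your argument is correct and is the standard cubic-grid covering argument. The paper does not supply its own proof of this lemma; it simply quotes the bound from the cited textbook \cite{shalev2014understanding}, and your construction is essentially the argument given there.
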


\textbf{A. Infinite Hypothesis Class: Confounders and Anti-causal variables}

In this section, we seek to extend Proposition \ref{prop5: irm_cfd_chd} to infinite hypothesis classes. 

We restate the Assumption \ref{assm8:regularity conditions on envmts and hphi} for linear models.
\begin{assumption} \label{assm_infinite:regularity conditions on envmts and hphi} \textbf{Inductve bias.} $\mathcal{H}_{\Phi}$ is a  set of linear models (bounded) parametrized by $\Phi \in \mathbb{R}^{n}$ $\mathcal{H}_{\Phi} = \{ \Phi\in \mathbb{R}^{n}, \; 0 <\omega \leq \|\Phi\|^2 \leq \Omega\}$.   $\tilde{S}^{\mathsf{T}}\gamma \in \mathcal{H}_{\Phi}$. 
% \begin{itemize}
% \item , 
% \item 
% \end{itemize}
\end{assumption}
Note that the only difference between Assumption \ref{assm8:regularity conditions on envmts and hphi} and Assumption \ref{assm_infinite:regularity conditions on envmts and hphi} is that the hypothesis class is not required to be finite anymore.

We already established in \eqref{ass3_ass68_1}, \eqref{ass3_ass68_2}, \eqref{ass3_ass68_3} that from Assumptions \ref{assm6: linear_model} and \ref{assm8:regularity conditions on envmts and hphi}, we can show that the conditions in Assumption \ref{assm3: bounded loss and gradient} hold, i.e., loss and the gradient of the loss are bounded, and also $X^{e}, Y^{e}$ are bounded . The same conclusion follows from Assumptions \ref{assm6: linear_model} and Assumption \ref{assm_infinite:regularity conditions on envmts and hphi}. Hence, for the rest of this section, we can state that $\|X^{e}\| \leq X^{\mathsf{sup}}$, $|Y^{e}| \leq K$, the square loss $\ell(\Phi(\cdot),\cdot)$ is bounded by $L$ and $\frac{\partial \ell(w\cdot (\Phi(\cdot),\cdot)}{\partial w}\Big|_{w=1.0}$ is bounded by $L'$. In the next lemma, we aim to show that if Assumption \ref{assm6: linear_model} and \ref{assm_infinite:regularity conditions on envmts and hphi} hold, then $R^{'}(\Phi)$ is Lipschitz continuous. 

% \begin{assumption} \label{assm13:regularity conditions on envmts and hphi} \textbf{Regularity conditions on environments and $\mathcal{H}_{\Phi}$}
% \begin{itemize}
% \item $\forall e \in \mathcal{E}_{tr}, \pi^{e} \geq \frac{\pi^{\mathsf{min}}}{|\mathcal{E}_{tr}|}$
% % \item $\exists \; p_{\mathsf{min}}>0$ such that 
% % $\|\tilde{S}^{\mathsf{T}}\gamma\|^2 \geq \min_{e \in \mathcal{E}_{tr}}\{\frac{2 p_{\mathsf{min}}}{\lambda_{\mathsf{min}}(\Sigma_{e})}\}$ 
% \item  $\lambda_{\mathsf{min}}  = \min_{e\in \mathcal{E}_{tr}}\lambda_{\mathsf{min}}(\Sigma_{e})>0$
% \item $\mathcal{H}_{\Phi}$ is  a set of linear models parametrized by $\Phi \in \mathbb{R}^{n}$ and $\tilde{S}^{\mathsf{T}}\gamma \in \mathcal{H}_{\Phi}$, 
% \item $\exists \; \omega>0$ such that 
% $\|\tilde{S}^{\mathsf{T}}\gamma\|^2 \geq 2\omega$, $\forall \Phi \in \mathcal{H}_{\Phi}, P\geq \|\Phi\|^2 \geq \omega$
% \item $\forall x \in \mathcal{X}, \|x\| \leq U$
% \end{itemize}
% \end{assumption}

% Assumption \ref{assm14: lipschitz loss and gradient} is implied by the conditions on hypothesis class made in the Assumption \ref{assm_infinite:regularity conditions on envmts and hphi} and we show that next.

% \begin{lemma}
% If $H_{\Phi}$ is a linear hypothesis class with $\Phi \in \mathbb{R}^{n}$, $\|\Phi\|^2 \leq P$. and $\|x\| \leq U$, then $R^{'}(\Phi)$ is Lipschitz continous with a Lipschitz constant $C^{'}\leq 2PU(PU+K) (2PU+K)U$. 
% \end{lemma}

\begin{lemma}
If Assumption \ref{assm6: linear_model} and \ref{assm_infinite:regularity conditions on envmts and hphi} hold, then $R^{'}(\Phi)$ is Lipschitz continuous in $\Phi$.
\end{lemma}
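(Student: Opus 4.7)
The plan is to reduce the claim to a straightforward bounded–gradient argument on the compact parameter set $\mathcal{H}_{\Phi}$. Recall from \eqref{thm1_proof: eqn} that for the square loss and a linear predictor $\Phi\in\mathbb{R}^{n}$,
\begin{equation*}
\nabla_{w\mid w=1.0}R^{e}(w\cdot \Phi) \;=\; 2\Phi^{\mathsf{T}}\Sigma^{e}\Phi \;-\; 2\Phi^{\mathsf{T}}\mathbb{E}^{e}[X^{e}Y^{e}],
\end{equation*}
so if I let $g_{e}(\Phi) := \Phi^{\mathsf{T}}\Sigma^{e}\Phi - \Phi^{\mathsf{T}}\mathbb{E}^{e}[X^{e}Y^{e}]$, then $R^{'}(\Phi)=4\sum_{e\in\mathcal{E}_{tr}}\pi^{e}g_{e}(\Phi)^{2}$. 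Since $R^{'}$ is a convex combination of the functions $4g_{e}^{2}$, it suffices to show each $g_{e}^{2}$ is Lipschitz with a uniform constant on $\mathcal{H}_{\Phi}$, and then combine.

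First I would collect the boundedness facts already established for this setting (see the derivations in \eqref{ass3_ass68}--\eqref{ass3_ass68_3}): under Assumption \ref{assm6: linear_model} together with Assumption \ref{assm_infinite:regularity conditions on envmts and hphi}, there exist finite constants $X^{\mathsf{sup}}$ and $K$ with $\|X^{e}\|\le X^{\mathsf{sup}}$ and $|Y^{e}|\le K$ almost surely, and every $\Phi\in\mathcal{H}_{\Phi}$ satisfies $\|\Phi\|\le \sqrt{\Omega}$. From these it follows that $\|\Sigma^{e}\|\le (X^{\mathsf{sup}})^{2}$ and $\|\mathbb{E}^{e}[X^{e}Y^{e}]\|\le K\,X^{\mathsf{sup}}$ uniformly in $e$. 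This gives a uniform bound $|g_{e}(\Phi)|\le C_{0}:=(X^{\mathsf{sup}})^{2}\Omega + K\,X^{\mathsf{sup}}\sqrt{\Omega}$ and, using $\nabla_{\Phi}g_{e}(\Phi)=2\Sigma^{e}\Phi-\mathbb{E}^{e}[X^{e}Y^{e}]$, a uniform bound $\|\nabla_{\Phi}g_{e}(\Phi)\|\le C_{1}:=2(X^{\mathsf{sup}})^{2}\sqrt{\Omega}+K\,X^{\mathsf{sup}}$.

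Next, by the mean value inequality on the convex domain $\{\Phi:\|\Phi\|^{2}\le \Omega\}$ (which contains $\mathcal{H}_{\Phi}$), $g_{e}$ is $C_{1}$-Lipschitz. Since $u\mapsto u^{2}$ is $2C_{0}$-Lipschitz on $[-C_{0},C_{0}]$, the composition $g_{e}^{2}$ is $2C_{0}C_{1}$-Lipschitz on $\mathcal{H}_{\Phi}$. Finally, for any $\Phi_{1},\Phi_{2}\in\mathcal{H}_{\Phi}$,
\begin{equation*}
|R^{'}(\Phi_{1})-R^{'}(\Phi_{2})|\;\le\;4\sum_{e\in \mathcal{E}_{tr}}\pi^{e}\,|g_{e}(\Phi_{1})^{2}-g_{e}(\Phi_{2})^{2}|\;\le\;8C_{0}C_{1}\|\Phi_{1}-\Phi_{2}\|,
\end{equation*}
so $R^{'}$ is globally Lipschitz on $\mathcal{H}_{\Phi}$ with constant $L_{R^{'}}:=8C_{0}C_{1}$, which depends only on $X^{\mathsf{sup}}$, $K$ and $\Omega$.

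There is no real obstacle here beyond keeping the constants organized; the work is just combining the polynomial structure of $g_{e}$ with the compactness of $\mathcal{H}_{\Phi}$ to get uniform bounds, then using the chain rule/product bound $|u^{2}-v^{2}|\le 2\max(|u|,|v|)|u-v|$. The same argument will extend verbatim to the polynomial case by replacing $(X^{e},\Sigma^{e},\Phi)$ with $(\bar{X}^{e},\bar{\Sigma}^{e},\Phi\in\mathbb{R}^{n^{'}})$, using that $\|\bar{X}^{e}\|$ is bounded whenever $\|X^{e}\|$ is, which is what one needs later to deploy covering-number arguments via Lemma~\ref{lemma:covering_number}.
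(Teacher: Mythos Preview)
Your proposal is correct and follows essentially the same approach as the paper: both write $R'(\Phi)$ as a convex combination of the squared per–environment gradient terms $g_e(\Phi)^2$, bound $|g_e|$ uniformly using $\|X^e\|\le X^{\mathsf{sup}}$, $|Y^e|\le K$, $\|\Phi\|\le\sqrt{\Omega}$, show $g_e$ is Lipschitz (you via a gradient bound and the mean value inequality on the convex ball, the paper via an explicit add–subtract telescoping), and then apply $|u^2-v^2|\le 2\max(|u|,|v|)\,|u-v|$. Your resulting constant $8C_0C_1$ differs from the paper's only by the factor of $4$ coming from the $2$ in the gradient formula, which the paper absorbs silently.
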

\begin{proof}
The output of the model $|\Phi^{\mathsf{T}}X| \leq \|\Phi\|\|X\| \leq \sqrt{\Omega}X^{\mathsf{sup}} $ (From Cauchy-Schwarz). 

\begin{equation}
    \begin{split}
         R^{'}(\Phi) &= \sum_{e}\pi^{e}\mathbb{E}^{e}\Big[\Phi^{\mathsf{T}}X^{e}\big( \Phi^{\mathsf{T}}X^{e} -Y^{e}\big)\Big]^2 \\
        |R^{'}(\Phi_1) - R^{'}(\Phi_2)|  &= \Big|\sum_{e}\pi^{e}\Big(\mathbb{E}^{e}\Big[\Phi_1^{\mathsf{T}}X^{e}\big( \Phi_1^{\mathsf{T}}X^{e} -Y^{e}\big)\Big]^2 -\mathbb{E}^{e}\Big[\Phi_1^{\mathsf{T}}X^{e}\big( \Phi_2^{\mathsf{T}}X^{e} -Y^{e}\big)\Big]^2\Big)\Big| \\
        & \leq \sum_{e}\pi^{e}\Big|\mathbb{E}^{e}\Big[\Phi_1^{\mathsf{T}}X^{e}\big( \Phi_1^{\mathsf{T}}X^{e} -Y^{e}\big)\Big]^2 -\mathbb{E}^{e}\Big[\Phi_2^{\mathsf{T}}X^{e}( \Phi_2^{\mathsf{T}}X^{e} -Y^{e})\Big]^2\Big|
    \end{split} 
    \label{proof: lemma_lipsch1}
\end{equation}

We bound each term in the summation in the \eqref{proof: lemma_lipsch1} above 

\begin{equation}
    \begin{split}
    & \Big|\mathbb{E}^{e}\Big[\Phi_1^{\mathsf{T}}X^{e}\big( \Phi_1^{\mathsf{T}}X^{e} -Y^{e}\big)\Big]^2 -\mathbb{E}^{e}\Big[\Phi_2^{\mathsf{T}}X^{e}( \Phi_2^{\mathsf{T}}X^{e} -Y^{e})\Big]^2\Big| \\
                & =\Big| \Big(\mathbb{E}^{e}\Big[\Phi_1^{\mathsf{T}}X^{e}\big( \Phi_1^{\mathsf{T}}X^{e} -Y^{e}\big)\Big] -\mathbb{E}^{e}\Big[\Phi_2^{\mathsf{T}}X^{e}\big( \Phi_2^{\mathsf{T}}X^{e} -Y^{e}\big)\Big]\Big)\Big(\mathbb{E}^{e}\Big[\Phi_1^{\mathsf{T}}X^{e}\big( \Phi_1^{\mathsf{T}}X^{e} -Y^{e}\big)\Big] + \\ &
               \;\;\;\;\; \mathbb{E}^{e}\Big[\Phi_2^{\mathsf{T}}X^{e}\big( \Phi_2^{\mathsf{T}}X^{e} -Y^{e}\big)\Big]\Big) \Big| \\ 
        &\leq \Big| \mathbb{E}^{e}\Big[\Phi_1^{\mathsf{T}}X^{e}\big( \Phi_1^{\mathsf{T}}X^{e} -Y^{e}\big)\Big] -\mathbb{E}^{e}\Big[\Phi_2^{\mathsf{T}}X^{e}\big( \Phi_2^{\mathsf{T}}X^{e} -Y^{e}\big)\Big]
        \Big| 2\sqrt{\Omega}X^{\mathsf{sup}}(\sqrt{\Omega}X^{\mathsf{sup}}+K) \\ 
        & \leq \Big| \mathbb{E}^{e}\Big[\Phi_1^{\mathsf{T}}X^{e}\big( \Phi_1^{\mathsf{T}}X^{e} -Y^{e}\big)\Big] -\mathbb{E}^{e}\Big[\Phi_2^{\mathsf{T}}X^{e}\big( \Phi_1^{\mathsf{T}}X^{e} -Y^{e}\big)\Big] + \mathbb{E}^{e}\Big[\Phi_2^{\mathsf{T}}X^{e}\big( \Phi_1^{\mathsf{T}}X^{e} -Y^{e}\big)\Big] - \\  &\;\;\;\;\;\mathbb{E}^{e}\Big[\Phi_2^{\mathsf{T}}X^{e}( \Phi_2^{\mathsf{T}}X^{e} -Y^{e})\Big]\Big| 2\sqrt{\Omega}X^{\mathsf{sup}}(\sqrt{\Omega}X^{\mathsf{sup}}+K)  \\ 
        & = \Big| \mathbb{E}^{e}\Big[\big(\Phi_1^{\mathsf{T}}X^{e}-\Phi_2^{\mathsf{T}}X^{e}\big)\big( \Phi_1^{\mathsf{T}}X^{e} -Y^{e}\big)\Big] + \mathbb{E}^{e}\Big[\Phi_2^{\mathsf{T}}X^{e}\big( \Phi_1^{\mathsf{T}}X^{e} -\Phi_2^{\mathsf{T}}X^{e}\big)\Big] \Big| 2\sqrt{\Omega}X^{\mathsf{sup}}(\sqrt{\Omega}X^{\mathsf{sup}}+K)  \\
        & \leq \Big| \mathbb{E}^{e}\Big[\big(\Phi_1^{\mathsf{T}}X^{e}-\Phi_2^{\mathsf{T}}X^{e}\big)\big( \Phi_1^{\mathsf{T}}X^{e} -Y^{e}\big)\Big] \Big| 2\sqrt{\Omega}X^{\mathsf{sup}}(\sqrt{\Omega}X^{\mathsf{sup}}+K) + \\ & \;\;\;\;\;\Big|\mathbb{E}^{e}\Big[\Phi_2^{\mathsf{T}}X^{e}\big( \Phi_1^{\mathsf{T}}X^{e} -\Phi_2^{\mathsf{T}}X^{e}\big)\Big] \Big| 2\sqrt{\Omega}X^{\mathsf{sup}}(\sqrt{\Omega}X^{\mathsf{sup}}+K) 
    \end{split}
    \label{proof: lemma_lipsch2}
\end{equation}

We bound each term in the last line in \eqref{proof: lemma_lipsch2}  
\begin{equation}
    \begin{split}
   & \Big| \mathbb{E}^{e}\Big[\big(\Phi_1^{\mathsf{T}}X^{e}-\Phi_2^{\mathsf{T}}X^{e}\big)\big( \Phi_1^{\mathsf{T}}X^{e} -Y^{e}\big)\Big] \Big| \\ 
   & \leq \mathbb{E}^{e}\Big[\Big|\big(\Phi_1^{\mathsf{T}}X^{e}-\Phi_2^{\mathsf{T}}X^{e}\big)\Big| \Big| \big( \Phi_1^{\mathsf{T}}X^{e} -Y^{e}\big)\Big|\Big] \\
   & \leq (\sqrt{\Omega}X^{\mathsf{sup}}+K) \mathbb{E}^{e}\big[\big|(\Phi_1-\Phi_2)^{\mathsf{T}}X^{e})\big|\big] \\ 
   & \leq  (\sqrt{\Omega}X^{\mathsf{sup}}+K) \mathbb{E}^{e}\big[\|\Phi_1 - \Phi_2\|X^{\mathsf{sup}}\big] \;\; (\text{Cauchy-Scwarz})\\  
   & \leq (\sqrt{\Omega}X^{\mathsf{sup}}+K) X^{\mathsf{sup}}\|\Phi_1- \Phi_2\|
    \end{split}
      \label{proof: lemma_lipsch3}
\end{equation}

\begin{equation}
    \begin{split}
   & \Big|\mathbb{E}^{e}\Big[\Phi_2^{\mathsf{T}}X^{e}\big( \Phi_1^{\mathsf{T}}X^{e} -\Phi_2^{\mathsf{T}}X^{e}\big)\Big] \Big| \\ 
   & \leq \mathbb{E}^{e}\Big[\Big|\Phi_2^{\mathsf{T}}X^{e}\Big|\Big|\big( \Phi_1^{\mathsf{T}}X^{e} -\Phi_2^{\mathsf{T}}X^{e}\big)\Big|\Big] \\
   & \leq \sqrt{\Omega}X^{\mathsf{sup}}\mathbb{E}^{e}\Big[\Big|(\Phi_1-\Phi_2)^{\mathsf{T}}X^{e})\Big|\Big]\;\; (\text{Cauchy-Scwarz}) \\ 
   & \leq \sqrt{\Omega}X^{\mathsf{sup}}\mathbb{E}^{e}\big[\|\Phi_1 - \Phi_2\|X^{\mathsf{sup}}\big]\\  
   & \leq\sqrt{\Omega}(X^{\mathsf{sup}})^2 \|\Phi_1- \Phi_2\|
    \end{split}
    \label{proof: lemma_lipsch4}
\end{equation}

Substituting \eqref{proof: lemma_lipsch3} and \eqref{proof: lemma_lipsch4} in \eqref{proof: lemma_lipsch2} to get
\begin{equation}
|R^{'}(\Phi_1) - R^{'}(\Phi_2)| \leq 2\sqrt{\Omega}(X^{\mathsf{sup}})^2(\sqrt{\Omega}X^{\mathsf{sup}}+K) (2\sqrt{\Omega}X^{\mathsf{sup}}+K) \|\Phi_1 - \Phi_2\|
\end{equation}
Therefore, $R^{'}$ is Lipschitz with a constant $C^{'} \leq2\sqrt{\Omega}(X^{\mathsf{sup}})^2(\sqrt{\Omega}X^{\mathsf{sup}}+K) (2\sqrt{\Omega}X^{\mathsf{sup}}+K)$.

\end{proof}

% \textbf{Bound on $L$ from Assumption \ref{assm13:regularity conditions on envmts and hphi}} 

% \begin{equation}
%   \ell(\Phi(X),Y) = (Y-\Phi^{\mathsf{T}}X)^2 \leq (K+ PU)^2
% \end{equation}
% Therefore $L\leq (K+PU)^2$

% \textbf{Bound on $L'$ from Assumption \ref{assm13:regularity conditions on envmts and hphi}} 
% \begin{equation}
%     \begin{split}
%         &  |\frac{\ell(w.\Phi^{\mathsf{T}}X,Y)}{\partial w}|_{w=1.0}| = |\Phi^{\mathsf{T}}X( \Phi^{\mathsf{T}}X -Y)| \leq (PU)(PU+K)
%          \leq 2(PU)^2((PU)^2+K^2)
%     \end{split}
% \end{equation}

% Therefore $L'\leq (PU)(PU+K)$
We just showed that $R^{'}$ is Lipschitz continuous and we set its Lipschitz constant as $C^{'}$.
\begin{proposition}
 Let $\ell$ be the square loss. For every $\epsilon \in (0,\epsilon^{\mathsf{th}})$ and $\delta\in (0,1)$,  if Assumptions \ref{assm6: linear_model}, \ref{assumption_singular_value}, \ref{assm_infinite:regularity conditions on envmts and hphi} hold and if the number of samples $|D|$ is greater than  $\frac{32L'^4}{\epsilon^2} \Big[n\log\Big(\frac{16C^{'}\sqrt{\Omega n}}{\epsilon}\Big) + \log\Big(\frac{2}{\delta}\Big)\Big] $, then  with a probability at least $1-\delta$ every solution $\hat{\Phi}$ to EIRM (\eqref{eqn: EIRM_cons_swap} with $\frac{\epsilon}{2}$) satisfies  $\|\hat{\Phi} - \tilde{S}^{\mathsf{T}}\gamma\|^2 \leq \epsilon$.
\end{proposition}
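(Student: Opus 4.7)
The plan is to mirror the structure of Proposition \ref{prop5: irm_cfd_chd_append} but replace the finite union bound over $\mathcal{H}_{\Phi}$ by a covering number argument combined with the Lipschitz continuity of $R^{'}$. Define again the event
\[
A = \Big\{\tilde{D} : \forall \Phi \in \mathcal{H}_{\Phi},\; |\hat{R}^{'}(\Phi) - R^{'}(\Phi)| \leq \tfrac{\epsilon}{2}\Big\}.
\]
Exactly as in the proof of Proposition \ref{prop5: irm_cfd_chd_append}, if $A$ holds then every EIRM solution $\hat{\Phi}$ satisfies $R^{'}(\hat{\Phi}) \leq 3\epsilon/2$, and Proposition \ref{prop_int: inf_sam_irm_approx} (applied with $3\epsilon/2 < \epsilon_0$, i.e. $\epsilon < \epsilon^{\mathsf{th}}$) yields the desired characterization $\hat{\Phi} = (\tilde{S}^{\mathsf{T}}\gamma)\alpha$ with $\alpha \in [\tfrac{1}{1+\tau\sqrt{\epsilon}},\tfrac{1}{1-\tau\sqrt{\epsilon}}]$. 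So the entire task reduces to showing $\mathbb{P}(A) \geq 1-\delta$ under the stated sample size.

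For this, I will use an $\eta$-net argument. Since $\mathcal{H}_{\Phi} \subseteq \{\Phi \in \mathbb{R}^n : \|\Phi\|^2 \leq \Omega\}$, Lemma \ref{lemma:covering_number} gives a cover $\mathcal{N}_{\eta}$ of size at most $(2\sqrt{\Omega n}/\eta)^n$. For any $\Phi \in \mathcal{H}_{\Phi}$, pick $\Phi' \in \mathcal{N}_{\eta}$ with $\|\Phi - \Phi'\| \leq \eta$ and decompose
\[
|\hat{R}^{'}(\Phi) - R^{'}(\Phi)| \leq |\hat{R}^{'}(\Phi) - \hat{R}^{'}(\Phi')| + |\hat{R}^{'}(\Phi') - R^{'}(\Phi')| + |R^{'}(\Phi') - R^{'}(\Phi)|.
\]
The third term is bounded by $C^{'}\eta$ using the lemma already established (Lipschitz continuity of $R^{'}$). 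An essentially identical computation shows that $\hat{R}^{'}$ is also Lipschitz with the same constant $C^{'}$, since the derivation of $C^{'}$ in that lemma only used Cauchy--Schwarz and the uniform bounds $\|X^{e}\| \leq X^{\mathsf{sup}}$, $|Y^{e}| \leq K$, $\|\Phi\|^2 \leq \Omega$, all of which apply equally to the empirical averages defining $\hat{R}^{'}$ via \eqref{eqn: rprime_estimator}; so the first term is also at most $C^{'}\eta$.

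Choosing $\eta = \epsilon/(8C^{'})$ makes the first and third terms sum to $\epsilon/4$, so it suffices to ensure that $|\hat{R}^{'}(\Phi') - R^{'}(\Phi')| \leq \epsilon/4$ for every $\Phi' \in \mathcal{N}_{\eta}$. Since $|\ell^{'}| \leq L^{'2}$, Hoeffding's inequality (Lemma \ref{lemma1: hoeffding}) gives for each fixed $\Phi' \in \mathcal{N}_{\eta}$ a failure probability at most $2\exp(-|D|\epsilon^2/(32 L^{'4}))$, and a union bound over the net yields
\[
\mathbb{P}(A^{c}) \leq 2|\mathcal{N}_{\eta}| \exp\!\Big(-\tfrac{|D|\epsilon^2}{32L^{'4}}\Big) \leq 2\Big(\tfrac{16C^{'}\sqrt{\Omega n}}{\epsilon}\Big)^{n} \exp\!\Big(-\tfrac{|D|\epsilon^2}{32L^{'4}}\Big).
\]
Requiring the right-hand side to be at most $\delta$ and solving for $|D|$ yields precisely the stated sample complexity $|D| \geq \tfrac{32L^{'4}}{\epsilon^2}\big[n\log(16C^{'}\sqrt{\Omega n}/\epsilon) + \log(2/\delta)\big]$. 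Existence of a feasible solution follows from $R^{'}(\tilde{S}^{\mathsf{T}}\gamma) = 0$ (shown in \eqref{thm1: eqn8}), which implies $\hat{R}^{'}(\tilde{S}^{\mathsf{T}}\gamma) \leq \epsilon$ on event $A$.

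The main obstacle is the Lipschitz bound on $\hat{R}^{'}$. This is not immediately a consequence of the earlier lemma, but can be obtained by repeating the derivation with $\mathbb{E}^{e}$ replaced by the empirical average over the paired samples in the same environment; the uniform boundedness of $\|X^{e}\|$ and $|Y^{e}|$ guarantees the same constant $C^{'}$ works sample-by-sample, hence for the empirical average.
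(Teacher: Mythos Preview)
Your proposal is correct and follows essentially the same route as the paper: reduce to event $A$, then control $A$ via an $\eta$-net on $\{\|\Phi\|^2\leq\Omega\}$, Hoeffding at the net points with threshold $\epsilon/4$, and the Lipschitz bound on $R'$ (and $\hat R'$) with $\eta=\epsilon/(8C')$ to pass from the net to all of $\mathcal{H}_\Phi$. In fact you are slightly more explicit than the paper, which uses the same three-term decomposition but only cites ``Lipschitz continuity of $R'$'' while implicitly also using the same bound for $\hat R'$; your remark that the empirical version inherits the same constant $C'$ sample-by-sample is exactly the missing justification.
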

%  [\frac{1}{1+\frac{1}{\omega\lambda_{\mathsf{min}}}\sqrt{\frac{3\epsilon |\mathcal{E}_{tr}|}{2\pi^{\mathsf{min}}}}},  \frac{1}{1-\frac{1}{\omega\lambda_{\mathsf{min}}}\sqrt{\frac{3\epsilon |\mathcal{E}_{tr}|}{2\pi^{\mathsf{min}}}}}] =

\begin{proof}

Following the proof of Proposition \ref{prop5: irm_cfd_chd}, our goal is to compute the probability of event $A$: $\{\forall \Phi \in \mathcal{H}_{\Phi}, |\hat{R}^{'}(\Phi)-R^{'}(\Phi)|\leq \frac{\epsilon}{2}\}$. 
We  construct a minimum cover of size $N_{\eta}(\mathcal{H}_{\Phi})$ (See Lemma \ref{lemma:covering_number}) with points $\mathcal{C} =  \{\Phi_j\}_{j=1}^{b}$.

Compute the probability of failure at one point $\Phi_j$ in the cover 
\begin{equation}
    \mathbb{P}\Big[\Big\{\tilde{D}: |R^{'}(\Phi_j) - \hat{R}^{'}(\Phi_j) |>\frac{\epsilon}{4}\Big\}\Big] < 2e^{-\frac{\epsilon^2 |D|}{32L'^{4}}}
\end{equation}

We use union bound to bound the probability of failure over the cover $\mathcal{C}$ as follows 
\begin{equation}
    \mathbb{P}\Big[\Big\{\tilde{D}: \max_{\Phi_j\in \mathcal{C}}|R^{'}(\Phi_j) - \hat{R}^{'}(\Phi_j) |>\frac{\epsilon}{4}\Big\}\Big] < 2N_{\eta}(\mathcal{H}_{\Phi})e^{-\frac{\epsilon^2 |D|}{32L'^4}}
     \label{proof: inf_hyp_0}
\end{equation}

Now consider any $\Phi \in \mathcal{H}_{\Phi}$ and suppose $\Phi_{j}$ is nearest point to it in the cover.

\begin{equation} 
\begin{split}
 &|R^{'}(\Phi) - \hat{R}^{'}(\Phi) | =  |R^{'}(\Phi) - R^{'}(\Phi_j) +R^{'}(\Phi_j) -  \hat{R}^{'}(\Phi_j )+ \hat{R}^{'}(\Phi_j )- \hat{R}^{'}(\Phi) | \\ & \leq |R^{'}(\Phi) - R^{'}(\Phi_j)| + |R^{'}(\Phi_j) -  \hat{R}^{'}(\Phi_j )| + |\hat{R}^{'}(\Phi_j )- \hat{R}^{'}(\Phi)| \leq   |R^{'}(\Phi_j) - \hat{R}^{'}(\Phi_j) | + 2\eta C^{'} 
 \end{split}
\end{equation}
In the above simplification, we exploited the Lipschitz continuity of $R^{'}$. Therefore, for each $\Phi \in \mathcal{H}_{\Phi}$
\begin{equation} 
\begin{split}
 &|R^{'}(\Phi) - \hat{R}^{'}(\Phi) |  \leq  \max_{\Phi_j\in \mathcal{C}} |R^{'}(\Phi_j) - \hat{R}^{'}(\Phi_j) | + 2\eta C^{'} \\ 
  &\max_{\Phi \in \mathcal{H}_{\Phi}}|R^{'}(\Phi) - \hat{R}^{'}(\Phi) |  \leq  \max_{\Phi_j\in \mathcal{C}} |R^{'}(\Phi_j) - \hat{R}^{'}(\Phi_j) | + 2\eta C^{'} 
 \end{split}
 \label{proof: inf_hyp_1}
\end{equation}
 Set $\eta = \frac{\epsilon}{8C^{'}}$ in \eqref{proof: inf_hyp_1}  and from \eqref{proof: inf_hyp_0}  with probability at least $1-N_{\eta}(\mathcal{H}_{\Phi})2e^{-\frac{\epsilon^2 |D|}{32L'^4}}$ 
\begin{equation}
     \max_{\Phi\in \mathcal{H}_{\Phi}} |R^{'}(\Phi) - \hat{R}^{'}(\Phi)  | \leq \epsilon/2\; \text{(since $\max_{\Phi_j\in \mathcal{C}}|R^{'}(\Phi_j) - \hat{R}^{'}(\Phi_j)  | \leq \epsilon/4$ )}
\end{equation}

We  bound $N_{\eta}(\mathcal{H}_{\Phi})2e^{-\frac{\epsilon^2 |D|}{32L'^4}}\leq \delta$ and solve for bound on $|D|$ to get 

\begin{equation} 
\begin{split}
  &  |D| \geq \frac{8L'^4}{\frac{\epsilon^2}{4}} \log\Big(\frac{2N_{\eta}(\mathcal{H}_{\Phi})}{\delta}\Big)\; \text{(Use Lemma \ref{lemma:covering_number})} \\ 
 &     |D| \geq \frac{32L'^4}{\epsilon^2} \Big[n\log\Big(\frac{16C^{'}\sqrt{\Omega n}}{\epsilon}\Big) + \log\Big(\frac{2}{\delta}\Big)\Big]
    \end{split}
    \label{proof: inf_hyp_2}
\end{equation}

Therefore, if condition in \eqref{proof: inf_hyp_2} holds, then event $A$ occurs and following the same argument as in the proof of Proposition \ref{prop5: irm_cfd_chd} the proof is complete.

\end{proof}

\textbf{B. Infinite hypothesis class: Lipschitz continuous functions}

In this section, we seek to extend Proposition \ref{prop2: scomp_irm} and \ref{prop4: emr_irm_real} to infinite hypothesis class of Lipschitz continuous functions that we formally define next.
Define a map $\Phi:\mathcal{P} \times \mathcal{X}\rightarrow \mathbb{R}$ 
from the parameter space $\mathcal{P}$ and the feature space $\mathcal{X}$ to reals. Each $p\in \mathcal{P}$ is a possible choice for the representation $\Phi(p,\cdot)$. Consider neural networks as an example, $\mathcal{P}$ represents the set of the values the weights of the network can take. 

\begin{assumption}\label{assm15: parametrized_model}
$\Phi:\mathcal{P} \times \mathcal{X}\rightarrow \mathbb{R}$ is a a Lipschitz continuous function (with Lipschitz constant say $Q$).

$\mathcal{P}\subset \mathbb{R}^{k}$ is closed and bounded, thus there exists a $P<\infty$ such that  $\forall p \in \mathcal{P},\; \|p\|^2\leq P$. 

$\mathcal{X}\subset \mathbb{R}^{n}$ is closed and bounded, thus there exists a $X^{\mathsf{sup}}<\infty$ such that $\forall x \in \mathcal{X},\; \|x\|\leq X^{\mathsf{sup}}$.

$\mathcal{Y}\subset \mathbb{R}$ is closed and bounded, thus there  exists a $K<\infty$ such that $\forall y \in \mathcal{Y},\; |y|\leq K$ 
\end{assumption}

\begin{assumption}
\label{assm14: lipschitz loss and gradient} \textbf{Lipschitz loss and gradient of loss.}
 $R(\Phi)$ is Lipschitz with a constant $C$, $R^{'}(\Phi)$ is Lipchitz with a constant $C^{'}$. 
\end{assumption}

% Following Assumption \ref{assm15: parametrized_model}, the conditions in Assumption \ref{assm3: bounded loss and gradient} and Assumption \ref{assm14: lipschitz loss and gradient} are satisfied.

\textbf{From  Assumption \ref{assm15: parametrized_model} derive the conditions in Assumption \ref{assm3: bounded loss and gradient} and Assumption \ref{assm14: lipschitz loss and gradient}}
 $\Phi:\mathcal{P} \times \mathcal{X}\rightarrow \mathbb{R}$ is a  continuous function defined over closed and bounded domain $\mathcal{P} \times \mathcal{X}$ (domain is compact) and as a result $\Phi$ is bounded say by $M$.

Consider square loss $\ell(\Phi(p,X), Y) = (Y-\Phi(p,X))^2 \leq  (M+K)^2$. Hence, there exists an  $L$ such that
 $|\ell(\Phi(\cdot), \cdot)| \leq L \leq (M+K)^2$. 

$\frac{\partial \ell(w.\Phi(\cdot), \cdot)}{\partial w}\Big|_{w=1.0}$ is bounded: $\Big|\frac{\partial \ell(w.\Phi(\cdot), \cdot)}{\partial w}\Big|_{w=1.0}\Big| = |(Y-\Phi(p,X))\Phi(p,X)| \leq (K+M)M$. Hence, there exists an $L'$ such that 
$|\frac{\partial \ell(w.\Phi(\cdot), \cdot)}{\partial w}\Big|_{w=1.0}| \leq L' \leq (K+M)M$
\begin{lemma}
If Assumption \ref{assm15: parametrized_model}  holds, then $R(\Phi(p,\cdot))$ and $R^{'}(\Phi(p,\cdot))$ are Lipschitz continuous in $p$.
\end{lemma}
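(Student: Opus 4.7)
The plan is to reduce both Lipschitz claims to pointwise Lipschitz bounds on the integrands, then to use that taking (finite, convex) combinations of expectations of Lipschitz functions preserves Lipschitz continuity. A useful preliminary observation is that $\Phi$ is bounded: since $\mathcal{P}\times\mathcal{X}$ is closed and bounded in $\mathbb{R}^{k+n}$ (hence compact) and $\Phi$ is continuous (being Lipschitz), we get some $M<\infty$ with $|\Phi(p,x)|\le M$ for all $(p,x)$. This is already used implicitly above to extract the constants $L,L'$ from Assumption \ref{assm15: parametrized_model}.

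For $R$, I would proceed pointwise in $(x,y)$ using the square-loss identity $(y-a)^2-(y-b)^2=(a-b)(a+b-2y)$. Taking $a=\Phi(p_1,x)$ and $b=\Phi(p_2,x)$, the first factor is bounded by $Q\|p_1-p_2\|$ (Lipschitz continuity of $\Phi$), while the second factor is uniformly bounded by $2(M+K)$. Hence the integrand of $|R(\Phi(p_1,\cdot))-R(\Phi(p_2,\cdot))|$ is bounded by $2Q(M+K)\|p_1-p_2\|$, and taking expectations over $\bar{\mathbb{P}}$ preserves this bound, giving Lipschitz constant $C\le 2Q(M+K)$.

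The main obstacle is $R'$, which is the $\pi^e$-weighted sum of squares of expectations, so pointwise linearity is broken by the outer squaring. My strategy is to first control the inner quantity $\eta(p,x,y):=\partial_w\ell(w\cdot\Phi(p,x),y)|_{w=1.0}=2\Phi(p,x)(\Phi(p,x)-y)$ in $p$, and only then apply the identity $a^2-b^2=(a-b)(a+b)$ to the expectations. Writing $\Phi_i=\Phi(p_i,x)$ and expanding,
\begin{equation*}
|\eta(p_1,x,y)-\eta(p_2,x,y)|=2|(\Phi_1-\Phi_2)(\Phi_1+\Phi_2-y)|\le 2Q(2M+K)\|p_1-p_2\|.
\end{equation*}
Defining $g^e(p):=\mathbb{E}^e[\eta(p,X^e,Y^e)]$, Jensen's inequality gives $|g^e(p_1)-g^e(p_2)|\le 2Q(2M+K)\|p_1-p_2\|$. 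Since each $|g^e(p)|\le L'$ by Assumption \ref{assm15: parametrized_model} (as derived in the excerpt), the factorization yields
\begin{equation*}
|g^e(p_1)^2-g^e(p_2)^2|\le |g^e(p_1)-g^e(p_2)|\cdot|g^e(p_1)+g^e(p_2)|\le 4QL'(2M+K)\|p_1-p_2\|,
\end{equation*}
and summing over $e$ with weights $\pi^e$ (which sum to $1$) gives Lipschitz continuity of $R'$ with constant $C'\le 4QL'(2M+K)$. The harder conceptual step is the reduction from the squared expectation to the Lipschitz bound on $\eta$; once the boundedness of $\Phi$ (hence of $g^e$) is in hand, the calculation is routine and both assertions follow simultaneously.
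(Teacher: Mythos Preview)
Your proposal is correct and follows essentially the same approach as the paper: both use compactness to bound $\Phi$ by $M$, the factorization $(y-a)^2-(y-b)^2=(a-b)(a+b-2y)$ for $R$, and the identity $a^2-b^2=(a-b)(a+b)$ applied to the squared expectations for $R'$, together with Lipschitz continuity of the inner quantity. Your organization is slightly cleaner (you isolate the Lipschitz bound on $g^e$ before squaring, whereas the paper squares first and then telescopes), and one terminological slip: the step ``Jensen's inequality gives $|g^e(p_1)-g^e(p_2)|\le\cdots$'' is really just $|\mathbb{E}[\cdot]|\le\mathbb{E}[|\cdot|]$, not Jensen.
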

$R$ is Lipschitz:
\begin{equation}
    \begin{split}
        |R(\Phi(p,\cdot)-R(\Phi(q,\cdot))|  & = \Big|\sum_{e}\pi^{e}\Big(\mathbb{E}^{e}[(\Phi(p,X^e)-Y^e)^2] - \mathbb{E}^{e}[(\Phi(q,X^e)-Y^e)^2]\Big)\Big| \\ 
        &\leq \sum_{e}\pi^{e}\Big|\mathbb{E}^{e}\Big[\big(\Phi(p,X^e)-Y^e\big)^2\Big] - \mathbb{E}^{e}\Big[\big(\Phi(q,X^e)-Y^e\big)^2\Big]\Big| \\
        & =\sum_{e}\pi^{e}\Big|\mathbb{E}^{e}\Big[\Phi(p,X^e)-\Phi(q,X^e)\Big]\mathbb{E}^{e}\Big[\Phi(p,X^{e}) + \Phi(q,X^{e}) -2Y\Big]\Big| \\ 
        & \leq \sum_{e}\pi^{e}\mathbb{E}^{e}\Big[\Big|\Phi(p,X^e)-\Phi(q,X^e)\Big|\Big]\mathbb{E}^{e}\Big[\Big|\Phi(p,X^{e}) + \Phi(q,X^{e}) -2Y\Big|\Big]\\
         &  \leq \sum_{e}\pi^{e}\mathbb{E}^{e}\Big[\Big|\Phi(p,X^e)-\Phi(q,X^e)\Big|\Big]2(M+K)  \\
         & \leq \|p-q\|2(M+K)Q
    \end{split}
\end{equation}
Therefore, $R$ is Lipschitz with a constant $C\leq 2(M+K)Q$

$R^{'}$ is Lipschitz:

\begin{equation}
    \begin{split}
        |R^{'}(\Phi(p, \cdot)) - R^{'}(\Phi(q, \cdot))|  &= \Big|\sum_{e}\pi^{e}\Big(\mathbb{E}^{e}\Big[\Phi(p,X^{e})\big( \Phi(p,X^{e}) -Y^{e}\big)\Big]^2 -\mathbb{E}\Big[\Phi(q,X^{e})\big( \Phi(q,X^{e}) -Y^{e}\big)\Big]^2\Big)\Big| \\
        & \leq \sum_{e}\pi^{e}\Big|\mathbb{E}^{e}\big[\Phi(p,X^{e})( \Phi(p,X^{e}) -Y^{e})]^2 -\mathbb{E}[\Phi(q,X^{e})( \Phi(q,X^{e})-Y^{e})\big]^2\Big|
    \end{split} 
    \label{proof: lemma_lipsch1_genn}
\end{equation}

We bound each term in the summation in the \eqref{proof: lemma_lipsch1} above 

\begin{equation}
    \begin{split}
                & \Big| \Big(\mathbb{E}^{e}\Big[\Phi(p,X^{e})\big( \Phi(p,X^{e}) -Y^{e}\big)\Big] -\mathbb{E}^{e}[\Phi(q,X^{e})\big( \Phi(q,X^{e}) -Y^{e}\big)\Big]\Big)\Big(\mathbb{E}^{e}\Big[\Phi(p,X^{e})\big( \Phi(p,X^{e}) -Y^{e}\big)\Big] + \\ &
               \;\;\;\;\; \mathbb{E}^{e}\Big[\Phi(q,X^{e})\big( \Phi(q,X^{e}) -Y^{e}\big)\Big]\Big) \Big| \\ 
        &\leq \Big| \Big(\mathbb{E}^{e}\Big[\Phi(p,X^{e})\big( \Phi(p,X^{e}) -Y^{e}\big)\Big] -\mathbb{E}^{e}\Big[\Phi(q,X^{e})\big( \Phi(q,X^{e})-Y^{e}\big)\Big]\Big)\Big| 2M(M+K) \\ 
        & \leq \Big| \Big(\mathbb{E}^{e}\Big[\Phi(p,X^{e})\big(\Phi(p,X^{e}) -Y^{e}\big)\Big] -\mathbb{E}^{e}\Big[\Phi(q,X^{e})\big(\Phi(p,X^{e})-Y^{e}\big)\Big] + \mathbb{E}^{e}\Big[\Phi(q,X^{e})\big( \Phi(p,X^{e}) -Y^{e}\big)\Big] - \\  &\;\;\;\;\mathbb{E}^{e}\Big[\Phi(q,X^{e})\big( \Phi(q,X^{e}) -Y^{e}\big)\Big]\Big)\Big| 2M(M+K) \\ 
        & = \Big| \Big(\mathbb{E}^{e}\Big[(\Phi(p,X^{e})-\Phi(q,X^{e})\big( \Phi(p,X^{e}) -Y^{e}\big)\Big]\Big) + \mathbb{E}^{e}\Big[\Phi(q,X^{e})\big( \Phi(p,X^{e}) -\Phi(q,X^{e})\big)\Big] \Big| 2M(M+K) \\
        & \leq \Big| \Big(\mathbb{E}^{e}\Big[\big(\Phi(p,X^{e})-\Phi(q,X^{e})\big)\big( \Phi(p,X^{e}) -Y^{e}\big)\Big]\Big) \Big|2M(M+K) +\\ &\;\;\;\; \Big|\mathbb{E}^{e}\Big[\Phi(q,X^{e})\big( \Phi(p,X^{e})-\Phi(q,X^{e})\big)\Big] \Big| 2M(M+K) 
    \end{split}
    \label{proof: lemma_lipsch2_genn}
\end{equation}

We bound each term in the last line in \eqref{proof: lemma_lipsch2_genn}  
\begin{equation}
    \begin{split}
   & \Big| \big(\mathbb{E}^{e}\big[(\Phi(p,X^{e})-\Phi(q,X^{e}))( \Phi(p,X^{e}) -Y^{e})\big]\big) \Big| \\ 
   & \leq \mathbb{E}^{e}\big[\big|(\Phi(p,X^{e})-\Phi(q,X^{e}))\big| \big| ( \Phi(p,X^{e}) -Y^{e})\big|\big] \\
   & \leq  (M+K)Q\|p - q\|
    \end{split}
      \label{proof: lemma_lipsch3_gen}
\end{equation}

\begin{equation}
    \begin{split}
   & \big|\mathbb{E}\big[\Phi(q,X^{e})( \Phi(p,X^{e}) -\Phi(q,X^{e})\big] \big| \\ 
   & \leq \mathbb{E}\big[\big|\Phi(q,X^{e})||( \Phi(p,X^{e}) -\Phi(q,X^{e})\big|] \\
   & \leq  MQ\|p - q\|
    \end{split}
    \label{proof: lemma_lipsch4_gen}
\end{equation}

Substituting \eqref{proof: lemma_lipsch3_gen} and \eqref{proof: lemma_lipsch4_gen} in \eqref{proof: lemma_lipsch2_genn} to get
\begin{equation}
|R^{'}(\Phi(p, \cdot)) - R^{'}(\Phi(q,\cdot))| \leq 2M(M+K) (2M+K)Q \|p - q\|
\end{equation}
Therefore, $R^{'}$ is Lipschitz with a constant $C^{'} \leq 2M(M+K) (2M+K)Q$.

\textbf{C. EIRM: Sample complexity with no distributional assumptions}

In this section, we discuss the extension of Proposition \ref{prop2: scomp_irm} to the infinite hypothesis class case.
Consider the problem in \eqref{eqn: EIRM_cons_swap} and replace the $\epsilon$ with $\epsilon + \kappa$.  Define distance between the two sets $\mathcal{S}^{\mathsf{IV}}(\epsilon)$ and its approximation $ \mathcal{S}^{\mathsf{IV}}(\epsilon+\kappa)$  as follows

\begin{equation}
    \mathsf{dis}(\kappa)  = \max_{g \in \mathcal{S}^{\mathsf{IV}}(\epsilon+\kappa)} \min_{h \in \mathcal{S}^{\mathsf{IV}}(\epsilon)} \mathsf{d}(g,h)
\end{equation}

where $\mathsf{d}(g,h)$ is some metric that measures the distance between functions $g$ and $h$. Observe that if $a\leq b$, $\mathsf{dis}(a) \leq \mathsf{dis}(b)$. 

\begin{assumption} \label{assm: dis}
$ \lim_{k\rightarrow 0} \mathsf{dis}(\kappa) =0 $
\end{assumption}
Define $$D^{*}= \max\Big\{\frac{32L^2}{\nu^2} \Big[k\log\Big(\frac{16C\sqrt{Pk}}{\nu}\Big) + \log\Big(\frac{2}{\delta}\Big)\Big], \frac{8L'^4}{\kappa^2} \Big[k\log\Big(\frac{8C'\sqrt{Pk}}{\kappa}\Big) + \log\Big(\frac{2}{\delta}\Big)\Big] \Big\} $$
\begin{proposition} For every $\nu>0$ and $\delta \in (0,1)$, if Assumption \ref{assm15: parametrized_model}, \ref{assm: dis} hold, then $\exists\; \kappa>0$ such that if the  number of samples $|D|$ is greater than  $D^{*}$,   then with a probability at least $1-\delta$,  every solution $\hat{\Phi}$ to EIRM (replace $\epsilon$ with $\epsilon + \kappa$ in\eqref{eqn: EIRM_cons_swap})  in $\mathcal{S}^{\mathsf{IV}}(\epsilon+2\kappa)$ and $|R(\hat{\Phi}) - R(\Phi^{*})| \leq \nu$, where $\Phi^{*}$ is a solution of IRM in \eqref{eqn: IRM_cons_swap}.
\end{proposition}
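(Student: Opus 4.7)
The plan is to generalize the argument behind Propositions \ref{prop2: scomp_irm_append} and Lemma \ref{lemma3: rep_imp_eq} to an infinite, parametrized hypothesis class by substituting the $\kappa$-separation trick (which required finiteness of $\mathcal{H}_{\Phi}$) with the continuity assumption on $\mathsf{dis}(\kappa)$. First, I will use Lemma \ref{lemma:covering_number} to build an $\eta_1$-cover of $\mathcal{P}\subset \mathbb{R}^{k}$ for the risk estimate and an $\eta_2$-cover for the IRM-penalty estimate, with $\eta_1=\nu/(4C)$ and $\eta_2=\kappa/(4C')$. Applying Hoeffding's inequality (Lemma \ref{lemma1: hoeffding}) pointwise on each cover, and the Lipschitz continuity of $R$ and $R'$ (Assumption \ref{assm14: lipschitz loss and gradient}, which is implied by Assumption \ref{assm15: parametrized_model}) to extend from the cover to all of $\mathcal{P}$, I obtain that if $|D|\geq D^{*}$, then with probability at least $1-\delta$ the joint event
\begin{equation*}
A\cap B \;=\; \Big\{\sup_{\Phi\in \mathcal{H}_{\Phi}} |\hat{R}(\Phi)-R(\Phi)|\leq \tfrac{\nu}{2}\Big\}\;\cap\;\Big\{\sup_{\Phi\in \mathcal{H}_{\Phi}} |\hat{R}^{'}(\Phi)-R^{'}(\Phi)|\leq \tfrac{\kappa}{2}\Big\}
\end{equation*}
holds. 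This is essentially the same covering number/Hoeffding argument used in the earlier infinite-hypothesis result for EIRM, just done simultaneously for $R$ and $R'$.

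Conditioning on $A\cap B$, the $\kappa/2$-representativeness of $R'$ gives two chained set inclusions: any $\Phi \in \mathcal{S}^{\mathsf{IV}}(\epsilon)$ satisfies $\hat{R}^{'}(\Phi) \leq \epsilon + \kappa/2 \leq \epsilon+\kappa$, so $\mathcal{S}^{\mathsf{IV}}(\epsilon) \subseteq \hat{\mathcal{S}}^{\mathsf{IV}}(\epsilon+\kappa)$; conversely, any $\Phi \in \hat{\mathcal{S}}^{\mathsf{IV}}(\epsilon+\kappa)$ satisfies $R^{'}(\Phi) \leq \epsilon + 3\kappa/2 \leq \epsilon+2\kappa$, so $\hat{\mathcal{S}}^{\mathsf{IV}}(\epsilon+\kappa) \subseteq \mathcal{S}^{\mathsf{IV}}(\epsilon+2\kappa)$. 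In particular, any solution $\hat{\Phi}$ to the perturbed EIRM lies in $\mathcal{S}^{\mathsf{IV}}(\epsilon+2\kappa)$, which proves the first conclusion.

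For the risk gap, the upper bound $R(\hat{\Phi}) \leq R(\Phi^{*}) + \nu$ follows from the standard chain
\begin{equation*}
R(\hat{\Phi}) - \tfrac{\nu}{2} \;\leq\; \hat{R}(\hat{\Phi}) \;\leq\; \hat{R}(\Phi^{*}) \;\leq\; R(\Phi^{*}) + \tfrac{\nu}{2},
\end{equation*}
using that $\Phi^{*} \in \mathcal{S}^{\mathsf{IV}}(\epsilon) \subseteq \hat{\mathcal{S}}^{\mathsf{IV}}(\epsilon+\kappa)$ is feasible for EIRM. The lower bound $R(\hat{\Phi}) \geq R(\Phi^{*}) - \nu$ is where Assumption \ref{assm: dis} does its work. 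Since $\hat{\Phi}\in \mathcal{S}^{\mathsf{IV}}(\epsilon+2\kappa)$, by definition of $\mathsf{dis}$ there exists $\tilde{\Phi}\in \mathcal{S}^{\mathsf{IV}}(\epsilon)$ with $\mathsf{d}(\hat{\Phi},\tilde{\Phi})\leq \mathsf{dis}(2\kappa)$. Lipschitz continuity of $R$ gives $R(\hat{\Phi}) \geq R(\tilde{\Phi}) - C\cdot \mathsf{dis}(2\kappa) \geq R(\Phi^{*}) - C\cdot \mathsf{dis}(2\kappa)$, where the last inequality uses optimality of $\Phi^{*}$ over $\mathcal{S}^{\mathsf{IV}}(\epsilon)$. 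By Assumption \ref{assm: dis}, $\mathsf{dis}(2\kappa)\to 0$ as $\kappa\to 0$, so we can pick $\kappa>0$ small enough that $C\cdot\mathsf{dis}(2\kappa) \leq \nu$, completing the argument.

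The main obstacle is the interplay in choosing $\kappa$: it must be small enough so that $\mathsf{dis}(2\kappa)$ is negligible (which depends on the qualitative assumption \ref{assm: dis} and gives no quantitative rate without more structure), but the sample size $D^{*}$ grows as $\kappa$ shrinks, so the statement is genuinely existential rather than furnishing an explicit sample complexity in terms of $\nu$ alone. All other steps---covering numbers, Hoeffding, the set-inclusion chain, and the optimality comparison---are direct analogues of tools already developed in the proofs of Propositions \ref{prop2: scomp_irm_append} and \ref{prop4: emr_irm_real_append}, adapted by replacing the finite-class union bound with a covering-number union bound and the $\kappa$-separation with the $\mathsf{dis}(\cdot)$ continuity.
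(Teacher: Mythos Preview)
Your proposal is correct and follows essentially the same route as the paper: a covering-number-plus-Hoeffding argument to get uniform concentration of both $\hat R$ and $\hat R'$, the sandwich $\mathcal{S}^{\mathsf{IV}}(\epsilon)\subseteq \hat{\mathcal{S}}^{\mathsf{IV}}(\epsilon+\kappa)\subseteq \mathcal{S}^{\mathsf{IV}}(\epsilon+2\kappa)$, the standard chain for the upper risk bound, and the projection-plus-Lipschitz step via $\mathsf{dis}(2\kappa)$ for the lower bound. The only slip is bookkeeping: with your choice $\eta_1=\nu/(4C)$ the Lipschitz extension already consumes the full $\nu/2$ budget (and similarly $\eta_2=\kappa/(4C')$ leaves no room for Hoeffding at tolerance $\kappa/2$); the paper takes $\eta=\nu/(8C)$ and uniform $R'$-tolerance $\kappa$ (not $\kappa/2$) to make the constants match $D^{*}$, but this does not affect the argument's structure.
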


\begin{proof}
We divide the proof in two parts. 

Define an event $A$: $\{\tilde{D}: \forall p \in \mathcal{P}, |R^{'}(\Phi(p,\cdot))- \hat{R}^{'}(\Phi(p,\cdot))| \leq \kappa\}$. 

In the first half, we will show that if event $A$ occurs, then $\mathcal{S}^{\mathsf{IV}}(\epsilon) \subseteq \hat{\mathcal{S}}^{\mathsf{IV}}(\epsilon+\kappa) \subseteq \mathcal{S}^{\mathsf{IV}}(\epsilon+2\kappa) $ and then bound the probability of $A$ not occuring.

\begin{equation}
    \begin{split}
         & R^{'}(\Phi(p,\cdot)) \leq \epsilon \implies  R^{'}(\Phi(p,\cdot))- \hat{R}^{'}(\Phi(p,\cdot)) + \hat{R}^{'}(\Phi(p,\cdot)) \leq \epsilon \implies \\& \hat{R}^{'}(\Phi(p,\cdot)) \leq \epsilon + |R^{'}(\Phi(p,\cdot)) - \hat{R}^{'}(\Phi(p,\cdot))| \implies  
        \hat{R}^{'}(\Phi(p,\cdot)) \leq \epsilon +\kappa \\ 
    \end{split}
\end{equation}

Therefore, $\mathcal{S}^{\mathsf{IV}}(\epsilon) \subseteq \hat{\mathcal{S}}^{\mathsf{IV}}(\epsilon+\kappa)$

\begin{equation}
    \begin{split}
        & \hat{R}^{'}(\Phi(p,\cdot)) \leq \epsilon+\kappa \implies  \hat{R}^{'}(\Phi(p,\cdot))- R^{'}(\Phi(p,\cdot)) + R^{'}(\Phi(p,\cdot)) \leq \epsilon +\kappa \implies \\ &R^{'}(\Phi(p,\cdot)) \leq \epsilon + \kappa+ |R^{'}(\Phi(p,\cdot)) - \hat{R}^{'}(\Phi(p,\cdot))| \implies 
        R^{'}(\Phi(p,\cdot)) \leq \epsilon +2\kappa \\ 
    \end{split}
\end{equation}
Therefore, $\hat{\mathcal{S}}^{\mathsf{IV}}(\epsilon+\kappa) \subseteq \mathcal{S}^{\mathsf{IV}}(\epsilon+2\kappa)$

We bound the probability of  event $A$ not occurring.
Using the covering number (from Lemma \ref{lemma:covering_number}) we  construct a minimum cover of size $b= N_{\eta}(\mathcal{P})$ with points $\mathcal{C}_1 =  \{p_j\}_{j=1}^{b}$.

Compute the probability of failure at one point $p_j$ in the cover 
\begin{equation}
    \mathbb{P}\Big[\Big\{\tilde{D}: |R^{'}(\Phi(p_j, \cdot)) - \hat{R}^{'}(\Phi(p_j,\cdot)) |>\frac{\kappa}{2}\Big\}\Big] < 2e^{-\frac{\kappa^2 |D|}{8L'^{4}}}
\end{equation}

We use union bound to bound the probability of the failure over the entire cover $\mathcal{C}_{1}$ as 
\begin{equation}
    \mathbb{P}\Big[\Big\{\tilde{D}: \max_{p_j\in \mathcal{C}_1}|R^{'}(\Phi(p_j,\cdot)) - \hat{R}^{'}(\Phi(p_j,\cdot)) |>\frac{\kappa}{2} \Big\}\Big] < N_{\eta}(\mathcal{P})2e^{-\frac{\kappa^2 |D|}{8L'^4}}
     \label{proof: inf_hyp_0_eirm_disag}
\end{equation}

Now consider any $p \in \mathcal{P}$ and suppose $p_j$ is nearest point to  it in the cover.

\begin{equation} 
\begin{split}
 &|R^{'}(\Phi(p,\cdot)) - \hat{R}^{'}(\Phi(p,\cdot)) | = \\& |R^{'}(\Phi(p,\cdot)) - R^{'}(\Phi(p_j, \cdot)) +R^{'}(\Phi(p_j,\cdot)) -  \hat{R}^{'}(\Phi(p_j,\cdot) )+ \hat{R}^{'}(\Phi(p_j,\cdot) )- \hat{R}^{'}(\Phi(p,\cdot)) | \\ &\leq   |R^{'}(\Phi(p_j,\cdot)) - \hat{R}^{'}(\Phi(p_j,\cdot)) | + 2\eta C^{'} 
 \end{split}
\end{equation}
In the above simplficiation, we exploit the Lipschitz continuity of $R^{'}$.

Therefore
\begin{equation} 
\begin{split}
 \forall p \in \mathcal{P}\; &|R^{'}(\Phi(p,\cdot)) - \hat{R}^{'}(\Phi(p,\cdot)) |  \leq  \max_{p_j\in \mathcal{C}_1} |R^{'}(\Phi(p_j,\cdot)) - \hat{R}^{'}(\Phi(p_j,\cdot)) | + 2\eta C^{'} \\ 
  &\max_{p\in \mathcal{P}}|R^{'}(\Phi(p,\cdot)) - \hat{R}^{'}(\Phi(p,\cdot)) |  \leq  \max_{p_j\in \mathcal{C}_1} |R^{'}(\Phi(p_j,\cdot)) - \hat{R}^{'}(\Phi(p_j,\cdot)) | + 2\eta C^{'}
 \end{split}
 \label{proof: inf_hyp_1_eirm_disag}
\end{equation}
 Set $\eta = \frac{\kappa}{4C^{'}}$ in \eqref{proof: inf_hyp_1_eirm_disag}  and from \eqref{proof: inf_hyp_0_eirm_disag}  with probability at least $1-N_{\eta}(\mathcal{P})2e^{-\frac{\kappa^2 |D|}{8L'^4}}$ 
\begin{equation}
     \max_{p\in \mathcal{P}} |R^{'}(\Phi(p,\cdot)) - \hat{R}^{'}(\Phi(p,\cdot))  | \leq \kappa\; \text{(since $\max_{p_j\in \mathcal{C}_1}|R^{'}(\Phi(p_j,\cdot)) - \hat{R}^{'}(\Phi(p_j,\cdot))  | \leq \kappa/2$ )}
\end{equation}

We bound $N_{\eta}(\mathcal{P})2e^{-\frac{\kappa^2 |D|}{8L'^4}}\leq \delta$ and solve for bound on $|D|$ to get 

\begin{equation} 
\begin{split}
  &  |D| \geq \frac{8L'^4}{\kappa^2} \log\Big(\frac{2N_{\eta}(\mathcal{P})}{\delta}\Big)\\ 
 &     |D| \geq \frac{8L'^4}{\kappa^2} \Big[k\log\Big(\frac{8C'\sqrt{Pk}}{\kappa}\Big) + \log\Big(\frac{2}{\delta}\Big)\Big]
    \end{split}
    \label{proof: inf_hyp_2_eirm_disag}
\end{equation}

Therefore, if condition in \eqref{proof: inf_hyp_2_eirm_disag} holds, then event $A$ occurs. If event $A$ occurs, then $\mathcal{S}^{\mathsf{IV}}(\epsilon) \subseteq \hat{\mathcal{S}}^{\mathsf{IV}}(\epsilon+\kappa) \subseteq \mathcal{S}^{\mathsf{IV}}(\epsilon+2\kappa) $.

$\Phi(p^{*},\cdot)$ is a solution to IRM \eqref{eqn: IRM_cons_swap} and it  satisfies $\forall \Phi(p,\cdot) \in \mathcal{S}^{\mathsf{IV}}(\epsilon)$ $ R(\Phi(p^{*})) \leq R(\Phi(p,\cdot)$. 

Define an event $B$: $\{D: \forall p\in \mathcal{P}, |R(\Phi(p,\cdot))- \hat{R}(\Phi(p,\cdot))| \leq \frac{\nu}{2}\}$. 

If event $B$ occurs, then for a solution $\Phi(\hat{p},\cdot)$ of  \eqref{eqn: EIRM_cons_swap} (where $\epsilon$ is replaced with $\epsilon + \kappa$) satisfies

\begin{equation}
\begin{split}
& R(\Phi(\hat{p},\cdot)) - \frac{\nu}{2} \leq \hat{R}(\Phi(\hat{p},\cdot) \leq \hat{R}(\Phi(p^{*}, \cdot)) \leq R(\Phi(p^{*}, \cdot)) + \frac{\nu}{2} \\ 
& R(\Phi(\hat{p},\cdot)) \leq R(\Phi(p^{*}, \cdot)) + \nu
\end{split}
\label{proof: eirm_disag_eq1}
\end{equation} 
% We denote $\Phi(p^{*},\cdot)$ as $\Phi^{*}$ and $\Phi(\hat{p},\cdot)$ as $\hat{\Phi}$.

Using the covering number (from Lemma \ref{lemma:covering_number}) we  construct a minimum cover of size $b= N_{\eta}(\mathcal{P})$ with points $\mathcal{C}_1 =  \{p_j\}_{j=1}^{b}$.

Let us bound the probability of failure at one point $p_j$ in the cover 
\begin{equation}
    \mathbb{P}\Big[|R(\Phi(p_j,\cdot)) - \hat{R}(\Phi(p_j,\cdot)) |>\frac{\nu}{4}\Big] < 2e^{-\frac{\nu^2 |D|}{32L^{2}}}
\end{equation}

We use union bound to bound the probability defined as 
\begin{equation}
    \mathbb{P}\Big[\Big\{D: \max_{p_j\in \mathcal{C}_1}|R(\Phi(p_j,\cdot)) - \hat{R}(\Phi(p_j,\cdot)) |>\frac{\nu}{4}\Big\}\Big] < N_{\eta}(\mathcal{P})2e^{-\frac{\nu^2 |D|}{32L^2}}
     \label{proof: inf_hyp_3_eirm_disag}
\end{equation}

Now consider any $p \in \mathcal{P}$ and suppose $p_{j}$ is nearest point to it in the cover.

\begin{equation} 
\begin{split}
 &|R(\Phi(p,\cdot)) - \hat{R}(\Phi(p,\cdot)) | = \\ & |R(\Phi(p,\cdot)) - R(\Phi(p_j,\cdot)) +R(\Phi(p_j,\cdot)) -  \hat{R}(\Phi(p_j,\cdot) )+ \hat{R}(\Phi(p_j,\cdot) )- \hat{R}(\Phi(p,\cdot)) | \\ &\leq   |R(\Phi_j) - \hat{R}(\Phi_j) | + 2\eta C 
 \end{split}
\end{equation}

Therefore, for each $p \in \mathcal{P}$
\begin{equation} 
\begin{split}
 &|R(\Phi(p,\cdot)) - \hat{R}(\Phi(p_j,\cdot)) |  \leq  \max_{p_j\in \mathcal{C}_1} |R(\Phi(p_j,\cdot)) - \hat{R}(\Phi(p_j,\cdot)) | + 2\eta C \\ 
  &\max_{p\in \mathcal{P}}|R(\Phi(p,\cdot)) - \hat{R}(\Phi(p,\cdot)) |  \leq  \max_{p_j\in \mathcal{C}_1} |R(\Phi(p_j,\cdot)) - \hat{R}(\Phi(p_j,\cdot)) | + 2\eta C
 \end{split}
 \label{proof: inf_hyp_4_eirm_disag}
\end{equation}
 Set $\eta = \frac{\nu}{8C}$ in \eqref{proof: inf_hyp_3_eirm_disag}  and from \eqref{proof: inf_hyp_4_eirm_disag}  with probability at least $1-N_{\eta}(\mathcal{P})2e^{-\frac{\nu^2 |D|}{32L'^4}}$ 
\begin{equation}
     \max_{p\in \mathcal{P}} |R(\Phi(p,\cdot)) - \hat{R}^{'}(\Phi(p,\cdot))  | \leq \nu/2\; \text{(since $\max_{p_j\in \mathcal{C}_1}|R^{'}(\Phi(p_j,\cdot)) - \hat{R}^{'}(\Phi(p_j,\cdot))  | \leq \nu/4$ )}
\end{equation}

We  bound $N_{\eta}(\mathcal{P})2e^{-\frac{\nu^2 |D|}{32L^2}}\leq \delta$ and solve for bound on $|D|$ to get 

\begin{equation} 
\begin{split}
  &  |D| \geq \frac{32L^2}{\nu^2} \log\Big(\frac{2N_{\eta}(\Phi)}{\delta}\Big)\\ 
 &     |D| \geq \frac{32L^2}{\nu^2} \Big[k\log\Big(\frac{16C\sqrt{Pk}}{\nu}\Big) + \log(\frac{2}{\delta})\Big]
    \end{split}
    \label{proof: inf_hyp_5_eirm_disag}
\end{equation}

Therefore, if condition in \eqref{proof: inf_hyp_5_eirm_disag} holds, then with probability at least $1-\frac{\delta}{2}$ event $A$ occurs.

Also, if 
\begin{equation}
  |D| \geq \max\Big\{\frac{32L^2}{\nu^2} \Big[k\log\Big(\frac{16C\sqrt{Pk}}{\nu}\Big) + \log\Big(\frac{2}{\delta}\Big)\Big], \frac{8L'^4}{\kappa^2} \Big[k\log\Big(\frac{8C'\sqrt{Pk}}{\kappa}\Big) + \log\Big(\frac{2}{\delta}\Big)\Big] \Big\}
\end{equation}
then the event $A\cap B$ occurs with at least $1-\delta$ probability.

Project $\Phi(\hat{p},\cdot) \in \hat{\mathcal{S}}^{\mathsf{IV}}(\epsilon + \kappa)$ on $\mathcal{S}^{\mathsf{IV}}(\epsilon)$, i.e., find the closest function in terms of the metric $\mathsf{dis}$, to obtain $\Phi(\tilde{\hat{p}},\cdot)$. If event $A$ occurs, then $\hat{p} \in \mathcal{S}^{\mathsf{IV}}(\epsilon + 2\kappa)$. The distance $\|\hat{p}-\tilde{\hat{p}}\| \leq \mathsf{dis}(2\kappa)$. 
\begin{equation}
    \begin{split}
        |R(\Phi(\hat{p},\cdot)) - R(\Phi(\tilde{\hat{p}},\cdot))| \leq C\|\hat{p}-\tilde{\hat{p}}\| \leq C\mathsf{dis}(2\kappa) 
    \end{split}
\end{equation}
We choose $\kappa_0$ such that $\kappa<\kappa_0$ (use Assumption \ref{assm: dis}) such that 
$C\mathsf{dis}(2\kappa)\leq \nu$. 

\begin{equation}
    \begin{split}
       R(\Phi(p^{*}, \cdot) \leq R(\Phi(\tilde{\hat{p}},\cdot)) \leq  R(\Phi(\hat{p},\cdot))  + \nu
    \end{split}
    \label{proof: eirm_disag_eq2}
\end{equation}
Therefore, by combining \eqref{proof: eirm_disag_eq1} amd \eqref{proof: eirm_disag_eq2}, we can conclude that if event $A\cap B$ occurs, then $\Phi(\hat{p},\cdot) \in \mathcal{S}^{\mathsf{IV}}(\epsilon + 2\kappa)$ and  $|   R(\Phi(p^{*},\cdot)) - R(\Phi(\hat{p},\cdot))|\leq \nu$. From the conditions on $|D|$, we know that $A\cap B$ occurs with probability $1-\delta$. We substitute $\Phi(p^{*},\cdot)$ as $\Phi^{*}$ and $\Phi(\hat{p},\cdot)$ as $\hat{\Phi}$ and this completes the proof.  
\end{proof}

\textbf{D. OOD Performance: Covariate shift case}

In this section, we discuss the extension of Proposition \ref{prop4: emr_irm_real} to the infinite hypothesis class case. 

Define $D_1^{*} = \Big\{ \frac{32L^2}{\nu^2} \Big[k\log\Big(\frac{32C\sqrt{Pk}}{\nu}\Big) + \log\Big(\frac{4}{\delta}\Big)\Big], \frac{16L'^{4}}{\epsilon^2}\log\Big(\frac{4}{\delta}\Big) \Big\}$.

Define $D_2^{*} = \frac{32L^2}{\nu^2} \Big[k\log\Big(\frac{16C\sqrt{Pk}}{\nu}\Big) + \log\Big(\frac{2}{\delta}\Big)\Big]$.

\begin{proposition}
If  Assumptions  \ref{assm4: invariant_cexp}, \ref{assm15: parametrized_model}  hold, $m\in \mathcal{H}_{\Phi}$ and if the  number of samples $|D|$ is greater than $D_1^{*}$, then  with a probability at least $1-\delta$, every solution $\hat{\Phi}$ to EIRM satisfies $R(m) \leq R(\hat{\Phi}) \leq R(m)+\nu$.

If  Assumptions  \ref{assm4: invariant_cexp}, \ref{assm15: parametrized_model} hold, $m\in \mathcal{H}_{\Phi}$ and  if the  number of samples $|D|$ is greater than $D_{2}^{*}$,  then  with a probability at least $1-\delta$  every solution $\Phi^{\dagger}$ of ERM satisfies  $R(m) \leq R(\Phi^{\dagger}) \leq R(m)+\nu$.

\end{proposition}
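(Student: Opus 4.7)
The plan is to mirror the two parts of Proposition \ref{prop4: emr_irm_real} but replace the finite-hypothesis union bound with a covering-number argument, exploiting the Lipschitz continuity of $R$ (constant $C$) and $R'$ (constant $C'$) established earlier. The key structural observation, which carries over verbatim from Lemma \ref{lemma4: unique_ls}, is that under Assumption \ref{assm4: invariant_cexp} the function $m$ is a minimizer of every $R^e$, hence of $R$, and also satisfies $R'(m)=0$. Thus $m$ lies in $\mathcal{S}^{\mathsf{IV}}(\epsilon)$ automatically, and it suffices to show $m \in \hat{\mathcal{S}}^{\mathsf{IV}}(\epsilon)$ with high probability and that empirical risk is uniformly close to expected risk.

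For the EIRM part, define event $A=\{D:\sup_{p\in\mathcal{P}} |\hat R(\Phi(p,\cdot))-R(\Phi(p,\cdot))|\le \nu/2\}$ and event $B=\{\tilde D:|\hat R'(m)-R'(m)|\le \epsilon/2\}$. On $A\cap B$, since $R'(m)=0$, we get $\hat R'(m)\le \epsilon/2 \le \epsilon$, so $m$ is a feasible point for EIRM; combining with the two-sided $\nu/2$-representativeness of $\hat R$ and the optimality of $\hat\Phi$ yields the standard chain $R(\hat\Phi)-\nu/2 \le \hat R(\hat\Phi)\le \hat R(m)\le R(m)+\nu/2$. For event $B$, since only the single point $m$ matters, a direct application of Hoeffding's inequality to the estimator $\hat R'(m)$ (with $|\ell'|\le L'^2$) gives $\mathbb{P}(B^c)\le 2\exp(-|D|\epsilon^2/(16L'^4))$, which is bounded by $\delta/2$ exactly when $|D|\ge (16L'^4/\epsilon^2)\log(4/\delta)$. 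For event $A$, I would build a minimum $\eta$-cover $\mathcal{C}$ of $\mathcal{P}$ with $\eta=\nu/(8C)$, apply Hoeffding at each of the $N_\eta(\mathcal{P})$ center points with tolerance $\nu/4$, then use Lipschitz continuity of $R$ to extend uniformly to all of $\mathcal{P}$ at loss $2\eta C = \nu/4$; union bounding and invoking Lemma \ref{lemma:covering_number} turns $\mathbb{P}(A^c)\le\delta/2$ into the stated sample size $(32L^2/\nu^2)[k\log(32C\sqrt{Pk}/\nu)+\log(4/\delta)]$. Together, $\mathbb{P}(A\cap B)\ge 1-\delta$ whenever $|D|\ge D_1^*$.

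The ERM part is strictly easier: since ERM has no constraint, we only need the uniform event $A$ above, but at tolerance $\nu/2$ and with failure probability $\delta$ instead of $\delta/2$. The same covering argument with $\eta=\nu/(8C)$ and Hoeffding at each cover point gives $|D|\ge (32L^2/\nu^2)[k\log(16C\sqrt{Pk}/\nu)+\log(2/\delta)] = D_2^*$, and the standard ERM chain $R(\Phi^\dagger)-\nu/2\le \hat R(\Phi^\dagger)\le \hat R(m)\le R(m)+\nu/2$ (using $m\in\mathcal{H}_\Phi$ and its $R$-optimality from Lemma \ref{lemma4: unique_ls}) closes the bound.

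The only mildly non-routine step is bookkeeping the two sources of slack in the covering argument (Hoeffding tolerance and Lipschitz extension), and ensuring that the $\eta$ chosen for the cover matches the target tolerance so that the final bound uses the right constants inside the logarithm of $N_\eta(\mathcal{P})$. All other ingredients --- the reduction to a single-point concentration for $B$ via $R'(m)=0$, the uniform concentration for $A$, and the optimality chain --- are direct adaptations of the arguments used for Propositions \ref{prop2: scomp_irm} and \ref{prop4: emr_irm_real} in the finite case.
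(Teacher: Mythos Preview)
Your proposal is correct and follows essentially the same route as the paper's own proof: the same two events $A$ (uniform $\nu/2$-representativeness of $\hat R$ via an $\eta$-cover with $\eta=\nu/(8C)$ and Lipschitz extension) and $B$ (single-point Hoeffding for $\hat R'(m)$ using $R'(m)=0$), combined with the optimality chain inherited from Lemma \ref{lemma4: unique_ls}. The only superficial difference is that the paper presents the ERM bound first and then reuses it (with $\delta$ replaced by $\delta/2$) for the EIRM part, whereas you present EIRM first; the content is identical.
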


\begin{proof}
We begin with the first part. 
Following the proof of Proposition \ref{prop5: irm_cfd_chd}, our goal is to compute the probability of event $A$: $\{\forall p \in \mathcal{P}, |\hat{R}(\Phi(p,\cdot))-R(\Phi(p,\cdot))|\leq \frac{\nu}{2}\}$. 
Using the covering number (from Lemma \ref{lemma:covering_number}) we construct a minimum cover of size $b= N_{\eta}(\mathcal{P})$ with points $\mathcal{C} =  \{p_j\}_{j=1}^{b}$.

Compute the probability of failure at one point $p_j$ in the cover 
\begin{equation}
    \mathbb{P}\Big[\Big\{D: |R(\Phi(p_j,\cdot)) - \hat{R}(\Phi(p_j,\cdot)) |>\frac{\nu}{4}\Big\}\Big] < 2e^{-\frac{\nu^2 |D|}{32L^{2}}}
\end{equation}

We use union bound to bound the probability of failure over the cover $\mathcal{C}$
\begin{equation}
    \mathbb{P}\Big[\Big\{D: \max_{j}|R(\Phi(p_j,\cdot)) - \hat{R}(\Phi(p_j,\cdot)) |>\frac{\nu}{4}\Big\}\Big] < N_{\eta}(\mathcal{P})2e^{-\frac{\nu^2 |D|}{32L^2}}
     \label{proof: inf_hyp_0_erm}
\end{equation}

Now consider any $p \in \mathcal{P}$ and suppose $p_{j}$ is nearest point to it in the cover.

\begin{equation} 
\begin{split}
 &|R(\Phi(p,\cdot)) - \hat{R}(\Phi(p,\cdot)) | = \\&  |R(\Phi(p,\cdot)) - R(\Phi(p_j,\cdot)) +R(\Phi(p_j,\cdot)) -  \hat{R}(\Phi(p_j,\cdot) )+ \hat{R}(\Phi(p_j,\cdot) )- \hat{R}(\Phi(p,\cdot)) | \\ &\leq   |R(\Phi(p_j,\cdot) - \hat{R}(\Phi(p_j,\cdot)) | + 2\eta C 
 \end{split}
\end{equation}
In the above simplification, we used the Lipschitz continuity of $R$.
Therefore
\begin{equation} 
\begin{split}
\forall p \in \mathcal{P},\; &|R(\Phi(p,\cdot)) - \hat{R}(\Phi(p,\cdot)) |  \leq  \max_{p_j\in \mathcal{C}} |R(\Phi(p_j,\cdot)) - \hat{R}(\Phi(p_j,\cdot)) | + 2\eta C \\ 
  &\max_{p \in \mathcal{P}}|R(\Phi(p,\cdot)) - \hat{R}(\Phi(p,\cdot)) |  \leq  \max_{p_j\in \mathcal{C}} |R(\Phi(p_j,\cdot)) - \hat{R}(\Phi(p_j,\cdot)) | + 2\eta C
 \end{split}
 \label{proof: inf_hyp_1_erm}
\end{equation}
 Set $\eta = \frac{\nu}{8C}$ in \eqref{proof: inf_hyp_1_erm}  and from \eqref{proof: inf_hyp_0_erm}  with probability at least $1-N_{\eta}(\mathcal{P})2e^{-\frac{\nu^2 n}{32L'^4}}$ 
\begin{equation}
     \max_{p\in \mathcal{P}} |R^{'}(\Phi(p,\cdot)) - \hat{R}^{'}(\Phi(p,\cdot))  | \leq \nu/2\; \text{(since $\max_{p_j\in \mathcal{C}}|R^{'}(\Phi(p_j,\cdot)) - \hat{R}^{'}(\Phi(p_j,\cdot))  | \leq \nu/4$ )}
\end{equation}

We  bound $N_{\eta}(\mathcal{P})2e^{-\frac{\nu^2 |D|}{32L^2}}\leq \delta$ and solve for bound on $|D|$ to get 

\begin{equation} 
\begin{split}
  &  |D| \geq \frac{8L^2}{\frac{\nu^2}{4}} \log\Big(\frac{N_{\eta}(\mathcal{P})}{\delta}\Big)\; \text{(Use Lemma \ref{lemma:covering_number})}\\ 
 &     |D| \geq \frac{32L^2}{\nu^2} \Big[k\log\Big(\frac{16C\sqrt{Pk}}{\nu}\Big) + \log\Big(\frac{2}{\delta}\Big)\Big]
    \end{split}
    \label{proof: inf_hyp_2_erm}
\end{equation}

Therefore, if condition in \eqref{proof: inf_hyp_2_erm} holds, then event $A$ occurs. Observe that the optimal solution $p^{*}$ for expected risk minimization $p^{*}\in \arg\min_{p\in \mathcal{P}} R(\Phi(p,\cdot))$ satisfies $\Phi(p^{*}, \cdot) = m(\cdot)$ (From Lemma \ref{lemma4: unique_ls} and $m\in \mathcal{H}_{\Phi}$).
If event $A$ occurs, then from same argument used in \eqref{eqn : iota-rep-opt-cond}, a solution $p^{+}\in \mathcal{P}$ of ERM satisfies 
$R(\Phi(p^{*},\cdot)) \leq R(\Phi(p^{+},\cdot)) \leq R(\Phi(p^{*},\cdot)) + \nu$ is true.

We now move to the second part. From the first part of the proof, we  conclude that when 
\begin{equation}
|D| \geq \frac{32L^2}{\nu^2} \Big[k\log\Big(\frac{16C\sqrt{Pk}}{\nu}\Big) + \log(\frac{4}{\delta})\Big]
\label{prop4_proof: eqn1_eirm}
\end{equation}
 with a probability at least $1-\frac{\delta}{2}$ event
$A$ occurs. 

Define an event $B$: $\tilde{D}$ is such that $|\hat{R}^{'}(m) - R^{'}(m)| \leq \frac{\epsilon}{2}$.   Since $R^{'}(m)=0$,  
$|\hat{R}^{'}(m) - R^{'}(m)| \leq \frac{\epsilon}{2} \implies |\hat{R}^{'}(m) | \leq \frac{\epsilon}{2} \implies \hat{R}^{'}(m)  \leq \frac{\epsilon}{2} $.
 Therefore, $m \in \hat{\mathcal{S}}^{\mathsf{IV}}(\epsilon)$.

 We write $\mathbb{P}(B) = \mathbb{P}(\{\tilde{D}: |\hat{R}^{'}(m) - R^{'}(m)| \leq \frac{\epsilon}{2} \}) = 1-\mathbb{P}(\{\tilde{D}: |\hat{R}^{'}(m) - R^{'}(m)| > \frac{\epsilon}{2} \})  $.  The  gradient of loss function is bounded $|\frac{\partial \ell(\Phi(\cdot), \cdot)}{\partial w}|_{w=1.0} | \leq L^{'}$. From Hoeffding's inequality in Lemma \ref{lemma1: hoeffding} it follows that  
$$\mathbb{P}(|\tilde{D}: \hat{R}^{'}(h) - R^{'}(h)| > \frac{\epsilon}{2}  ) \leq 2 \exp\Big(-\frac{|D|\epsilon^2}{16L'^4}\Big)$$

$$2 \exp\Big(-\frac{|D|\epsilon^2}{16L'^4}\Big)\leq \frac{\delta}{2} $$

\begin{equation}
|D| \geq  \frac{16L'^4}{\epsilon^2}\log\Big(\frac{4}{\delta}\Big) 
\label{prop4_proof: eqn2_eirm}
\end{equation}

Combining the two conditions \eqref{prop4_proof: eqn1_eirm} and \eqref{prop4_proof: eqn2_eirm}, 
$$|D| \geq  \max \Big\{ \frac{32L^2}{\nu^2} \Big[k\log\Big(\frac{32C\sqrt{Pk}}{\nu}\Big) + \log\Big(\frac{4}{\delta}\Big)\Big], \frac{16L'^{4}}{\epsilon^2}\log\Big(\frac{4}{\delta}\Big) \Big\}$$
This ensures $P(A\cap B) \geq 1-\delta$.  If event $A\cap B$ occurs, then we follow the same justification as in the proof of Proposition \ref{prop4: emr_irm_real} to claim that a solution $\hat{p}\in \mathcal{P}$ to EIRM \eqref{eqn: EIRM_cons_swap} satisfies  $R(\Phi(p^{*},\cdot)) \leq R(\Phi(\hat{p},\cdot)) \leq R(\Phi(p^{*},\cdot)) + \nu$ 

This completes the proof. 

\end{proof}

\subsubsection{Extensions to binary classification (cross-entropy)}
In the main body of the manuscript, we focused on regression (square-loss). In this section, we discuss the results that can be extended to binary classification (cross-entropy) loss. We will not go in the order in which the results were introduced in the manuscript but in an order that makes for easier exposition for the classification case. 

We begin by showing how to extend Proposition \ref{prop4: emr_irm_real} to binary classification (cross-entropy).
Recall that the entropy of a distribution $\mathbb{P}_X$ is $H(\mathbb{P}) = -\mathbb{E}_{\mathbb{P}}\big[\log(d\mathbb{P}) \big]$. Recall that the cross entropy of $\mathbb{Q}$ relative to $\mathbb{P}$ is  $H(\mathbb{P},\mathbb{Q}) = -\mathbb{E}_{\mathbb{P}}\big[\log(d\mathbb{Q}) \big] = H(P) + \mathsf{KL}(d\mathbb{P}\|d\mathbb{Q})$. 
The cross entropy loss $\ell$ for binary classification when using a predictor $f: \mathcal{X}\rightarrow [0,1]$ ($f(X^{e})$ is the probability of label $1$ conditional on $X^{e}$) is given as $\ell\big(f(X^{e}),Y^{e}\big) = Y^{e}\log\big(f(X^{e})\big) + (1-Y^{e})\log\big(1-f(X^{e})\big)$. For the discussion below $\mathbb{Q}(Y^{e}|X^{e})$ is defined in terms of $f$ as follows $\mathbb{Q}(Y^{e}=1|X^{e})= f(X^{e})$ $\big(\mathbb{Q}(Y^{e}=0|X^{e})= 1-f(X^{e})\big)$. 
\begin{equation}
\begin{split}
    R^{e}(f) & = \mathbb{E}^{e}\big[ \ell(Y^{e}, f\big(X^{e}\big)) \big]  \\ 
    & = \mathbb{E}^{e}\Big[Y^{e}\log\big(f(X^{e}\big)\big) + (1-Y^{e})\log\big(1-f\big(X^{e}\big)\big)\Big] \\ 
    & = \mathbb{E}^{e}\Big[\mathbb{E}^{e}\big[Y^{e}|X^{e}\big]\log\big(f\big(X^{e}\big)\big) + \big(1-\mathbb{E}\big[Y^{e}|X^{e}\big]\big)\log\big(1-f\big(X^{e}\big)\big)\Big] \\
    & = \mathbb{E}^{e}\Big[ \mathbb{P}(Y^{e}=1|X^{e})\log\big(f\big(X^{e}\big)\big) + \mathbb{P}(Y^{e}=0|X^{e})\log\big(1-f\big(X^{e}\big)\big)\Big] \\
  &  = \mathbb{E}^{e}\Big[H\big(\mathbb{P}(Y^{e}|X^{e}), \mathbb{Q}(Y^{e}|X^{e})\big)\Big] \\
  & = \mathbb{E}^{e}\big[H\big(\mathbb{P}(Y^{e}|X^{e})\big) + \mathsf{KL}\big(\mathbb{P}(Y^{e}|X^{e}) \| \mathbb{Q}(Y^{e}|X^{e}\big)\big] \\ 
 & = \mathbb{E}^{e}\Big[H\big(\mathbb{P}(Y^{e}|X^{e})\big)\Big] + \mathbb{E}^{e}\Big[\mathsf{KL}\big(\mathbb{P}(Y^{e}|X^{e}) \| \mathbb{Q}(Y^{e}|X^{e}\big)\Big]
\end{split}
\label{cross_entropy_decomposition}
\end{equation}

From the above it is clear that $\mathbb{Q}(Y^{e}|X^{e}\big) = \mathbb{P}(Y^{e}|X^{e}\big)$ minimizes the risk in an individual environment.

\begin{assumption} \textbf{Invariance w.r.t all the features.}
\label{assm4: invariant_cexp_binary}
For all $e,o\in \mathcal{E}_{all}$ and for all $x\in \mathcal{X}$,
$\mathbb{E}[Y^{e}|X^{e}=x]  =\mathbb{E}[Y^{o}|X^{o}=x]$.  $X^{e}\sim \mathbb{P}^{e}_{X^{e}}$ and  $\forall e\in \mathcal{E}_{all}$ support of $\mathbb{P}^{e}_{X^{e}}$ is equal to  $\mathcal{X}$.
\end{assumption}
Observe that in the binary-classification setting the above assumption amounts to equating the conditional probabilities $\mathbb{P}(Y^{e}|X^{e})$ and $\mathbb{P}(Y^{o}|X^{o})$.

Recall that map $m$ (from \eqref{m_inv_model}) simplifies to  $\forall x \in \mathcal{X}$
\begin{equation} 
m(x) = \mathbb{E}^{e}[Y^{e}|X^{e}=x] = \mathbb{P}(Y^{e}=1|X^{e}=x)
\label{m_inv_model_append}
\end{equation}

If Assumption \ref{assm4: invariant_cexp_binary} holds, then from cross-entropy decomposition in \eqref{cross_entropy_decomposition} it is clear that $m$ solves the OOD problem (as it is optimal w.r.t each environment). It is also the unique minimizer. We can justify it based on the same argument presented in \eqref{eqn:ls_opt}. Suppose there was another optimizer which was different from $m$ over a set with a non-zero measure. Over such a set the the KL divergence term inside \eqref{cross_entropy_decomposition} will be greater than zero, thus making the second term in \eqref{cross_entropy_decomposition} positive thus contradicting the optimality. This shows $m$ is the unique optimizer. The rest of the arguments presented in the proof of Proposition \ref{prop4: emr_irm_real} carry over to this case.  Therefore,  Proposition \ref{prop4: emr_irm_real} extends to the cross-entropy loss. 

 Note that Proposition \ref{prop2: scomp_irm}'s proof was agnostic to loss type and only used boundedness, which holds for both cross-entropy as long as the probability output are in the strict interior of $[0,1]$ defined by $[\mathsf{p}_{\mathsf{min}},\mathsf{p}_{\mathsf{max}}]\subset [0,1]$. We could not generalize Proposition \ref{prop5: irm_cfd_chd} to cross-entropy loss and that is left as future work.

Next, we move to showing how Proposition \ref{prop1: ood} can be generalized to binary classification. 

\begin{assumption} \textbf{Existence of an invariant representation.}
\label{assm1:ood_cond_envmt_binary}
$\exists\; \Phi^{*}:\mathcal{X}\rightarrow \mathcal{Z}$ such that $\forall e,o \in \mathcal{E}_{all}$ and $\forall x \in \mathcal{X}$, $ \mathbb{E}[Y^{e}|\Phi^{*}(x)]=\mathbb{E}[Y^{o}|\Phi^{*}(x)]$.
\end{assumption}

Recall $m$ defined in \eqref{m_inv_model}, $\forall z \in \Phi^{*}(\mathcal{X})$
\begin{equation} 
m(z) = \mathbb{E}^{e}[Y^{e}|Z^{e}=z] = \mathbb{P}(Y^{e}=1|X^{e}=z)
% \label{m_inv_model_append}
\end{equation}

Define a composite predictor $w\circ \Phi^{*}$. Substituting $f=w\circ \Phi^{*}$ in \eqref{cross_entropy_decomposition} we get the following.  For the discussion below, a distribution $\mathbb{R}(Y^{e}|X^{e})$ is defined in terms of $w\circ \Phi^{*}$ as follows $\mathbb{R}(Y^{e}=1|X^{e})= w\circ \Phi^{*}(X^{e})$ ($\mathbb{R}(Y^{e}=0|X^{e})= 1-w\circ \Phi^{*}(X^{e})$). 
\begin{equation}
\begin{split}
    R^{e}(w\circ \Phi^{*})  = \mathbb{E}^{e}\Big[H\big(\mathbb{P}(Y^{e}|Z^{e})\big)\Big] + \mathbb{E}^{e}\Big[\mathsf{KL}\big(\mathbb{P}(Y^{e}|Z^{e}) \| \mathbb{R}(Y^{e}|\big(Z^{e}\big)\big)\Big]
\end{split}
\label{cross_entropy_decomposition1}
\end{equation}
If all the data is transformed by $\Phi^{*}$, then from the above decomposition \eqref{cross_entropy_decomposition1} it is clear that $\mathbb{R}(Y^{e}|Z^{e}) = \mathbb{P}(Y^{e}|Z^{e})$ is the optimal predictor for each environment. Hence, $w^{*}(Z^{e}) = \mathbb{P}(Y^{e}=1|Z^{e})$ is the best choice for $w$.  

\begin{assumption}
\label{assm2:ood_cond_envmt_binary}
\textbf{Existence of an environment where the invariant representation is sufficient.}
$\exists$ an environment $e \in \mathcal{E}_{all}$ such that $Y^{e} \perp X^{e} | Z^{e}$, where $Z^{e} = \Phi^{*}(X^{e})$.
\end{assumption}

We derive a relationship as follows for the environment $q$ satisfying Assumption \ref{assm2:ood_cond_envmt_binary}. 
% \begin{equation}
% \mathbb{P}(Y^{e},X^{e}|Z^{e}) = \mathbb{P}(Y^{e}|Z^{e})\mathbb{P}(X^{e}|Z^{e})  \\ 
% \end{equation}
% \begin{equation}
% \mathbb{P}(Y^{e},X^{e}|Z^{e}) = \mathbb{P}(Y^{e}|Z^{e},X^{e})\mathbb{P}(X^{e}|Z^{e})  \\ 
% \end{equation}
\begin{equation}
    \mathbb{P}(Y^{q}|Z^{q},X^{q}) = \mathbb{P}(Y^{q}|Z^{q}) \; \text{(follows from conditional independence in Assumption \ref{assm2:ood_cond_envmt_binary})}
    \label{cross_entropy_sec_eq2}
\end{equation}
Also, note that since $Z^{q}=\Phi^{*}(X^{q})$ we have 
\begin{equation}
     \mathbb{P}(Y^{q}|Z^{q},X^{q}) = \mathbb{P}(Y^{q}|X^{q})
         \label{cross_entropy_sec_eq3}
\end{equation}

From \eqref{cross_entropy_sec_eq2} and \eqref{cross_entropy_sec_eq3} we have 
\begin{equation}
    \mathbb{P}(Y^{q}|X^{q}) = \mathbb{P}(Y^{q}|Z^{q})
    \label{cross_entropy_sec_eq4}
\end{equation}
We use \eqref{cross_entropy_sec_eq4} in the cross entropy decomposition from \eqref{cross_entropy_decomposition}
\begin{equation}
\begin{split}
R^{q}(f) = \mathbb{E}^{q}\Big[H\big(\mathbb{P}(Y^{q}|Z^{q})\big)\Big] + \mathbb{E}^{q}\Big[\mathsf{KL}\big(\mathbb{P}(Y^{q}|Z^{q}) \| \mathbb{Q}(Y^{q}|X^{q}\big)\Big]
\end{split}
\end{equation}
Recall  $\mathbb{Q}(Y^{q}=1|X^{q})= f(X^{q})$ ($\mathbb{Q}(Y^{q}=0|X^{q})= 1-f(X^{q})$). Also, recall $w^{*}(Z^{q}) = \mathbb{P}(Y^{q}=1|Z^{q})$. From the above it is clear that $f = w^{*}\circ \Phi^{*}$ is the optimal predictor for environment $q$. 

\begin{equation}
    R^{q}(w^{*}\circ \Phi^{*}) = \mathbb{E}^{q}\Big[H\big(\mathbb{P}(Y^{q}|X^{q})\big)\Big]
\end{equation}
The expected conditional entropy for environment $e$ is defined as  $\bar{H}^{e} = \mathbb{E}^{e}\Big[H\big(\mathbb{P}(Y^{e}|Z^{e})\big)\Big]$ is the risk achieved by $w^{*}\circ \Phi^{*}$. Also, $\bar{H}^{e}$ measures the amount of noise in the environment. This is much like the variance that remains in the least squares minimization. In the next assumption, we state that the noise in all the environments is bounded above. We also assume that one of the environments which achieves the maximum noise level is environment $q$, which satisfies Assumption \ref{assm2:ood_cond_envmt_binary}. 

\begin{assumption}
\label{assm3: ood-binary}
$\forall e \in \mathcal{E}_{all}, \bar{H}^{e} \leq \bar{H}^{\mathsf{sup}},\bar{H}^{q}=\bar{H}^{\mathsf{sup}}$
\end{assumption}
Therefore, 
\begin{equation}
    R^{q}(w^{*}\circ \Phi^{*}) = \bar{H}^{\mathsf{sup}}
\end{equation}
From \eqref{cross_entropy_decomposition1} for all the environments 
\begin{equation}
    R^{e}(w^{*}\circ \Phi^{*}) = \bar{H}^{e}
\end{equation}
Observe that $\max_{e\in \mathcal{E}_{all}} R^{e}(w^{*}\circ \Phi^{*}) =\bar{H}^{\mathsf{sup}}$. 
From the above assumption it is clear that for all predictors $f:\mathcal{X}\rightarrow [0,1]$. 
\begin{equation}
\begin{split}
&    \forall f, \max_{e\in \mathcal{E}_{all}} R^{e}(f) \geq R^{q}(f) \geq \bar{H}^{\mathsf{sup}} \\ 
 &   \min_{f} \max_{e\in \mathcal{E}_{all}} R^{e}(f)  \geq \bar{H}^{\mathsf{sup}}
\end{split}
\end{equation}
 Since $\max_{e\in \mathcal{E}_{all}} R^{e}(w^{*}\circ \Phi^{*}) =\bar{H}^{\mathsf{sup}}$, we  conclude that $w^{*}\circ \Phi^{*}$ is the predictor that solves the OOD problem in \eqref{eqn1: ood}. This completes the extension of Proposition \ref{prop1: ood} to cross-entropy.

\subsubsection{On the Biasedness of ERM}

Consider the model in Assumption \ref{assm6: linear_model}.  For each environment $e\in \mathcal{E}_{tr}$, define a vector $\rho^{e} = \mathbb{E}^{e}\big[\varepsilon^{e}X^{e}\big]$. Define a matrix $\bar{\rho}$ with $\rho^{e}$ as column vectors $\bar{\rho} = [\rho^{1},\dots, \rho^{|\mathcal{E}_{tr}|}]$. Define a vector $\bar{\pi} = [\pi^{1}, \dots, \pi^{|\mathcal{E}_{tr}|}]$, where recall from Assumption \ref{assm6: linear_model} $\pi^{o}$ is probability a point comes from environment $o$.
% Observe that when each $\rho^{e}$ is zero (such as in the case when $\varepsilon^{e}\perp X^{e}$) the solution of ERM is not biased as the gradient is zero.
%  In the case when the correlation vectors are non-zero, we argue that the gradient defined above in \eqref{erm_bias_eqn2} will be non-zero thus establishing that  $\tilde{\mathcal{S}}^{\mathsf{T}}\gamma$ is not the optimal solution to ERM. 
% Define a matrix $\bar{\rho}$ with $\rho^{e}$ as column vector.
\begin{proposition}
\label{erm_bias}
If Assumption \ref{assm6: linear_model} holds, $\mathcal{H}_{\Phi}$ is a linear hypothesis class with parameter $\Phi$ and if the rank of $\bar{\rho}$ is at least one, then ERM is asymptotically biased, i.e., even with infinite data ERM will not achieve the desired solution $\tilde{\mathcal{S}}^{\mathsf{T}}\gamma$, except over a set of measure zero of probability distributions $\bar{\pi}$. 
\end{proposition}
\begin{proof}
% We elaborate on why ERM framework will generally lead to biased solutions (asymptotic solution is not equal to the desired OOD model) for models with confounders or anti-causal variables (such as the one in Assumption \ref{assm6: linear_model}).

Consider the case when ERM has access to infinite data, i.e., we are solving the expected risk minimization problem  stated as $\min_{\Phi\in \mathcal{H}_{\Phi}}R(\Phi)$. We will consider the linear model in Assumption \ref{assm6: linear_model} and assume $\mathcal{H}_{\Phi}$ linear hypothesis class parametrized by $\Phi\in \mathbb{R}^{n}$. We simplify the $\nabla_{\Phi}R(\Phi)$ for the square loss below

\begin{equation}
\begin{split}
    \nabla_{\Phi}R(\Phi) = \sum_{e\in \mathcal{E}_{tr}}\pi^{e}\mathbb{E}^{e}\Big[\big(Y^{e}-\Phi^{\mathsf{T}}X^{e}\big)X^{e}\Big]
\end{split}
\label{erm_bias_eqn1}
\end{equation}
We compute the gradient for $\Phi= \tilde{\mathcal{S}}^{\mathsf{T}}\gamma$ as
\begin{equation}
    \begin{split}
        & \nabla_{\Phi|\Phi=\tilde{\mathcal{S}}^{\mathsf{T}}\gamma}R(\Phi) =  \sum_{e\in \mathcal{E}_{tr}}\pi^{e}\mathbb{E}^{e}\Big[\big(Y^{e}-\gamma^{\mathsf{T}}\tilde{\mathcal{S}}X^{e}\big)X^{e}\Big]\; \text{(Use Assumption \ref{assm6: linear_model})} \\ 
        & = \sum_{e\in \mathcal{E}_{tr}}\pi^{e}\mathbb{E}^{e}\big[\varepsilon^{e}X^{e}\big] 
    \end{split}
    \label{erm_bias_eqn2}
\end{equation}
Recall $\rho^{e} = \mathbb{E}^{e}\big[\varepsilon^{e}X^{e}\big]$ and $\bar{\rho} = [\rho^{1},\dots, \rho^{|\mathcal{E}_{tr}|}]$. Recall $\bar{\pi} = [\pi^{1}, \dots, \pi^{|\mathcal{E}_{tr}|}]$. Setting the gradient defined in \eqref{erm_bias_eqn2} to zero and using the above matrix notation we get
\begin{equation}
    \bar{\rho}\bar{\pi} = 0, \mathbf{1}^{\mathsf{T}}\bar{\pi}=1, \bar{\pi}\geq 0
    \label{erm_bias_eqn3}
\end{equation}
 If $\bar{\pi}$ satisfies \eqref{erm_bias_eqn3}, then ERM is unbiased, else it is not.  Consider the set of vectors in the probability simplex $\{\bar{\pi}\;|\;\mathbf{1}^{\mathsf{T}}\bar{\pi}=1, \bar{\pi}\geq 0\}$ and define a uniform probability distribution over it. Since rank of $\bar{\rho}>0$ at least one of the columns of $\bar{\rho}$ is non-zero. As a result a uniform random draw from this set of probablity distributions would have zero probability of satisfying $\bar{\rho}\bar{\pi} = 0$. Therefore, $\tilde{\mathcal{S}}^{\mathsf{T}}\gamma$ is not the optimal solution to ERM and thus the solution of ERM would be biased away from $\tilde{\mathcal{S}}^{\mathsf{T}}\gamma$.
 
 \end{proof}

 In Proposition \ref{prop5: irm_cfd_chd}, we had assumed Assumptions \ref{assm6: linear_model}, \ref{assm8:regularity conditions on envmts and hphi}, \ref{assumption_singular_value} hold. If we also assume that rank of $\bar{\rho}$ is at least one, the Proposition \ref{prop5: irm_cfd_chd} continues to hold.  If for at least one $e \in \mathcal{E}_{tr}$, $\mathbb{E}^{e}\big[\varepsilon^{e}X^{e}\big]$ is non-zero, then the rank of $\bar{\rho}$ is at least one. From proof of Theorem 10 in \cite{arjovsky2019invariant}, linear general position continues to hold  except over a set of covariance matrices with measure zero even when one of the $\mathbb{E}^{e}\big[\varepsilon^{e}X^{e}\big]$ is non-zero. 
 
 Also, in the above Proposition \ref{erm_bias}, we only required that rank of $\bar{\rho}$ is at least one. However, if we make the additional assumptions  \ref{assm6: linear_model},  \ref{assm8:regularity conditions on envmts and hphi}, \ref{assumption_singular_value}, the result of the above Proposition continues to hold. Therefore, if Assumption  \ref{assm6: linear_model}, \ref{assm8:regularity conditions on envmts and hphi}, \ref{assumption_singular_value} hold and rank of $\bar{\rho}$ is at least one, then ERM is asymptotically biased and IRM can be within $\sqrt{\epsilon}$ neighborhood of the ideal solution with the sample complexity shown in Proposition \ref{prop5: irm_cfd_chd}.

 \bibliographystyle{apalike}
\bibliography{erm_irm_arxiv_june4}

\end{document}